\documentclass[12pt]{article}
\usepackage{amsmath,amsthm,amsfonts,amssymb,mathrsfs,bm}
\usepackage[usenames]{color}
\usepackage{hyperref}
\usepackage{amssymb}

\parskip 	\bigskipamount

\newtheorem{theorem}{Theorem}[section]
\newtheorem{lemma}[theorem]{Lemma}
\newtheorem{corollary}[theorem]{Corollary}
\newtheorem{proposition}[theorem]{Proposition}

\newtheorem{remark}{Remark}[section]

\newtheorem{definition}{Definition}

\usepackage{amsmath}
\usepackage{amssymb}
\usepackage{amsthm}
\usepackage{enumitem}
\usepackage{nicematrix}
\usepackage{booktabs}
\usepackage{mathrsfs}
\usepackage{natbib}

\renewcommand{\d}{\mathrm{d}}

\newcommand{\R}{\mathbb{R}}
\newcommand{\C}{\mathbb{C}}

\newcommand{\Z}{\mathbb{Z}}
\newcommand{\N}{\mathbb{N}}
\newcommand{\E}{\mathbb{E}}
\renewcommand{\P}{\mathbb{P}}

\newcommand{\Ccal}{\mathcal{C}}
\newcommand{\Scal}{\mathcal{S}}
\newcommand{\Xcal}{\mathcal{X}}
\newcommand{\Ycal}{\mathcal{Y}}
\newcommand{\Lcal}{\mathcal{L}}

\newcommand{\Fcal}{\mathcal{F}}
\newcommand{\Ical}{\mathcal{I}}

\newcommand{\var}{\mathrm{Var}}

\newcommand{\proj}{\mathrm{proj}}

\newcommand{\cont}{\mathrm{c}}
\newcommand{\disc}{\mathrm{d}}
\newcommand{\op}{\mathrm{op}}

\newcommand{\bfx}{\mathbf{x}}
\newcommand{\bfy}{\mathbf{y}}

\newcommand{\bfv}{\mathbf{v}}

\newcommand{\bfK}{\mathbf{K}}
\newcommand{\bfL}{\mathbf{L}}
\newcommand{\bfI}{\mathbf{I}}
\newcommand{\bfA}{\mathbf{A}}
\newcommand{\bfV}{\mathbf{V}}
\newcommand{\bfD}{\mathbf{D}}

\newcommand{\bfk}{\mathbf{k}}

\newcommand{\diag}{\mathrm{diag}}
\newcommand{\supp}{\mathrm{supp}}
\newcommand{\rank}{\mathrm{rank}}
\newcommand{\ndpp}{r\text{-}\mathrm{DPP}}
\newcommand{\Frob}{\mathrm{F}}

\renewcommand{\span}{\mathrm{span}}
\newcommand{\diam}{\mathrm{diam}}
\newcommand{\vol}{\mathrm{Vol}}

\renewcommand{\L}{\Lambda}

\makeatletter
\renewcommand*{\@cite@ofmt}{\hbox}
\makeatother

\begin{document}
\title{State-of-art minibatches via novel DPP kernels: discretization, wavelets, and rough objectives}
\author{
	\small\begin{tabular}{c}
		{Hoang-Son Tran \thanks{Co-first author}}\\  
        Department of Mathematics\\
		 National University of Singapore\\ transonsp97@gmail.com
	\end{tabular}
	\and
	\small\begin{tabular}{c}
		{Pranav Gupta \thanks{Co-first author}}
		 \\  Department of Mathematics\\
		 National University of Singapore\\
         pranav.gupta@u.nus.edu
	\end{tabular}
    \and
	\small\begin{tabular}{c}
		{R\'emi Bardenet}
		 \\  Univ. Lille, CNRS, Centrale Lille\\
		 UMR 9189 – CRIStAL\\
        remi.bardenet@cnrs.fr
	\end{tabular}
    \and
	\small\begin{tabular}{c}
		{Subhro Ghosh}
		 \\  Department of Mathematics\\
		 National University of Singapore\\
         subhrowork@gmail.com
	\end{tabular}
}
\date{}
\maketitle

\begin{abstract}
  Determinantal point processes (DPPs) have emerged as a kernelized alternative to vanilla independent sampling for generating efficient minibatches, coresets and other parsimonious representations of large-scale datasets. 
While theoretical foundations and promising  empirical performance have been demonstrated, there are two
challenges for current proposals for DPP-based coresets or minibatches. The first is the need for families of DPPs with certain key variance reduction properties, usually constructed in a continuous setting, of which there are few known examples. The second is the need for an ad-hoc construction of a discrete DPP defined on a given dataset, that inherits such variance reduction.
In this work, we contribute to the programme of establishing DPPs as a subsampling toolbox for ML by advancing on these two fronts.
First, we propose new DPPs on the Euclidean space based on wavelets, with provably better accuracy guarantees than the best known rates.
Second, we introduce a general method to convert such continuous DPPs, which are more amenable to proving analytical statements, into discrete kernels, which are pertinent for subsampling tasks such as minibatch and coreset constructions. 
This conversion mechanism simultaneously preserves the desired variance decay and reveals a low-rank decomposition of the discrete kernel, which makes sampling the corresponding DPP computationally inexpensive.
En route, we enlarge the class of ML tasks amenable to improvements via DPP-based minibatches and coresets to include objective functions with arbitrarily low regularity, and rate guarantees that explicitly adapt to this regularity.
\end{abstract}

 \tableofcontents

\newpage

\section{Introduction}
\label{sec:intro}
\emph{Determinantal point processes} (DPPs; \citep{Mac72}) are random point configurations that are parametrized by kernel functions.
Like kernel machines, DPPs are particularly tractable, both mathematically and computationally \citep{HKPV06,tran2025negative}. 
In machine learning, since the seminal work \cite{KuTa12}, they have been used as diversity-promoting models. 
Closer to our purpose, DPPs have also been shown to provably improve on independent sampling when designing numerical quadratures \citep{bardenet-aap,BeBaCh19,CoMaAm20Sub}, minibatches in stochastic gradient descent \citep{dpp-sgd} or building coresets \citep{TrBaAm19,coreset-dpp,jaquard2026}.

To fix ideas, for a configuration of points $\mathcal{S}\subset \mathbb{R}^d$ of cardinality $n$ and a class  $\mathcal{F}$ of functions, we define the \emph{linear statistic}
\begin{equation}
    \label{e:linear_statistic}
    \Lambda_{\mathcal{S}}(f) = \sum_{Y\in \mathcal{S}} f(Y),\quad f\in\mathcal{F}.
\end{equation}
The recipe for constructing a minibatch or coreset using DPPs typically requires two ingredients.
The first is a probabilistic guarantee that, when $\mathcal{S}=\mathcal{S}_{\mathrm{c}}$ is a specific DPP whose points are allowed to vary in all of $\mathbb{R}^d$ (loosely termed a ``continuous'' DPP), a suitable set of unbiased linear statistics \eqref{e:linear_statistic} has a standard error that decays faster than the classical rate $1/\sqrt{n}$ of i.i.d. sampling.
Yet both coresets and minibatches are constrained to be subsets of an $N$-point dataset $\mathcal{X}\subset\mathbb{R}^d$, so the second ingredient is to design a second DPP $\mathcal{S}_\mathrm{d}$ that is \emph{discrete}, i.e. supported on the subsets of $\mathcal{X}$, and inherits the variance decay of the original continuous DPP $\mathcal{S}_\mathrm{c}$.

\paragraph{Motivation and our contributions.}
Available designs for DPP minibatches and coresets with qualitatively improved theoretical performance over independent sampling \cite{dpp-sgd,TrBaAm19,coreset-dpp,jaquard2026} rely on DPPs known as multivariate orthogonal polynomial ensembles \citep{bardenet-aap}.
The latter come with a standard error for linear statistics of order $n^{-(\frac{1}{2}+\frac{1}{2d})}$, holding when the test functions in $\mathcal{F}$ are essentially $C^1$. 
A key contribution of this paper is to introduce a novel class of DPP samplers, defined using \emph{wavelets}, such that the standard error of $\Lambda(f)$ decays fast for quite general test functions, namely for Hölder continuous functions and functions belonging to fractional Sobolev spaces. 
Furthermore, the decay rates \emph{adapt to} the \emph{regularity of $f$} in an explicit manner, and in the $C^1$ setting provide an improved guarantee of a standard error in $n^{-(\frac12 + \frac1d)}$, which matches the best known rates in that setting \cite{LeBa24Sub} without complex-analytic machinery.
Our construction also \emph{improves} on the best rates obtained with \emph{weak smoothness requirements} for variance reduction with DPPs \citep{CoMaAm20Sub}, and opens up applications to e.g. coresets for \emph{non-smooth loss functions}, or quadrature in rough settings such as for financial data. 
An interesting by-product of our analysis is the identification of classical \emph{stratified sampling} in the continuum as a special case of a wavelet-based DPP sampler, thereby explaining its superior empirical performance as a coreset sampling mechanism \cite{coreset-dpp}, and providing natural motivation to pursue determinantal samplers as a robust and structured subset generation mechanism. 


The transfer of the favorable guarantees from the continuum background space to the discrete dataset has historically had the problem of being mathematically and computationally involved: the construction in \cite{dpp-sgd}, for instance, requires a cumbersome spectral truncation of an $N\times N$ matrix that is near-to but not low-rank. 
Alternatives include mimicking the continuum construction in the discrete setting, such as with the discrete OPE in \citep{TrBaAm19}, which requires customized arguments for their rates \cite{jaquard2026} and is very much tied to the structured scenario of orthogonal polynomials. 
A key contribution of our paper is a comprehensive pipeline for the \emph{conversion of continuum kernels} with salutary variance decay properties to \emph{canonical discrete kernels}, defined on the dataset, that inherit similar guarantees. The proposed procedure is both an algorithmic recipe that is \emph{computationally cheaper} than previous alternatives, as well as a mathematical technique that provably ensures the \emph{transfer of variance decay rates} from the continuum to the discrete. 
Using this pipeline as a turn-key mechanism potentially enables future investigations in this area to focus squarely on the more analytically tractable question of designing continuum DPP kernels with favourable variance decay. Our kernel transfer pipeline comes with an \emph{in-built low-rank decomposition} of the discrete kernel, which has important computational benefits in the context of modern DPP sampling algorithms \cite{KuTa12,gillenwater19a,barthelme2023faster}. 
The pipeline is based on certain novel stochastic identities that could be of independent probabilistic interest.   
Finally, our pipeline is complementary to, but different from, the recent general results of \cite{jaquard2026} on variance transfer from continuous to discrete DPPs: while they implicitly assume the availability of a natural candidate discrete DPP (like a discrete OPE) that can be shown in a case-by-case analysis to approximate the underlying continuous kernel in a suitable sense, we rather assume that we can evaluate the kernel of the continuous DPP and use it to \emph{define} a discrete DPP that automatically inherits its variance-decay properties. 

The rest of the paper is organized as follows.
In Section~\ref{sec:background}, we introduce some necessary background on DPPs.
In Section~\ref{sec:discreteDPP}, we present our novel discrete DPP construction; proofs are deferred to Appendix~\ref{a:appendix}.
In Section~\ref{sec:waveletDPP}, we present our new wavelet-based continuous DPPs; proofs are deferred to Appendix~\ref{a:appendix-wavelet}.
In Section~\ref{sec:applications}, we give two examples of how to put our two contributions together and obtain new theoretical guarantees for coresets and numerical integration; proofs are deferred to Appendix~\ref{a:appendix-application}.
In Section~\ref{sec:experiments}, we numerically validate our theoretical results; additional experiments can be found in Appendix~\ref{a:additional_experiments}.
We conclude with opening remarks in Section~\ref{sec:discussion}.

\section{Background} \label{sec:background}





{\bf DPPs.} 
Let $E$ be a Polish space equipped with a reference measure $\mu$. 
Typical examples of $E$ include Euclidean spaces, manifolds, and discrete spaces such as graphs. A \emph{point process} $\Scal$ on $E$ is a random locally finite subset of $E$. We say that $\Scal$ is a DPP (w.r.t. $\mu$) if there exists a measurable function $K: E^2\rightarrow \C$ such that \begin{equation}
    \label{eq:definition_DPPs}
    \E \Big [ \sum_{\neq} f(x_{i_1},\ldots,x_{i_k}) \Big] = \int_{E^k} f(x_1,\ldots,x_k) \det [K(x_i,x_j)]_{k\times k} \, \d \mu^{\otimes k} (x_1,\ldots, x_k),
\end{equation}
where the sum in the LHS ranges over all pairwise distinct $k$-tuples of $\Scal$, for all bounded measurable $f:E^k \rightarrow \R$ and for all $k \in \N$. Such a function $K$ is called a \emph{kernel} of the DPP $\Scal$, and $\mu$ is the \emph{reference measure} (see \citep{Mac75, HKPV06}). We will denote $\mathcal{S}\sim \mathrm{DPP}(K,\mu)$.

An important class of DPPs is the \emph{projection DPPs}, which are defined by projection kernels. To elaborate, let $n \ge 1$ and $\phi_1, \dots, \phi_n$ be $n$ (representatives of) orthonormal functions in $L^2(\mu)$, and define $K(x,y):= \sum_{k=1}^n \phi_k(x)\phi_k(y)$.  Notice that, $K$, as an integral kernel, defines a projection operator on the space $L^2(\mu)$. The resultant DPP defined by $(K,\mu)$ is called a \emph{projection DPP} of rank $n$. A useful probabilistic interpretation of projection DPPs is the following: if $\mathcal{S}\sim \mathrm{DPP}(K,\mu)$ where $K$ is a projection kernel of rank $n$, then 
\(\Scal=\{X_1,\ldots,X_n\}\) can be obtained by drawing \((X_1,\ldots,X_n)\) from the probability measure
\begin{equation}
    \label{e:density_dpp}
    p(x_1,\dots,x_n)\,\d\mu(x_1)\cdots \d\mu(x_n)
    :=
    \frac{1}{n!}
    \det\!\bigl(K(x_i,x_j)\bigr)_{1\leq i,j\leq n}
    \,\d\mu(x_1)\cdots \d\mu(x_n).
\end{equation}
For more discussions on DPPs, we refer to the surveys \cite{HKPV06,KuTa12}.


{\bf Discrete DPPs.} 
When $E$ has finite cardinality $N$ and $\mu$ is taken to be counting measure on $E$, we speak of DPP($K$, $\mu$) as a \emph{discrete DPP on $E$}. 
One can check that this is equivalent to the following more convenient definition. A random subset $\Scal\subset E$ is a discrete DPP if there is an $N\times N$-matrix $\bfK$ indexed by elements of $E$ such that
\( \P(\Scal \supset T) = \det [\bfK_{T}]\) for all subset $T\subset E$.
Here $\bfK_T$ denotes the principal submatrix of $\bfK$ whose rows and columns are indexed by the elements of $T$. 

A popular class of discrete DPPs in ML is given by
\emph{\(L\)-ensembles}, which are random subsets \(\mathcal{S}\subset E\) satisfying
\(
    \P(\Scal=T) \propto \det(\bfL_T)\)
for all subsets \(T\subset E\),
where \(\bfL\) is an \(N\times N\) symmetric matrix indexed by \(E\), with positive eigenvalues.
It is known that every \(L\)-ensemble is a DPP with kernel $\bfK=\bfL (\bfI+\bfL)^{-1}$. However, not every DPP is an \(L\)-ensemble. In particular, an \(L\)-ensemble has random cardinality, whereas a projection DPP has deterministic cardinality. To obtain a subset of fixed size when sampling with $L$-ensembles, a standard approach is to condition an \(L\)-ensemble $\Scal$ on the event \(|\Scal|=m\), for some \(m\in\N\) with $m\leq \rank(\bfL)$. This gives rise to the so-called \emph{\(m\)-DPPs}; see \cite{KuTa12}.
We also refer the interested reader to \cite{TBUA23} for a unifying concept of \emph{extended} $L$-ensembles.

{\bf Orthogonal Polynomial Ensembles (OPEs).}
When $E = \mathbb{R}^d$ and $\mu$ has a density w.r.t. Lebesgue, we speak of DPP($K, \mu$) as a \emph{continuous} DPP. 
A classical example consists in letting $\phi_1, \dots, \phi_n$ be the result of applying the Gram-Schmidt orthonormalization process in $L^2(\mu)$ to the $n$ first multivariate monomials, after having chosen an order on multivariate indices. 
The resulting DPP is called \emph{multivariate orthogonal polynomial ensemble} (OPE), and we denote its kernel as $K_{\mathrm{OPE}, \mu}$.
Under conditions on $\mu$, when $\mathcal{S}$ is the multivariate OPE, for smooth enough $f:\mathbb{R}^d \rightarrow \mathbb{R}$, and the weight function $w : x \mapsto K_{\mathrm{OPE},\mu}(x,x)^{-1}$, \cite{bardenet-aap} showed that the linear statistic $\Lambda_{\Scal}(fw)$ satisfies $\mathbb{E} \left[\Lambda_{\Scal}(fw)\right] = \int f\d\mu$ and $\sqrt{\var \left[\Lambda_{\Scal}(fw)\right]} = O(1/n^{\frac{1}{2}+\frac{1}{2d}})$, improving over the classical Monte Carlo rate of $O(1/\sqrt{n})$. This was leveraged in \cite{dpp-sgd} to construct DPP-based minibatch samplers, and in \cite{coreset-dpp} to devise coreset sampling mechanisms.

\section{A general pipeline from the continuum to the discrete} 
\label{sec:discreteDPP}
In line with Section~\ref{sec:background}, let $E$ be a Polish space equipped with a reference $\mu$, and $\mathcal{S}_\mathrm{c}\sim\mathrm{DPP}(K,\mu)$ with $K$ the projection kernel onto the span of $n$ orthonormal functions $\psi_1, \dots, \psi_n \in L^2(\mu)$. 
Let $X_1, \dots, X_N\sim \nu$ i.i.d., where $\nu(\d x) = \rho(x)\mu(\d x)$ is a probability measure with $\rho(x)>0$ for $\mu$-a.e $x$. 
We assume that $\nu$ is non-atomic so that the samples $X_1, \dots, X_N$ are pairwise distinct, almost surely.
We think of $\mathcal{X} = \{X_1, \dots, X_N\}$ as a dataset. 
Conditionally on $\mathcal{X}$, in this section we build two discrete DPPs that inherit the variance of linear statistics of $\mathcal{S}_\mathrm{c}$.



First, define the matrix
\begin{equation}
    \label{e:Phi}
    \mathbf{\Psi}:= \begin{pmatrix}
        \psi_1(X_1) & \ldots & \psi_n(X_1) \\
        \vdots &\ddots &\vdots \\
       \psi_1(X_N)  & \ldots & \psi_n(X_N)
    \end{pmatrix} = [\psi_k(X_i)] \in \C^{N\times n},
\end{equation}
and set $m:= \rank(\mathbf{\Psi})$. We remark that $\mathbf{\Psi}$ and $m$ are measurable with respect to the data set $\Xcal$.

\begin{remark}
We note that $m$ is random in general, and clearly $m\le n$ almost surely. However, we have
$\mathbb{E}[\mathbf{\Psi}^*\bfD(\rho^{-1})\mathbf{\Psi}] = N\mathbf{I}_n$,
where $\bfD(\rho^{-1})=\diag(\rho(X_1)^{-1},\ldots,\rho(X_N)^{-1})$. This suggests that $\mathbf{\Psi}$ has maximal rank $m=n$ with high probability. In many examples of interest, such as graded polynomial feature functions $\psi_i$ with $\nu$ having a density w.r.t. Lebesgue, we even have $m=n$ with probability $1$. 
\end{remark}

Now, conditionally on $\mathcal{X}$, define $\mathcal{S}_\mathrm{d}$ as the $m$-DPP on $\Xcal$ corresponding to the $L$-ensemble defined by
\begin{equation}
    \label{e:L}
    \bfL = \frac{1}{N}\bfD(\rho^{-1/2})\,\mathbf{\Psi}\mathbf{\Psi}^*\,\bfD(\rho^{-1/2}),
\end{equation} 
see Section~\ref{sec:background} for definitions.
Alternately, if we do not have query access to $\rho$ but only to a (positive) estimator $\hat\rho$ of $\rho$, we shall consider the $m$-DPP $\widehat{\Scal}_{\disc}$ on $\mathcal{X}$ corresponding to the $L$-ensemble defined by
\begin{equation}
    \label{e:LHat}
    \widehat{\bfL} = \frac{1}{N}\bfD(\hat\rho^{-1/2})\,\mathbf{\Psi}\mathbf{\Psi}^*\,\bfD(\hat\rho^{-1/2}).
\end{equation} 


We prove the following in the appendix, see Lemma \ref{lm:n-dpp}.
\begin{proposition}
Both $\Scal_{\disc}$ and $\widehat{\Scal}_{\disc}$ are actually (discrete) projection DPPs on $\Xcal$, with respective kernel the projection onto the span of the eigenvectors of $\mathbf{L}$ and $\widehat{\mathbf{L}}$.
\end{proposition}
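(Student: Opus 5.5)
The plan is to reduce to a general linear-algebra fact: if $\bfL$ is an $N\times N$ Hermitian positive semidefinite matrix of rank $m$, then the $m$-DPP associated with $\bfL$ coincides with the projection DPP whose kernel is the orthogonal projection $\mathbf{P}$ onto $\mathrm{range}(\bfL)$, that is, onto the span of the eigenvectors of $\bfL$ with nonzero eigenvalues. The matrices in \eqref{e:L} and \eqref{e:LHat} are of the form $\mathbf{B}\mathbf{B}^*$ with $\mathbf{B}=N^{-1/2}\bfD(\rho^{-1/2})\mathbf{\Psi}$, respectively $N^{-1/2}\bfD(\hat\rho^{-1/2})\mathbf{\Psi}$. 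They are therefore Hermitian and positive semidefinite. Since $\bfD(\rho^{-1/2})$ and $\bfD(\hat\rho^{-1/2})$ are invertible ($\rho>0$ $\nu$-a.s.\ at the sample points, and $\hat\rho>0$), both have rank $\rank(\mathbf{\Psi})=m$. Note that $\bfL$ may only be positive semidefinite rather than positive definite, so the $L$-ensemble and the conditioning are understood via $\P(\Scal=T)\propto\det(\bfL_T)$. This still makes sense as long as some $|T|=m$ has $\det\bfL_T>0$, which holds because $\rank\bfL=m$. Everything is done conditionally on $\Xcal$, and both cases are identical.

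For the key step, write the spectral decomposition $\bfL=\sum_{k=1}^m \lambda_k v_kv_k^*$ with $\lambda_k>0$ and orthonormal $v_k$. Let $\mathbf{V}=[v_1,\dots,v_m]\in\C^{N\times m}$ and $\Lambda=\diag(\lambda_k)$, so that $\bfL=\mathbf{V}\Lambda\mathbf{V}^*$. For $|T|=m$, Cauchy--Binet (or simply multiplicativity of the determinant on $m\times m$ matrices) gives
\[
\det(\bfL_T)=\det(\mathbf{V}_{T,:}\Lambda\mathbf{V}_{T,:}^*)=\Big(\prod_{k=1}^m\lambda_k\Big)\,|\det \mathbf{V}_{T,:}|^2,
\]
where $\mathbf{V}_{T,:}$ is the $m\times m$ submatrix of rows indexed by $T$. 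Hence the $m$-DPP satisfies $\P(\Scal_\disc=T)\propto|\det\mathbf{V}_{T,:}|^2$. On the other hand, for the projection DPP with kernel $\mathbf{P}=\mathbf{V}\mathbf{V}^*$, the cardinality is a.s.\ $m=\mathrm{tr}\,\mathbf{P}$. For $|T|=m$ we then have $\P(\Scal=T)=\P(\Scal\supset T)=\det(\mathbf{P}_T)=\det(\mathbf{V}_{T,:}\mathbf{V}_{T,:}^*)=|\det\mathbf{V}_{T,:}|^2$. This is the discrete analogue of \eqref{e:density_dpp}. The two laws are proportional on subsets of size $m$ and both are supported there, so they coincide. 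The normalizing constant is consistent by Cauchy--Binet: $\sum_{|T|=m}|\det\mathbf{V}_{T,:}|^2=\det(\mathbf{V}^*\mathbf{V})=1$.

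The only point requiring care is justifying that the projection DPP has cardinality exactly $m$, so that $\P(\Scal=T)=\det\mathbf{P}_T$ for $|T|=m$. I would handle this by the standard argument: $\E|\Scal|=\mathrm{tr}\,\mathbf{P}=m$, and $\det\mathbf{P}_T=0$ whenever $|T|>m$ since $\rank\mathbf{P}=m$, so $|\Scal|\le m$ a.s. Together these force $|\Scal|=m$ a.s. Alternatively, I would cite \cite{HKPV06}. The remaining steps are routine.
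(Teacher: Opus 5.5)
Your proposal is correct. It rests on the same key computation as the paper's proof of Lemma~\ref{lm:n-dpp}(i): the factorization $\det\bigl[(\bfV\bfD\bfV^*)_{T\times T}\bigr]=\det(\bfD)\,|\det \bfV_{T,:}|^2$ for $|T|$ equal to the rank.

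The route differs slightly. The paper first works with the unconditioned $L$-ensemble $\Ycal$ and uses that it is a DPP with marginal kernel $\bfL(\bfI+\bfL)^{-1}=\sum_i\frac{\lambda_i}{1+\lambda_i}\bfv_i\bfv_i^\top$. Hence $\Ycal$ has at most $r$ points, so $\P(\Ycal=J)=\P(\Ycal\supset J)$ for $|J|=r$. It then divides by $\P(|\Ycal|=r)=\prod_i\lambda_i/\det(\bfI+\bfL)$, applies the factorization with $\bfD=\diag(\lambda_i/(1+\lambda_i))$, and the prefactors cancel.

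You instead apply the factorization directly to the unnormalized weights $\det\bfL_T$, with $\bfD=\diag(\lambda_k)$. This way you never need the marginal kernel of the $L$-ensemble or the formula for $\P(|\Ycal|=r)$. The normalization comes for free from Cauchy--Binet, $\sum_{|T|=m}|\det\bfV_{T,:}|^2=\det(\bfV^*\bfV)=1$. Your route is more elementary, and it fits more naturally with the fact that $\bfL$ is only positive semidefinite here (the background section defines $L$-ensembles with positive eigenvalues).

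You also make explicit a step the paper leaves implicit in ``since $\bfK$ is a projection matrix'': a rank-$m$ projection DPP has exactly $m$ points almost surely, because $\E|\Scal|=\mathrm{tr}\,\bfK=m$ and $\det\bfK_T=0$ for $|T|>m$. This is what identifies its law on $m$-subsets with $\det\bfK_T$. Finally, you check that $\rank\bfL=\rank\mathbf{\Psi}=m$ using the invertibility of the diagonal weights, which the paper takes for granted. Both arguments rely, in the same way, on the standard existence of the projection DPP with kernel $\bfK$.
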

In particular, there exist algorithms to exactly sample $\Scal_{\disc}$ or $\widehat{\Scal}_{\disc}$, at a cost dominated by the computation of the eigenvectors of $\mathbf{L}$ and $\widehat{\mathbf{L}}$, respectively; see \cite{barthelme2023faster} and references therein. 
Rewriting e.g. \eqref{e:L} as $\mathbf{B}\mathbf{B}^*$, one realizes that the cost of finding these eigenvectors by rather diagonalizing $\mathbf{B}^*\mathbf{B}$ is $O(Nn^2+n^3)$.
In minibatch and coreset constructions, where $n\ll N$, the sampling cost is thus essentially linear in $N$.
Finally, note that unlike the discrete OPEs used in \cite{TrBaAm19,jaquard2026}, the definition of our DPP kernels requires the evaluation of the eigenfunctions of the kernel $K$ of $\mathcal{S}_{\mathrm{c}}$ in \eqref{e:Phi}. Note that the kernels used for subsampling tasks with DPPs are typically projection kernels that have accessible Mercer decompositions (i.e., eigenfunction expansions), so in practice it is not even necessary to compute these eigenfunctions separately.
We now show that $\mathcal{S}_\mathrm{d}$ and $\widehat{\mathcal{S}_\mathrm{d}}$ inherit the variance reduction of linear statistics of $\mathcal{S}_\mathrm{c}$, without any argument specific to the choice of the kernel $K$. 
This leaves the choice of the kernel as a flexibility in the hands of the user.

\begin{theorem}[Discretization Pipeline - I] \label{thm:disc-cont-true}
Let $f:E \rightarrow \R$ be a bounded test function. 
Assume that
 \begin{equation} \label{eq:Kmax}
    \exists\: K_{\max}<\infty \text{ such that } \frac{K(x,x)}{\rho(x)} \le K_{\max} \: \text{ for $\nu$-almost every $x$} \tag{A1}. 
\end{equation}
Recall the definition of linear statistics in \eqref{e:linear_statistic}.
Then for any $\delta,\delta' \in (0,1)$, with probability at least $1-\delta-\delta'$ in the data set $\Xcal$, we have
\begin{eqnarray*}
    \var[\L_{\Scal_{\disc}}(f)|\Xcal] &\le&  2\var[\L_{\Scal_{\cont}}(f)] +4\|f\|_{\infty}^2K_{\max}^2\mathscr{E}(n,N,\delta,\delta'),  \quad \text{and}\\
    \var[\L_{\Scal_{\disc}}(f)|\Xcal] &\ge&  \frac{N-1}{2N}\var[\L_{\Scal_{\cont}}(f)] - 2\|f\|_{\infty}^2K_{\max}^2\mathscr{E}(n,N,\delta,\delta'),
\end{eqnarray*}
where
    \begin{equation} \label{eq:def-Error}
     \mathscr{E}(n,N,\delta,\delta'):= 4\sqrt{\frac{2\log (2/\delta')}{N}} + \frac{4}{9N^2} \log^2 \Big (\frac{n^2+1}{\delta}\Big )+ \frac{4}{N} \log \Big (\frac{n^2+1}{\delta}\Big ) .
\end{equation}
\end{theorem}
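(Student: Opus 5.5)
The plan is to express both conditional and unconditional variances as traces of $n\times n$ matrices and then compare them through matrix concentration. For the continuous projection DPP, \eqref{eq:definition_DPPs} with $k=1,2$ gives the standard formula
\[
\var[\L_{\Scal_\cont}(f)] = \int f^2 K(x,x)\,\d\mu(x) - \iint f(x)f(y)|K(x,y)|^2\,\d\mu(x)\d\mu(y) = \mathrm{tr}(M_{f^2}) - \|M_f\|_\Frob^2 ,
\]
where $M_g := \int g\,\psi\psi^*\,\d\mu \in\C^{n\times n}$ and $\psi=(\psi_1,\dots,\psi_n)^\top$.

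For the discrete side, write $\bfL=\mathbf{B}\mathbf{B}^*$ with $\mathbf{B}=N^{-1/2}\bfD(\rho^{-1/2})\mathbf{\Psi}$. By the Proposition (Lemma~\ref{lm:n-dpp}), $\Scal_\disc$ is the projection DPP with kernel $\mathbf{P}=\mathbf{B}G^{+}\mathbf{B}^*$, where
\[
G:=\mathbf{B}^*\mathbf{B}=\frac1N\sum_i \frac{\psi(X_i)\psi(X_i)^*}{\rho(X_i)} .
\]
The same variance formula, now with counting measure, gives
\[
\var[\L_{\Scal_\disc}(f)\mid\Xcal]=\mathrm{tr}(\mathbf{P}\bfD(f^2))-\|\mathbf{P}\bfD(f)\mathbf{P}\|_\Frob^2=\mathrm{tr}(G^{+}A_{f^2})-\|G^{+1/2}A_fG^{+1/2}\|_\Frob^2 ,
\]
with $A_g:=\frac1N\sum_i g(X_i)\psi(X_i)\psi(X_i)^*/\rho(X_i)$. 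These are empirical versions satisfying $\E A_g=M_g$ and $\E G=\mathbf{I}_n$. The whole proof then reduces to showing that $G\approx \mathbf{I}_n$, $\mathrm{tr}A_{f^2}\approx \mathrm{tr}M_{f^2}$ and $\|A_f\|_\Frob^2\approx\|M_f\|_\Frob^2$ with high probability.

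The concentration steps, in order, are as follows. First, assumption \eqref{eq:Kmax} gives $\|\psi(x)\|^2/\rho(x)=K(x,x)/\rho(x)\le K_{\max}$, so every summand of $G$ and $A_f$ has operator norm at most $\|f\|_\infty K_{\max}$. Second, $\mathrm{tr}A_{f^2}=\frac1N\sum_i f^2(X_i)K(X_i,X_i)/\rho(X_i)$ is an average of i.i.d. variables in $[0,\|f\|_\infty^2K_{\max}]$, so Hoeffding with confidence $\delta'$ yields the $\sqrt{2\log(2/\delta')/N}$ term. Third, I would apply matrix Bernstein jointly to $G-\mathbf{I}$ and $A_f-M_f$, for instance to a block-diagonal or dilated matrix of total dimension governed by $n^2+1$, which explains the $\log((n^2+1)/\delta)$. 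This yields a deviation $\epsilon$ with $\epsilon\lesssim \sqrt{\log(\cdot)/N}+\log(\cdot)/N$, whose square produces the $\frac{4}{9N^2}\log^2$ and $\frac4N\log$ terms of $\mathscr{E}$ (the $1/3$ in Bernstein squared gives the $1/9$).

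The main obstacle is the quadratic term $\|G^{+1/2}A_fG^{+1/2}\|_\Frob^2$, since Frobenius norms of $n\times n$ matrices could pick up dimension factors. I would avoid expanding it and instead use $(a+b)^2\le 2a^2+2b^2$ and $a^2\ge \frac12 b^2-(a-b)^2$; these are the source of the constants $2$ and $\frac12$ in the theorem. Here I would expand $\|A_f\|_\Frob^2=\frac1{N^2}\sum_{i,j}f(X_i)f(X_j)|K(X_i,X_j)|^2/(\rho(X_i)\rho(X_j))$. The off-diagonal part is a U-statistic with mean $\frac{N(N-1)}{N^2}\|M_f\|_\Frob^2$, which is where the factor $\frac{N-1}{N}$ in the lower bound comes from. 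The diagonal part is $O(\|f\|_\infty^2K_{\max}^2/N)$ and is absorbed into $\mathscr{E}$.

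The remaining difficulty is replacing $G^{+}$ by $\mathbf{I}$. On the event $\|G-\mathbf{I}\|_\op\le\epsilon<1$, the matrix $G$ is invertible (so $m=n$), and $\|G^{-1}-\mathbf{I}\|_\op\lesssim\epsilon$. The error terms are then controlled through trace-class bounds such as $\mathrm{tr}(A_{f^2})\le\|f\|_\infty^2K_{\max}$ and $\|A_f\|_{S_1}\le \|f\|_\infty K_{\max}$ (trace norm), keeping everything dimension-free and proportional to $\|f\|_\infty^2K_{\max}^2$. A union bound over the Hoeffding and matrix Bernstein events gives probability $1-\delta-\delta'$. Finally, collecting constants should produce exactly $2\var_\cont+4\|f\|^2_\infty K_{\max}^2\mathscr{E}$ and $\frac{N-1}{2N}\var_\cont-2\|f\|_\infty^2K_{\max}^2\mathscr{E}$. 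I expect this bookkeeping, in particular getting dimension-free control of the $G^{-1}$ perturbation inside the Frobenius norm, to be the delicate part.
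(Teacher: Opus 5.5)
There is a genuine gap. Write $V_\disc:=\var[\L_{\Scal_\disc}(f)\mid\Xcal]$ and $V_\cont:=\var[\L_{\Scal_\cont}(f)]$.

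\textbf{The constants.} Your trace identity $V_\disc=\mathrm{tr}(G^{+}A_{f^2})-\|G^{+1/2}A_fG^{+1/2}\|_\Frob^2$ is correct. The failure is in the step meant to produce the constants $2$ and $\frac{N-1}{2N}$. Both $\mathrm{tr}(M_{f^2})$ and $\|M_f\|_\Frob^2$ are typically of order $\|f\|_\infty^2 n$ (take $f$ constant on the support), while their difference $V_\cont$ is supposed to be $o(n)$. Suppose you apply $a^2\ge\frac12 b^2-(a-b)^2$, resp.\ $(a+b)^2\le 2a^2+2b^2$, to the Frobenius term alone. The best you then get is $V_\disc\lesssim V_\cont+\frac12\|M_f\|_\Frob^2$, resp.\ $V_\disc\gtrsim V_\cont-\|M_f\|_\Frob^2$. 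The cancellation that makes the variance small is destroyed, and nothing of the form $2V_\cont$ or $\frac{N-1}{2N}V_\cont$ comes out.

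These inequalities are harmless only on a sum of nonnegative terms in which the cancellation is already encoded. That is the missing idea. Write $V_\disc=\frac12\sum_{i,j}(f(X_i)-f(X_j))^2\bfK_{ij}^2$, and compare $\bfK_{ij}^2$ with $\bfL_{ij}^2=K(X_i,X_j)^2/(N^2\rho(X_i)\rho(X_j))$ \emph{entrywise}. This gives $V_\disc\le F+4\|f\|_\infty^2\|\bfK-\bfL\|_\Frob^2$ and $V_\disc\ge\frac14F-2\|f\|_\infty^2\|\bfK-\bfL\|_\Frob^2$, where $F=\frac1{N^2}\sum_{i,j}(f(X_i)-f(X_j))^2K(X_i,X_j)^2/(\rho(X_i)\rho(X_j))$. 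Because the diagonal $i=j$ vanishes, $\E F=\frac{2(N-1)}{N}V_\cont$. McDiarmid's inequality, with bounded differences $8\|f\|_\infty^2K_{\max}^2/N$, then gives the $4\sqrt{2\log(2/\delta')/N}$ term. That is its origin, not Hoeffding on $\mathrm{tr}A_{f^2}$.

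\textbf{The error terms.} Suppose instead you carry out a purely additive perturbation with $G^{-1}=\bfI_n+\Delta$. You might then get constant $1$ in front of $V_\cont$, but your error terms become \emph{linear} in $G-\bfI_n$. The terms $\mathrm{tr}(\Delta A_{f^2})$ and $\mathrm{tr}(\Delta A_f^2)$, bounded separately, are of size roughly $\|G-\bfI_n\|_\op\|f\|_\infty^2K_{\max}$. They therefore carry a factor $\sqrt{\log((n^2+1)/\delta)/N}$, which $\mathscr{E}$ does not contain: its $\delta$-dependent part is $O(\log(\cdot)/N)$. So ``collecting constants'' cannot give the stated inequality for all $\delta,\delta'$. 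Your route also needs $\|G-\bfI_n\|_\op<1$, which the theorem does not grant.

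The paper gets an error \emph{quadratic} in the deviation from a structural fact absent from your plan. Since $\bfK$ and $\bfL$ share eigenvectors, $\|\bfK-\bfL\|_\Frob^2=\sum_{\lambda(\bfL)>0}(\lambda-1)^2\le\|G-\bfI_n\|_\Frob^2$, and this holds even if $m<n$. The vector Bernstein inequality for $\|G-\bfI_n\|_\Frob$ (matrices viewed as vectors, whence the factor $n^2+1$) then yields exactly the $\frac{4}{9N^2}\log^2$ and $\frac4N\log$ terms. The factors $2$ and $\frac12$ are precisely the price paid for having this error quadratic rather than linear.
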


Several comments are in order. 
First, in the cases we have in mind, $K_{\max}\asymp n$. 
This is true under mild assumptions for multivariate OPEs, see Section~\ref{s:more_remarks_on_the_theorem}, and for the wavelet-based DPPs we will introduce in Section~\ref{sec:waveletDPP}.
Second, our algorithm produces a minibatch \(\Scal_{\disc}\) of size \(m \le n\), rather than one that is necessarily of size exactly \(n\). However, we note that if \(N\) is sufficiently large so that $K^2_{\max} \mathscr{E}(n,N,\delta,\delta') <1$ (which is the relevant regime in practice), then on the same event appearing in the conclusion of Theorem~\ref{thm:disc-cont-true}, the minibatch has size exactly \(n\). This follows from our proof technique; see Lemma~\ref{lm:bernstein}.
Third, and most importantly, Theorem \ref{thm:disc-cont-true} establishes a \emph{variance-transfer guarantee}: the conditional variance of the linear statistic associated with the subsample $\mathcal{S}_\mathrm{d}$ is bounded above and below by that of the underlying continuous DPP, up to \emph{explicit} error terms. In the regime most relevant for applications, where $N\gg n$, these error terms are negligible, leaving \(\var[\L_{\Scal_{\cont}}(f)]\) as the dominant term.
By choosing a continuous DPP kernel $K$ that is known to enjoy variance reduction (i.e., \(\var[\L_{\Scal_{\cont}}(f)]=O(n^{1-\alpha})\) for some \(\alpha>0\)), the resulting minibatch $\Scal_{\disc}$ thus automatically inherits this variance reduction property. 

In the regime where only an estimator $\hat{\rho}$ is available, we show that $\widehat{\Scal}_{\disc}$ defined by \eqref{e:LHat} enjoys the same bounds as $\Scal_{\disc}$, up to an explicit additional error coming from density estimation.

\begin{theorem} [Discretization Pipeline - II] \label{thm:disc-cont-approx}
    Let $f:E \rightarrow \R$ be a bounded test function. 
 Assume both \eqref{eq:Kmax} and that
 \begin{equation} \label{eq:rhohat}
      \P\Big (\max_{1\le i \le N}\Big |\frac{\rho(X_i)}{\hat\rho(X_i)} -1 \Big | > \varepsilon \Big ) \le \beta, \quad \text{for some }\varepsilon \in [0,1), \beta \in [0,1). \tag{A2}
    \end{equation}
 
 Then, for any $\delta,\delta' \in (0,1)$, with probability at least $1-\delta-\delta'-\beta$ in the data set $\Xcal$,
  \begin{eqnarray*}
      \var[\L_{\widehat{\Scal}_{\disc}}(f)|\Xcal]&\le& 2(1+\varepsilon)^2 \var[\L_{\Scal_{\cont}}(f)] + 32\|f\|^2_{\infty} K_{\max}^2\mathscr{E}(n,N,\delta,\delta') + 8 \|f\|^2_{\infty}\varepsilon^2n, \quad \text{and} \\
      \var[\L_{\widehat{\Scal}_{\disc}}(f)|\Xcal] &\ge& \frac{(N-1)}{2N}(1-\varepsilon)^2\var[\L_{\Scal_{\cont}}(f)] - 16 \|f\|^2_{\infty} K_{\max}^2\mathscr{E}(n,N,\delta,\delta') - 4 \|f\|^2_{\infty}\varepsilon^2n.
  \end{eqnarray*}
\end{theorem}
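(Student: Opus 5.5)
We reduce Theorem~\ref{thm:disc-cont-approx} to the argument behind Theorem~\ref{thm:disc-cont-true} by a perturbation of the weights. Write $\mathbf{B}=N^{-1/2}\bfD(\rho^{-1/2})\mathbf{\Psi}$ and $\widehat{\mathbf{B}}=N^{-1/2}\bfD(\hat\rho^{-1/2})\mathbf{\Psi}=\bfD(w^{1/2})\mathbf{B}$, where $w_i=\rho(X_i)/\hat\rho(X_i)$. By the Proposition (Lemma~\ref{lm:n-dpp}), $\widehat{\Scal}_{\disc}$ is the projection DPP onto $\mathrm{range}(\widehat{\mathbf{B}})$, with kernel $\widehat{\mathbf{K}}=\widehat{\mathbf{B}}(\widehat{\mathbf{B}}^*\widehat{\mathbf{B}})^{-1}\widehat{\mathbf{B}}^*$ when the Gram matrix is invertible. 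For a projection DPP the conditional variance is
\[
\var[\L(f)|\Xcal]=\tfrac12\sum_{i,j}(f(X_i)-f(X_j))^2|\widehat{\mathbf{K}}_{ij}|^2 ,
\]
equivalently $\sum_i f_i^2\widehat{\mathbf{K}}_{ii}-\sum_{i,j}f_if_j|\widehat{\mathbf{K}}_{ij}|^2$.

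We work on the intersection of two events. The first is the event of Theorem~\ref{thm:disc-cont-true}, of probability at least $1-\delta-\delta'$. On it the Bernstein step (Lemma~\ref{lm:bernstein}) gives $\|\mathbf{B}^*\mathbf{B}-\mathbf{I}_n\|_{\op}\lesssim K_{\max}\sqrt{\mathscr{E}}$ in the form used there. The second is the event $\max_i|w_i-1|\le\varepsilon$ from (A2), of probability at least $1-\beta$. A union bound gives probability at least $1-\delta-\delta'-\beta$.

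The core step is to compare $\widehat{\mathbf{K}}$ with the surrogate $\widetilde{\mathbf{K}}:=\bfD(w^{1/2})\mathbf{B}\mathbf{B}^*\bfD(w^{1/2})$, whose entries are $N^{-1}\sqrt{w_iw_j}\,(\rho_i\rho_j)^{-1/2}K(X_i,X_j)$. This is exactly the continuous kernel sampled at the data, up to the factors $\sqrt{w_iw_j}\in[1-\varepsilon,1+\varepsilon]$. The argument then runs in four steps.

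\begin{enumerate}[label=(\roman*)]
\item \textbf{Kernel surrogate.} Write $\widehat{\mathbf{K}}-\widetilde{\mathbf{K}}=\widehat{\mathbf{B}}\bigl[(\widehat{\mathbf{B}}^*\widehat{\mathbf{B}})^{-1}-\mathbf{I}\bigr]\widehat{\mathbf{B}}^*$. Since $\widehat{\mathbf{B}}^*\widehat{\mathbf{B}}-\mathbf{I}=(\widehat{\mathbf{B}}^*\widehat{\mathbf{B}}-\mathbf{B}^*\mathbf{B})+(\mathbf{B}^*\mathbf{B}-\mathbf{I})$, the first piece has norm at most $\varepsilon\|\mathbf{B}\|^2$ and the second is the concentration error already controlled in Theorem~\ref{thm:disc-cont-true}.
\item \textbf{Variance functional.} Bound the Frobenius-type quantity $\sum_{i,j}(f_i-f_j)^2|\cdot|^2$ by $4\|f\|_\infty^2\|\cdot\|_{\Frob}^2$. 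Using $\|\widehat{\mathbf{B}}\|_{\Frob}^2\approx n$, the $\varepsilon$ piece contributes $O(\|f\|_\infty^2\varepsilon^2 n)$, which is the source of the $8\|f\|_\infty^2\varepsilon^2 n$ term. The concentration piece contributes $O(\|f\|_\infty^2K_{\max}^2\mathscr{E})$.
\item \textbf{Quadratic-form comparison.} Apply $(a+b)^2\le 2a^2+2b^2$ and $a^2\ge \tfrac12(a+b)^2-b^2$ at the level of $\var$ viewed as a quadratic form in the kernel entries. This produces the factor $2$ and the factor $\tfrac12$.
\item \textbf{Back to the continuum.} The main term involves $|\widetilde{\mathbf{K}}_{ij}|^2$, which differs from the unweighted $|N^{-1}(\rho_i\rho_j)^{-1/2}K(X_i,X_j)|^2$ by the factor $w_iw_j\in[(1-\varepsilon)^2,(1+\varepsilon)^2]$. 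The $f$-weighted sum of the unweighted entries is precisely the U-statistic that Theorem~\ref{thm:disc-cont-true}'s proof compares with $\var[\L_{\Scal_{\cont}}(f)]$, via McDiarmid giving the $\sqrt{\log(2/\delta')/N}$ term and the diagonal correction giving $(N-1)/N$. Inserting the bounds of Theorem~\ref{thm:disc-cont-true} yields the displayed constants, where the doubling of the constants from $4,2$ to $32,16$ absorbs the $(1\pm\varepsilon)^2\le 4$ inflation of the error terms.
\end{enumerate}

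The main obstacle is step (i): controlling $(\widehat{\mathbf{B}}^*\widehat{\mathbf{B}})^{-1}$ so that the perturbation error stays proportional to $\varepsilon^2 n$ rather than $\varepsilon^2 K_{\max}^2$. The plan is to use $\widehat{\mathbf{B}}^*\widehat{\mathbf{B}}=\mathbf{B}^*\bfD(w)\mathbf{B}\succeq(1-\varepsilon)\mathbf{B}^*\mathbf{B}$. This gives invertibility on the good event together with an operator bound $\lesssim\varepsilon$ on $(\widehat{\mathbf{B}}^*\widehat{\mathbf{B}})^{-1/2}\mathbf{B}^*\bfD(w-1)\mathbf{B}(\widehat{\mathbf{B}}^*\widehat{\mathbf{B}})^{-1/2}$. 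Converting this operator bound into a Frobenius bound costs exactly the rank $n$, which gives the $\varepsilon^2 n$ error term.
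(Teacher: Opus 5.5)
\textbf{There is a gap in step (i).} Your overall architecture matches the paper's. You compare $\widehat{\bfK}$ with $\widehat{\bfL}=\widehat{\mathbf B}\widehat{\mathbf B}^*$ entrywise via $(a+b)^2\le 2a^2+2b^2$ and $b^2\ge\frac12a^2-(a-b)^2$. You sandwich $\widehat{\bfL}_{ij}^2=w_iw_j\bfL_{ij}^2$ between $(1\pm\varepsilon)^2\bfL_{ij}^2$ to recover $F$. You then use McDiarmid, Bernstein and a union bound with \eqref{eq:rhohat}. The gap is in step (i), the one you flag as the main obstacle: the mechanism you propose there does not deliver the stated bound.

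\emph{Invertibility is not available.} Your route needs $G:=\widehat{\mathbf B}^*\widehat{\mathbf B}$ to be invertible. That holds only when $\rank\mathbf\Psi=n$, which the event of Theorem~\ref{thm:disc-cont-true} does not guarantee unless $K_{\max}^2\mathscr E<1$, and this is not assumed.

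\emph{The normalized bound has the wrong constants and the wrong target.} The bound $\|G^{-1/2}\mathbf B^*\bfD(w-1)\mathbf B\,G^{-1/2}\|_{\op}\le\varepsilon/(1-\varepsilon)$ blows up as $\varepsilon\to1$. The statement instead has the fixed constants $8$ and $4$ in front of $\varepsilon^2n$ for every $\varepsilon\in[0,1)$. Moreover, this normalized matrix is not the quantity that actually appears.

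\emph{The operator-to-Frobenius conversion leaves cross terms.} Paying $\sqrt n$ to pass from $\|\cdot\|_{\op}$ to $\|\cdot\|_{\Frob}$, or using $\|\widehat{\mathbf B}\|_{\Frob}^2\approx n$ and $\|\mathbf B\|_{\op}^2\approx1$, produces cross terms such as $n\varepsilon^2\|\mathbf B^*\mathbf B-\bfI_n\|_{\op}^2$. These are not bounded by a constant times $\varepsilon^2n+K_{\max}^2\mathscr E$.

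\textbf{No inverse needs to be controlled.} When $G$ is invertible, $U:=\widehat{\mathbf B}G^{-1/2}$ is an isometry, and your own formula reads $\widehat{\bfK}-\widehat{\bfL}=U(\bfI_n-G)U^*$. Hence $\|\widehat{\bfK}-\widehat{\bfL}\|_{\Frob}=\|G-\bfI_n\|_{\Frob}$. In general, possibly rank-deficient, the shared-eigenvector argument gives $\|\widehat{\bfK}-\widehat{\bfL}\|_{\Frob}^2=\sum_{\lambda>0}(\lambda-1)^2\le\|G-\bfI_n\|_{\Frob}^2$. The missing ingredient is the Frobenius sandwich
\[
\|\mathbf B^*\bfD(w-1)\mathbf B\|_{\Frob}\le\|\bfD(w-1)\|_{\op}\,\|\mathbf B^*\mathbf B\|_{\Frob}\le\varepsilon\bigl(\sqrt n+\|\mathbf B^*\mathbf B-\bfI_n\|_{\Frob}\bigr).
\]
This keeps the concentration error in Frobenius norm with no $\sqrt n$ factor attached. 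Using $\varepsilon<1$, it gives $\|G-\bfI_n\|_{\Frob}\le\varepsilon\sqrt n+2\|\mathbf B^*\mathbf B-\bfI_n\|_{\Frob}$. Hence $\|G-\bfI_n\|_{\Frob}^2\le2\varepsilon^2n+8\|\mathbf B^*\mathbf B-\bfI_n\|_{\Frob}^2$, which is Lemma~\ref{lm:perturb-rho}.

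\textbf{Where the constants come from.} Multiplying the last bound by $4\|f\|_\infty^2$ (resp.\ $2\|f\|_\infty^2$) is exactly where $8\varepsilon^2n$ and $32$ (resp.\ $4\varepsilon^2n$ and $16$) come from. The inequality $(1+\varepsilon)^2\le4$ only serves to absorb the McDiarmid fluctuation $(1+\varepsilon)^2|F-\E[F]|$ into the $32$. Your step (iv), with its ``doubling from $4,2$ to $32,16$'', therefore misattributes the constants: the factor is $8$, not $2$.
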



Compared to Theorem~\ref{thm:disc-cont-approx}, $\widehat{\Scal}_{\disc}$ only suffers an additional error term of order $O(\varepsilon^2 n)$. This term is not restrictive for our purposes. Indeed, to retain the variance reduction property, it is enough that $\varepsilon = O(n^{-\alpha})$ for some $\alpha>0$. 
In practice, one may even obtain estimators for which $\varepsilon$ decays with the full sample size $N$, which is much larger than the minibatch size $n$. For instance, in \cite{kde-jiang-icml}, if the density $\rho$ is $\alpha$-H\"older continuous on $\R^d$, then the KDE estimator $\hat\rho_{\mathrm{KDE}}$ satisfies $\|\hat\rho_{\mathrm{KDE}} - \rho\|_{\infty} = O(N^{-\alpha/(2\alpha+d)})$ (ignoring logarithmic factors) with high probability (c.f. Theorem 2 and Remark 8 in \cite{kde-jiang-icml}). 
That being said, we point out that we actually only need in \eqref{eq:rhohat} to control the relative error of the density estimate \emph{on the dataset} $\{X_i\}_{i=1}^N$, with high probability.
This should allow us to go beyond the usual $L^\infty$ bounds on $\hat{\rho}-\rho$ and work in much weaker settings.


\section{A new class of DPPs: wavelet bases and the variance reduction phenomenon} \label{sec:waveletDPP}
In this section, we introduce a new class of DPPs constructed from wavelets and show that they come with fast-decaying variance of linear statistics even for non-smooth test functions, making them ideal candidates as the continuous DPP in Theorems~\ref{thm:disc-cont-true} and \ref{thm:disc-cont-approx}. 
To the best of our knowledge, this yields the first example of a class of DPPs in arbitrary dimension with this property that does not arise from orthogonal polynomial ensembles \citep{bardenet-aap} or the Dirichlet ensemble \cite{CoMaAm20Sub}.

\paragraph{From wavelets to a DPP kernel.} Let $\phi:\R \rightarrow \R$ be a bounded, compactly supported, orthonormal scaling function, associated with a multiresolution analysis, normalized by $\int\phi(x)\d x=1$.
We refer to Appendix~\ref{a:appendix-wavelet} for a self-contained introduction to multiresolution analysis.
A typical example would be the Haar scaling function $\phi_{\mathrm{Haar}}(x) = \mathbf{1}_{[0,1)}(x)$.
For each $j,k \in \Z$, we define
\[\phi_{j,k}(x):=2^{-j/2} \phi (2^{-j}x-k)\quad,\quad \Phi_{j,\bfk}(\bfx) = \prod_{i=1}^d \phi_{j,k_i}(x_i),  \]
where $\bfk=(k_1,\ldots,k_d)\in\Z^d$ and $\bfx = (x_1,\ldots,x_d)\in \R^d.$
Fix $j\in \N$, we define an index set
\[ \Ical=\Ical(j):= \{ \bfk=(k_1,\ldots,k_d) \in \Z^d : \supp(\Phi_{-j,\bfk})\subset [0,1]^d \}.\]
We define the wavelet DPP associated to the scaling function $\phi$ as the DPP $\mathcal{S}_\mathrm{c}$ with reference measure the Lebesgue measure on $\mathbb{R}^d$, and kernel
\begin{equation} \label{eq:wavelet-dpp-kernel}
    K(\bfx,\bfy) := \sum_{\bfk\in \Ical} \Phi_{-j,\bfk}(\bfx)\Phi_{-j,\bfk}(\bfy), \quad \bfx,\bfy \in \R^d.
\end{equation}

We highlight that $j$ is fixed, and that $K$ is parametrized by the scaling function $\phi$ and the integer $j$. 
Moreover, by construction, the $(\Phi_{-j,\bfk})_\bfk$ are orthonormal in $L^2(\R^d, \d \bfx)$, so that $K$ is a projection kernel of rank $n:= |\Ical(j)|$. 
We thus control the cardinality $n$ of our DPP through the choice of $j$ and $\phi$.
In particular, a simple exercise shows that if $\supp(\phi)\subset [a,b]$ then $n = (\lfloor 2^j-b\rfloor - \lceil -a \rceil +1)^d \asymp 2^{dj}$. 
While the joint choice of $j, a, b$ can help choose a particular cardinality, asymptotic results will let $j\rightarrow\infty$, and as such the number of points will be exponential in the dimension. 
This is a hint that wavelet DPPs should be used after some low-dimensional approximation, e.g. after running a PCA, as we demonstrate in the experimental Section~\ref{sec:experiments}. On the other hand, when $d$ is the effective dimension of the data, an $\exp(d)$ points would be minimally required to fully ``explore the space'', so this sample size is rather reasonable.
As we shall now see, the DPP with kernel \eqref{eq:wavelet-dpp-kernel} generalizes stratified sampling.

\begin{proposition}[Connections to Stratified Sampling]    \label{p:stratified_sampling}
    When the scaling function is $\phi_{\mathrm{Haar}}(x) = \mathbf{1}_{[0,1)}(x)$, we have $n =2^{dj}$.
    Furthermore, the supports of $\Phi_{-j,\bfk}$, $\bfk \in \Ical$, are disjoint, and $\mathcal{S}_\mathrm{c}\sim \mathrm{DPP}(K,\d \bfx)$ has the same distribution as the set $\{Y_\bfk: \bfk \in \Ical \}$, where the $Y_\bfk$s are independent and $Y_\bfk$ is drawn uniformly on $\mathrm{supp} (\Phi_{-j,\bfk})$. 
\end{proposition}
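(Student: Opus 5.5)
The proof has three steps: compute the index set explicitly, identify the kernel as block-diagonal, and then compare the DPP's correlation functions with those of the stratified process.

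\emph{Step 1: the index set.} With $\phi=\phi_{\mathrm{Haar}}$ we have $\phi_{-j,k}(x)=2^{j/2}\mathbf{1}_{[0,1)}(2^jx-k)=2^{j/2}\mathbf{1}_{[k2^{-j},(k+1)2^{-j})}(x)$. Hence $\Phi_{-j,\bfk}=2^{dj/2}\mathbf{1}_{C_\bfk}$, where $C_\bfk=\prod_{i}[k_i2^{-j},(k_i+1)2^{-j})$ is a dyadic cube of side $2^{-j}$. The support of $\Phi_{-j,\bfk}$ is the closure of $C_\bfk$ (or $C_\bfk$ itself; only null sets are at stake). It lies in $[0,1]^d$ exactly when $0\le k_i\le 2^j-1$ for every $i$. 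So $\Ical=\{0,\dots,2^j-1\}^d$ and $n=2^{dj}$, which is consistent with the general formula for $[a,b]=[0,1]$. The cubes $C_\bfk$ are pairwise disjoint, which gives the disjointness claim up to boundaries of measure zero.

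\emph{Step 2: the kernel.} Substituting into \eqref{eq:wavelet-dpp-kernel} gives
\[
K(\bfx,\bfy)=2^{dj}\sum_{\bfk\in\Ical}\mathbf{1}_{C_\bfk}(\bfx)\mathbf{1}_{C_\bfk}(\bfy).
\]
So $K(\bfx,\bfy)=2^{dj}$ if $\bfx,\bfy$ lie in the same cube, and $K(\bfx,\bfy)=0$ otherwise.

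\emph{Step 3: the correlation functions.} The stratified process $\{Y_\bfk\}$ has $k$-point correlation function
\[
\rho_k(x_1,\dots,x_k)=\prod_{i}2^{dj}\mathbf{1}_{[0,1)^d}(x_i)
\]
when all the $x_i$ lie in distinct cubes, and $\rho_k=0$ otherwise. This holds because $\E\sum_{\neq}f$ decomposes over injective assignments of the points to distinct strata, and each stratum carries density $2^{dj}$ on its cube.

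For the DPP, take any $x_1,\dots,x_k\in[0,1)^d$.
\begin{itemize}
\item If two of the points lie in the same cube, the corresponding two rows of $[K(x_i,x_l)]$ are identical, since each equals $2^{dj}$ times the indicator of that cube. The determinant is therefore $0$.
\item If the points lie in distinct cubes, the matrix is diagonal with entries $2^{dj}$, so the determinant is $2^{dkj}$.
\item If some point lies outside $[0,1)^d$, its row vanishes, so the determinant is $0$.
\end{itemize}
Thus the correlation functions of the two processes agree for every $k$. Since both processes have at most $n$ points, a point process is determined by its correlation functions. Equivalently, one can compare the joint density \eqref{e:density_dpp} of $(X_1,\dots,X_n)$, namely $\frac{1}{n!}2^{dnj}$ on configurations with exactly one point per cube, against the symmetrized law of the $Y_\bfk$. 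So the two laws coincide.

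The only delicate point is this final identification of the laws; in practice it is immediate from \eqref{e:density_dpp}. The rest is bookkeeping of boundaries of measure zero.
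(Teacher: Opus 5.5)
Your proof is correct and is essentially the paper's argument. The key fact is the same: the Haar kernel equals $2^{dj}$ for pairs of points in the same dyadic cube and $0$ otherwise, so the determinant vanishes when two points share a cube and is constant when they occupy distinct cubes. The paper applies this only to the $n$-point joint density \eqref{e:density_dpp}, which is your ``equivalently'' remark and avoids the uniqueness-from-correlation-functions step; you additionally make explicit the computation $\Ical=\{0,\dots,2^j-1\}^d$ that the paper just asserts.
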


 Our continuous-to-discrete pipeline (see below) subsequently enables us to transfer this to a new kind of discretized analogue of stratified sampling that is tuned to the problem of subset sampling from a large dataset.


\paragraph{Variance reduction.}
Consider bounded, compactly supported test functions, 
\begin{equation}\label{eq:cpt-supp} 
    f: \R^d \rightarrow \R , f(\bfx) = 0 \text{ on } \R^d \setminus (0,1)^d \text{ and } \|f\|_{\infty}:= \sup_{\bfx\in \R^d} f(\bfx) < \infty. \tag{$*$}
\end{equation}
To study the impact of smoothness, we further consider two classes of functions: the space $C^{0,s}$ of Hölder continuous functions of order $s$, and the Sobolev space $H^s$, both with parameter $s\in (0,1]$. For completeness, we recall the definitions of these spaces and their norms in Appendix \ref{sec:test-function}.
We denote by $\mathcal{F}_s$ the space of functions $f$ satisfying \eqref{eq:cpt-supp} and belonging to either $C^{0,s}$ or $H^s$. 
Furthermore, when clear from context what space $f$ belongs to, we denote by $|f|_s$: either (a) for H\"older classes, the H\"older seminorm $|f|_{C^{0,s}}$  ($s \in (0,1]$); or (b) for Sobolev classes, the Gagliardo seminorm $|f|_{H^s}$ (for $s \in (0,1)$) and $\|\nabla f\|_{L^2}$ (for $s=1$).

\begin{theorem}[Variance reduction for wavelet kernels] \label{thm:wavelet-dpp}
Let $s\in (0,1]$. Consider the DPP $\Scal_\mathrm{c}$ on $(\R^d, \d\bfx)$ defined by the kernel \eqref{eq:wavelet-dpp-kernel}, then there exists a constant $C$ (depending only on $\phi$ and $d$) such that, 
$$ 
    \forall f\in\mathcal{F}_s, \quad
\var[\L_{\Scal}(f)] \le  C \,|f|^2_{s} \, n^{1-2s/d}.
$$
\end{theorem}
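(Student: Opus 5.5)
The argument starts from the standard variance formula for a projection DPP. Since $K$ is a real symmetric projection kernel,
\[
\var[\L_{\Scal_{\cont}}(f)] = \frac12\int\!\!\int (f(\bfx)-f(\bfy))^2 K(\bfx,\bfy)^2\,\d\bfx\,\d\bfy ,
\]
provided $\int K(\bfx,\bfy)^2\,\d\bfy = K(\bfx,\bfx)$. This reproducing identity holds for every $\bfx\in\R^d$, because $K$ is the kernel of an orthogonal projection. The whole proof then reduces to exploiting the localization of $K$.

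Write $h := 2^{-j}$. If $\supp\phi\subset[a,b]$, then each $\Phi_{-j,\bfk}$ is supported in the cube $h(\bfk+[a,b]^d)$ and satisfies $\|\Phi_{-j,\bfk}\|_\infty\le \|\phi\|_\infty^d h^{-d/2}$. This gives three consequences:
\begin{itemize}
\item $K(\bfx,\bfy)=0$ unless $\|\bfx-\bfy\|_\infty\le (b-a)h$.
\item For fixed $\bfx$, at most $(b-a+1)^d$ indices $\bfk$ contribute.
\item Combining these, $|K(\bfx,\bfy)|\le C_1 h^{-d}$ with $C_1$ depending only on $\phi,d$.
\end{itemize}
Moreover $K(\bfx,\cdot)$ vanishes unless $\bfx$ lies in the bounded set $[a h, 1+b h]^d$, which sits in a fixed neighbourhood of $[0,1]^d$, say $[-c,1+c]^d$. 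It follows that
\[
\var \le \tfrac12 C_1^2 h^{-2d}\int\!\!\int_{\|\bfx-\bfy\|_\infty\le c'h} (f(\bfx)-f(\bfy))^2\,\d\bfx\,\d\bfy .
\]
Recall that $n\asymp h^{-d}$, so $h^{-2d}\asymp n^2$.

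\emph{Hölder case.} Since $f$ vanishes outside $(0,1)^d$, the extension by zero is still Hölder on $\R^d$. The case of $f$ vanishing on the boundary needs a short justification, which I would do first: two points with one outside the cube can be compared through a point on the boundary segment. Then $(f(\bfx)-f(\bfy))^2\le |f|_s^2 (c' h\sqrt d)^{2s}$ on the domain of integration. The integration domain has measure $O(h^d)$, since $\bfx$ ranges over a bounded set and $\bfy$ over a ball of volume $O(h^d)$. The total bound is therefore
\[
C h^{-2d}\cdot h^{2s}\cdot h^{d} = C h^{-d+2s}\asymp n^{1-2s/d}.
\]

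\emph{Sobolev case, $s<1$.} Here I use that $\|\bfx-\bfy\|\lesssim h$ on the domain, so that
\[
1\le (c''h)^{d+2s}\,\|\bfx-\bfy\|^{-d-2s}.
\]
Inserting this yields
\[
\var\le C h^{-2d}h^{d+2s}\int\!\!\int\frac{(f(\bfx)-f(\bfy))^2}{\|\bfx-\bfy\|^{d+2s}}\,\d\bfx\,\d\bfy = C h^{2s-d}|f|_{H^s}^2 .
\]

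\emph{Sobolev case, $s=1$.} I use the translation estimate $\int (f(\bfx+\bfz)-f(\bfx))^2\,\d\bfx\le \|\bfz\|^2\|\nabla f\|_{L^2}^2$, valid for $f\in H^1(\R^d)$. The zero extension remains in $H^1$, but this needs the trace of $f$ to vanish, which should be part of the definition of $\mathcal F_1$ as given in the appendix. Integrating the estimate over $\|\bfz\|\lesssim h$ gives $h^{d+2}\|\nabla f\|^2$, and multiplying by $h^{-2d}$ gives $h^{2-d}\asymp n^{1-2/d}$.

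\emph{Main obstacle.} The substantive point is justifying that the seminorm $|f|_s$ controls differences across the boundary of $[0,1]^d$. The kernel lives on a slightly larger set, so pairs with $\bfx$ inside and $\bfy$ outside the cube do contribute. For the $H^s$ classes, this requires that the seminorm of $f$ be taken on $\R^d$, i.e. for the zero extension. If instead the seminorm were only on $(0,1)^d$, I would need a Hardy-type inequality for functions vanishing at the boundary (valid for $s\neq 1/2$), or I would restrict to $\supp f\subset(0,1)^d$. I would first check the appendix definitions and adopt the whole-space seminorm.
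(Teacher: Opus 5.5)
Your overall strategy works, but you misread the support of $K$, and as a result your Sobolev cases, as written, prove something other than the stated theorem. You assert that $K$ lives on a neighbourhood $[ah,1+bh]^d$ of the cube, so that pairs with $\bfx\in(0,1)^d$, $\bfy\notin(0,1)^d$ contribute. But $\Ical(j)$ is \emph{defined} as the set of $\bfk$ with $\supp(\Phi_{-j,\bfk})\subset[0,1]^d$. Hence $K$ vanishes off $[0,1]^d\times[0,1]^d$, and since $\partial[0,1]^d$ is Lebesgue-null, your integral $\frac12\iint(f(\bfx)-f(\bfy))^2K(\bfx,\bfy)^2\,\d\bfx\,\d\bfy$ only involves $\bfx,\bfy\in(0,1)^d$. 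Your ``main obstacle'' does not exist.

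The remedies you propose, by contrast, change the hypotheses. $\mathcal{F}_s$ uses the seminorm on $(0,1)^d$ with no trace condition. For instance, $f=\mathbf{1}_{(0,1)^d}\in\mathcal{F}_1$ has $\|\nabla f\|_{L^2((0,1)^d)}=0$, although its zero extension is not in $H^1(\R^d)$. Switching to the whole-space seminorm of the zero extension, assuming zero trace, or assuming $\supp f\subset(0,1)^d$ would prove a weaker statement. A Hardy-type inequality cannot rescue the seminorm-only bound either: it needs the full $H^s$ norm (or a vanishing trace when $s>1/2$), and constants have zero seminorm. With the correct support this example is harmless: all points of $\Scal$ lie in $(0,1)^d$ almost surely, so $\L_{\Scal}(\mathbf{1}_{(0,1)^d})=n$ and the variance is $0$.

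Once corrected, your proof goes through. For $s<1$, inserting $1\le (c''h)^{d+2s}\|\bfx-\bfy\|^{-d-2s}$ gives $Ch^{2s-d}|f|^2_{H^s((0,1)^d)}$ directly. For $s=1$, use the translation estimate on the convex cube rather than on $\R^d$:
\[
\int_{\{\bfx:\ \bfx,\,\bfx+\mathbf{z}\in(0,1)^d\}}\big(f(\bfx+\mathbf{z})-f(\bfx)\big)^2\,\d\bfx\le\|\mathbf{z}\|^2\,\|\nabla f\|^2_{L^2((0,1)^d)} .
\]
This follows from the fundamental theorem of calculus along segments, which stay in the cube by convexity, together with density of $C^\infty(\overline{(0,1)^d})$ in $H^1((0,1)^d)$. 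No trace condition enters. The H\"older case is fine as written; in the paper the $C^{0,s}$ seminorm is already taken over all of $\R^d$.

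The corrected argument is genuinely different from the paper's. The paper writes $\var[\L_{\Scal}(f)]=\sum_{\bfk\in\Ical}\|\proj_{H^\perp}(f\Phi_{-j,\bfk})\|^2_{L^2}$. It then replaces $f$ by $f-c_\bfk$ for a constant adapted to each support, which is allowed since $\Phi_{-j,\bfk}\in H$. It bounds the projection by $\|(f-c_\bfk)\Phi_{-j,\bfk}\|^2_{L^2}$ and invokes a local fractional Poincar\'e--Wirtinger inequality (Payne--Weinberger for $s=1$) on each $S_\bfk$. A bounded-overlap count then reassembles the local seminorms.

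Your global route uses the double integral and two localization facts: $|K|\lesssim h^{-d}$, and $K(\bfx,\bfy)=0$ for $\|\bfx-\bfy\|_\infty>(b-a)h$. It dispenses with Poincar\'e inequalities entirely; the fractional case becomes a one-line comparison with the Gagliardo integrand. It also applies to any projection kernel supported in $[0,1]^d\times[0,1]^d$ with these two properties. The paper's per-basis-function route makes the H\"older case especially cheap in exchange: it uses only $\|\Phi_{-j,\bfk}\|_{L^2}=1$ and $\diam\supp\Phi_{-j,\bfk}\lesssim 2^{-j}$, with no sup bound on $\phi$ and no overlap counting.
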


It is remarkable that when $s=1$, we have a variance of the order $n^{1-2/d}$, compared to $n^{1-1/d}$ for multivariate OPEs.
We thus match the rate obtained in \cite{LeBa24Sub}, without the strong assumption that we are working on a compact complex manifold. For the Sobolev space $H^s$, our construction improves on the rate $n^{1 - \min(2s, 1)/d}$ obtained in \cite{CoMaAm20Sub} when $s > 1/2$, and matches it when $s \leq 1/2$. 
In the special case $\phi=\phi_{\mathrm{Haar}}$, we note that Theorem~\ref{thm:wavelet-dpp} provides explicit {\it variance reduction guarantees for stratified sampling} for objective functions of {\it low regularity} ($s<1$), which is potentially new and of independent interest.

\section{Applications: quadrature and coreset construction} \label{sec:applications}
Linear statistics \eqref{e:linear_statistic} abound in ML, so that a variety of results can be derived from the results of Sections~\ref{sec:discreteDPP} and \ref{sec:waveletDPP}.
As examples, we show first how to derive faster-than-Monte-Carlo quadrature from Theorem~\ref{sec:waveletDPP}, and then how to combine this fast quadrature with the continuous-to-discrete conversion in Section~\ref{sec:discreteDPP} to obtain more-accurate-than-independent coresets. 

\paragraph{Quadrature using wavelet DPPs.}

Let $\omega:\mathbb{R}^d\to\mathbb{R}$ be a $C^1$ density with\footnote{Hereafter, $A\Subset B$ means that the closure of $A$ is a compact subset of $B$.} $\supp(\omega)\Subset (0,1)^d$.
Consider approximating
\(
    \int f(\bfx)\omega(\bfx)\d\bfx,
\)
where $f\in\Fcal$ belongs to a prescribed class of integrable functions.
Let $K$ be the wavelet DPP kernel \eqref{eq:wavelet-dpp-kernel} for some $j$ and $\phi$, and $\mathcal{S}_\mathrm{c} = \{Y_1,, \dots, Y_n\}$ be the corresponding DPP.
For $f:\mathbb{R}^d\rightarrow\mathbb{R}$, it is natural to define 
\begin{equation}
    \label{e:first_MC_estimator}
    \Lcal_{\Scal_\mathrm{c}}(f) := \sum_{i=1}^n \frac{f(Y_i) \omega(Y_i)}{K(Y_i,Y_i)},
\end{equation}
noting that this is a linear statistic \eqref{e:linear_statistic} of the DPP $\Scal_\mathrm{c}$.
In particular, one can deduce from \eqref{e:density_dpp} that \eqref{e:first_MC_estimator} is an unbiased estimator of $\int f\omega \d x$.

In the special case $\phi=\phi_{\mathrm{Haar}}$, the kernel diagonal $x\mapsto K(x,x)$ is constant, and we prove in Proposition~\ref{p:wavelet-dpp-nor} that the mean squared error of \eqref{e:first_MC_estimator} scales as $O(n^{-1-2s/d})$ for $f$ in either $C^{0,s}$ or $H^s$.
In view of Proposition~\ref{p:stratified_sampling}, this was somehow expected: indeed, we know since the early work of \citep{haber1966modified} that the MSE of the stratified sampling estimator is $O(n^{-1-2/d})$ for $f: [0,1]^d\rightarrow \mathbb{R}$ continuously differentiable.
As noted in \cite{CoMaAm20Sub}, this rate outperforms the OPE rate $O(n^{-1-1/d})$ for the same kind of estimator in \cite{bardenet-aap}, though the choice of the cardinality of the Haar-wavelet-based quadrature is limited to the set $\{2^{d}, 2^{2d}, \dots\}$, making it impractical in large dimensions. 

For a general scaling function, the choice of the cardinality is more flexible, but the diagonal $x\mapsto K(x,x)$ is not necessarily constant, and we have found it more mathematically tractable to modify the linear statistic \eqref{e:first_MC_estimator} as follows.
With the notation of Section~\ref{sec:waveletDPP}, fix a design \(\{\bfx_{\bfk}:\bfk\in\Ical\}\) such that, for each \(\bfk\in\Ical\), \(\bfx_{\bfk}\) is any fixed point in \(\supp(\Phi_{-j,\bfk})\).
We consider the estimator
\begin{equation}\label{eq:adjusted-linear-stat}
    \widetilde{\Lcal}_{\Scal_\mathrm{c}}(f) 
    :=  \Lcal_{\Scal_\mathrm{c}}(f) 
    - \frac{1}{2^{dj/2}}\sum_{i=1}^n \sum_{\bfk \in \Ical} \frac{\Phi_{-j,\bfk}(Y_i)}{K(Y_i,Y_i)}  f(\bfx_\bfk)\omega(\bfx_\bfk)
    + \frac{1}{2^{dj}}\sum_{\bfk \in \Ical} f(\bfx_\bfk)\omega(\bfx_\bfk).
\end{equation}
This new linear statistic is reminiscent of control variates estimators in Monte Carlo integration \cite[Section 4.4.2]{RoCa04}.
At the price of a moderate increase of $n$ to $2n$ in the number of required evaluations of $f$ when changing from \eqref{e:first_MC_estimator} to \eqref{eq:adjusted-linear-stat}, our main quadrature result states that the MSE of the latter scales like that of the former, without any restriction on $\phi$.

\begin{theorem}[Fast quadrature for Hölder functions]
    \label{thm:wavelet-dpp-adj}
    Let $\Scal_\mathrm{c}$ be  the DPP on $(\R^d, \d\bfx)$ defined by the kernel $K$ in \eqref{eq:wavelet-dpp-kernel}. 
    Let $s\in(0,1]$.
    Then for $f \in C^{0,s}$ satisfying \eqref{eq:cpt-supp},
    \[\E \Big [ \Big |\widetilde{\Lcal}_{\Scal}(f) - \int f(\bfx)\omega(\bfx) \d \bfx) \Big |^2 \Big ]  = O(n^{-1-2s/d}).\]
\end{theorem}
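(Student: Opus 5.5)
The approach is to write $\widetilde{\Lcal}_{\Scal_\mathrm{c}}(f)$ as a linear statistic plus a deterministic constant, check that it is unbiased, and then bound its variance by the crude projection-DPP inequality $\var[\L_{\Scal}(g)]\le \int g^2(\bfx)K(\bfx,\bfx)\,\d\bfx$. The cancellation that gives the fast rate comes from the partition-of-unity property of the scaling function.

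\textbf{Setup and unbiasedness.} Write $h:=f\omega$ and $\Phi_\bfk:=\Phi_{-j,\bfk}$. Define
\[
g(\bfy):=\frac{1}{K(\bfy,\bfy)}\Big(h(\bfy)-2^{-dj/2}\sum_{\bfk\in\Ical}\Phi_\bfk(\bfy)\,h(\bfx_\bfk)\Big),
\]
and set $g=0$ where $K(\bfy,\bfy)=0$; this set is a.s.\ avoided by the DPP. Then $\widetilde{\Lcal}_{\Scal_\mathrm{c}}(f)=\L_{\Scal_\mathrm{c}}(g)+2^{-dj}\sum_\bfk h(\bfx_\bfk)$. Since the first intensity is $K(\bfy,\bfy)\,\d\bfy$, we get $\E\,\L(g)=\int h-2^{-dj/2}\sum_\bfk h(\bfx_\bfk)\int\Phi_\bfk$. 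Because $\int\phi=1$, we have $\int \Phi_{-j,\bfk}=2^{-dj/2}$, so the constant exactly compensates and the estimator is unbiased. Hence the MSE equals $\var[\L_{\Scal_\mathrm{c}}(g)]\le\int (gK)^2(\bfy)/K(\bfy,\bfy)\,\d\bfy$, using that $K$ is a projection kernel.

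\textbf{Pointwise bound on $gK$.} First, $h=f\omega$ is bounded and $C^{0,s}$ on $\R^d$, since $f$ is bounded and $C^{0,s}$ and $\omega$ is $C^1$ with compact support; so $|h|_{C^{0,s}}\lesssim \|f\|_\infty+|f|_{C^{0,s}}$. Next, for a scaling function with $\int\phi=1$ one has $\sum_{k\in\Z}\phi(x-k)=1$. This is the standard partition of unity, which I would cite or prove in the appendix from the MRA refinement equation and the Fourier characterization. Consequently $2^{-dj/2}\sum_{\bfk\in\Z^d}\Phi_\bfk\equiv1$.

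Let $\supp\phi\subset[a,b]$, so $\diam\supp\Phi_\bfk\le c\,2^{-j}$. Since $\supp\omega\Subset(0,1)^d$, for $j$ large every $\bfk\in\Z^d$ with $\Phi_\bfk(\bfy)\ne0$ and $\bfy$ within $c2^{-j}$ of $\supp\omega$ lies in $\Ical$. Two cases follow.
\begin{itemize}
\item If $\bfy$ is within $c2^{-j}$ of $\supp\omega$, the sum over $\Ical$ is the full partition of unity. Then
\[
(gK)(\bfy)=2^{-dj/2}\sum_{\bfk}\Phi_\bfk(\bfy)\big(h(\bfy)-h(\bfx_\bfk)\big),
\]
and $|h(\bfy)-h(\bfx_\bfk)|\le |h|_s (c2^{-j})^s$ whenever $\Phi_\bfk(\bfy)\ne0$.
\item Otherwise $h(\bfy)=0$ and every $h(\bfx_\bfk)$ with $\Phi_\bfk(\bfy)\neq0$ vanishes, so $gK(\bfy)=0$.
\end{itemize}
In both cases
\[
|(gK)(\bfy)|\le C\,2^{-js}\,2^{-dj/2}\sum_\bfk|\Phi_\bfk(\bfy)|.
\]

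\textbf{Handling the non-constant diagonal.} This is the step I expect to be the main obstacle, since $K(\bfy,\bfy)$ may be small where $gK\ne0$. Bounded overlap resolves it. At most $M:=(b-a+1)^d$ indices have $\Phi_\bfk(\bfy)\ne0$, so Cauchy–Schwarz gives
\[
\Big(\sum_\bfk|\Phi_\bfk(\bfy)|\Big)^2\le M\sum_\bfk\Phi_\bfk(\bfy)^2=M\,K(\bfy,\bfy).
\]
Therefore
\[
\frac{(gK)^2}{K}(\bfy)\le C^2M\,2^{-2js}2^{-dj}
\]
on a bounded set (a neighbourhood of $[0,1]^d$), and it vanishes elsewhere. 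Integrating gives $\var\le C'2^{-dj(1+2s/d)}$. Since $n\asymp2^{dj}$, this is $O(n^{-1-2s/d})$. Small $j$ are absorbed into the constant.
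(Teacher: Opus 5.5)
Your proof is correct and is essentially the paper's argument. You write $\widetilde{\Lcal}_{\Scal}(f)=\Lambda_{\Scal}\big((h-Qh)/K\big)+\int Qh$ with $Qh=2^{-dj/2}\sum_{\bfk\in\Ical}\Phi_{-j,\bfk}\,h(\bfx_\bfk)$, get unbiasedness from $\int\Phi_{-j,\bfk}=2^{-dj/2}$, bound the variance by $\int g^2K$, and control $h-Qh$ through the partition of unity, H\"older continuity and bounded overlap.

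The only variation is how you handle $1/K$: you use the pointwise bound $\big(\sum_{\bfk}|\Phi_{-j,\bfk}|\big)^2\le MK$ instead of the paper's uniform lower bound $K(\bfx,\bfx)\gtrsim n$ on the interior, which rests on the same Cauchy--Schwarz/bounded-overlap idea. Your version is slightly tidier near the edge of $\supp(f\omega)$, where $Q(f\omega)\neq 0$.

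One small correction: only $j\ge j_0$ matters for the $O(\cdot)$ claim, so your remark about absorbing small $j$ into the constant is unnecessary. It is also not quite right, since for small $j$ the partition of unity can fail inside $\supp(\omega)$ and the MSE can even be infinite.
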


\paragraph{Discretized Haar wavelet DPPs as coresets.}
Let \(\Xcal\) be a data set of \(N\) points, and let \(\Fcal\) be a family of test functions on \(\Xcal\). In ML, we think of $\Fcal$ as indexed by predictors, and evaluating $f$ at $x$ gives the loss incurred by the predictor indexed by $f$ at point $x$.
For each \(f\in\Fcal\), define
\(
    L(f) := \sum_{x\in\Xcal} f(x).
\)
When $N$ is huge, it is natural to approximate \(L(f)\) by a weighted sum of the form
\(
    L_{\Ccal}(f) := \sum_{x\in\Ccal} w(x) f(x),
\)
where \(\Ccal\subset \Xcal\) is a subset of much smaller cardinality and \(w:\Ccal\to\mathbb{R}\) is a suitable weight function.
Note that $L_{\mathcal{C}}(f) = \Lambda_{\Ccal}(fw)$ is a linear statistic of the configuration $\mathcal{C}$, i.e. is of the form \eqref{e:linear_statistic}.
Let $\varepsilon>0$, we say that \(\Ccal\) is an \(\varepsilon\)-coreset for \(\Xcal\) (with respect to \(\Fcal\)) if the relative error of $L_{\Ccal}(f)$ is less than $\varepsilon$, uniformly in $\mathcal{F}$.
For more on coresets, we refer to \cite{bachem-coreset}.

To derive a coreset guarantee from our results, we first assume that the data set is $\Xcal = \{X_1,\ldots,X_N\}$ consisting of i.i.d. samples from a density $\rho$
such that $\rho \in C^1([0,1]^d)$ and $\rho(\bfx)\ge \rho_{\min}>0$ on $[0,1]^d$.
For brevity, we assume the test functions $f\in\mathcal{F}$ belong to $C^{0,s}$, the case of $H^s$ being analogous.
For simplicity, we also restrict to the kernel $K$ in \eqref{eq:wavelet-dpp-kernel} with the \emph{Haar} scaling function $\phi_{\mathrm{Haar}}$, which allows us to manipulate the simpler linear statistic \eqref{e:first_MC_estimator} instead of \eqref{eq:adjusted-linear-stat}.

Following Section~\ref{sec:discreteDPP}, given the data set $\Xcal$, we use the Haar wavelets to define \eqref{e:Phi} and \eqref{e:L}, resulting in a DPP $\Scal_{\disc}$ with kernel $\mathbf{K}$ projecting onto the eigenvectors of \eqref{e:L}.
Note that, while Proposition~\ref{p:stratified_sampling} states that the Haar-wavelet DPP \emph{is} stratified sampling on dyadic hypercubes, the discrete DPP $\mathcal{S}_\mathrm{d}$ is a more sophisticated object that provides a {\it novel mechanism dedicated to the problem subset sampling from large datasets}.
Finally, consider the estimator for $L(f)$ defined by
\[L_{\Scal_{\disc}}(f) := \sum_{x \in \Scal_{\disc}}
\frac{f(x)}{\P(x \in \Scal_{\disc}|\Xcal)}, \]
where $\P(x \in \Scal_{\disc}|\Xcal)$ is the diagonal element of the matrix $\mathbf{K}$ indexed by $x$.

\begin{theorem} [Fast MSE for coresets] \label{thm:discrete-wavelet-coreset}
    Let $f\in C^{0,s}$ satisfy \eqref{eq:cpt-supp} and $\|f\|_{\infty} \le 1, |f|_{C^{0,s}} \le 1$. 
    Then, given the data set $\Xcal$, $L_{\Scal_{\disc}}(f)$ is an unbiased estimator of $L(f)$. 
    Moreover, 
    there exists a constant $C>0$ depending only on $d$ and $\rho$ such that: for all $\delta,\delta' \in (0,1)$, for all $N$ large enough such that $\widetilde{\mathscr{E}}(n,N,\delta,\delta') \le n^{-1-2s/d}$, with probability at least $1-\delta-\delta'$  in the data set $\Xcal$, we have
     \[  \E\Big [\Big |\frac{L_{\Scal_{\disc}}(f)-L(f)}{N}\Big |^2 \Big | \Xcal \Big ]   \le C n^{-1-2s/d},\]
     where
    \begin{equation} \label{eq:tilde-err}
    \widetilde{\mathscr{E}}(n,N,\delta,\delta') 
    = 4\sqrt{\frac{2\log (2/\delta')}{N}} + \frac{n+4}{9N^2} \log^2 \Big (\frac{n^2+1}{\delta}\Big )+ \frac{n+4}{N} \log \Big (\frac{n^2+1}{\delta}\Big ) \cdot  
    \end{equation} 
\end{theorem}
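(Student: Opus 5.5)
Unbiasedness is immediate. Conditionally on $\Xcal$, $\Scal_{\disc}$ is a projection DPP with kernel $\mathbf{K}$, so for every $x\in\Xcal$
\[
\E\Big[\sum_{y\in\Scal_{\disc}} g(y)\,\Big|\,\Xcal\Big]=\sum_{x\in\Xcal}\mathbf{K}_{xx}\,g(x).
\]
Taking $g=f/\mathbf{K}_{xx}$ gives $\E[L_{\Scal_{\disc}}(f)|\Xcal]=L(f)$. For this we need $\mathbf{K}_{xx}>0$ for every $x\in\Xcal$. In the Haar case this holds exactly on the event where every dyadic cube contains at least one data point.

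\textbf{Structure of $\mathbf{K}$.} For the variance I would not apply Theorem~\ref{thm:disc-cont-true} directly. The reason is that the test function $f/\mathbf{K}_{xx}$ depends on $\Xcal$. Instead I would exploit the explicit structure in the Haar case. The columns of $\mathbf{\Psi}$ are $2^{dj/2}\mathbf{1}_{X_i\in Q_\bfk}$, where the $Q_\bfk$ are the disjoint dyadic cubes. Hence $\mathbf{L}$ is block diagonal, with one block per cube, and each block is a rank-one matrix of the form $v_\bfk v_\bfk^*$, where $(v_\bfk)_i\propto \rho(X_i)^{-1/2}\mathbf{1}_{X_i\in Q_\bfk}$.

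It follows that $\mathbf{K}=\sum_\bfk u_\bfk u_\bfk^*$, with $u_\bfk=v_\bfk/\|v_\bfk\|$. The resulting DPP picks exactly one point in each cube, independently across cubes, with
\[
\P(X_i\text{ chosen}\,|\,\Xcal)=\frac{\rho(X_i)^{-1}}{\sum_{X_l\in Q_\bfk}\rho(X_l)^{-1}}=\mathbf{K}_{ii},
\]
which is the discrete analogue of Proposition~\ref{p:stratified_sampling}. The estimator then becomes a sum of independent per-cube terms
\[
L_{\Scal_{\disc}}(f)=\sum_\bfk \frac{f(Z_\bfk)}{\mathbf{K}_{Z_\bfk Z_\bfk}}, \qquad \text{so}\qquad \var[L_{\Scal_{\disc}}(f)|\Xcal]=\sum_\bfk \var\Big(\frac{f(Z_\bfk)}{\mathbf{K}_{Z_\bfk Z_\bfk}}\Big).
\]

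\textbf{Per-cube bound.} Let $S_\bfk=\sum_{X_l\in Q_\bfk}\rho(X_l)^{-1}$. The term $f(Z)\rho(Z)S_\bfk$ is an importance-weighted estimator of $\sum_{X_l\in Q_\bfk}f(X_l)$. Its variance is at most
\[
S_\bfk\sum_{X_l\in Q_\bfk}\rho(X_l)^{-1}\big(f(X_l)\rho(X_l)-c\big)^2
\]
for any constant $c$. I would take $c=f(\bfx_\bfk)\rho(\bfx_\bfk)$. Using $|f|_{C^{0,s}}\le 1$, $\rho\in C^1$, $\rho\ge\rho_{\min}$, and $\diam(Q_\bfk)\lesssim 2^{-j}\asymp n^{-1/d}$, we get $|f\rho-c|\lesssim n^{-s/d}$ on $Q_\bfk$.

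This gives a per-cube variance of order $S_\bfk^2\, n^{-2s/d}$. Since $S_\bfk\approx N_\bfk$ (the number of data points in $Q_\bfk$) and $N_\bfk\approx N\int_{Q_\bfk}\rho\asymp N/n$, the total is
\[
\var[L_{\Scal_{\disc}}(f)|\Xcal]\lesssim \sum_\bfk (N/n)^2\, n^{-2s/d}=N^2 n^{-1-2s/d}.
\]
Dividing by $N^2$ yields $C\,n^{-1-2s/d}$.

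\textbf{Main obstacle: concentration.} The main difficulty is making $S_\bfk\asymp N/n$ hold uniformly over all $n$ cubes, with probabilities matching $\widetilde{\mathscr{E}}$. I would reuse the matrix Bernstein argument behind Theorem~\ref{thm:disc-cont-true} (Lemma~\ref{lm:bernstein}). It gives $\|\frac1N\mathbf{\Psi}^*\bfD(\rho^{-1})\mathbf{\Psi}-\mathbf{I}_n\|\le\widetilde{\mathscr{E}}$, where the extra factor $n$ in $\widetilde{\mathscr{E}}$ comes from $K_{\max}=n/\rho_{\min}$. The diagonal entries of this matrix are exactly $\frac{n}{N}S_\bfk$, so the assumption $\widetilde{\mathscr{E}}\le n^{-1-2s/d}<1/2$ yields $S_\bfk\in[\tfrac{N}{2n},\tfrac{3N}{2n}]$ for all $\bfk$ simultaneously. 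This also guarantees that every cube is nonempty, which the unbiasedness step needed. The $\sqrt{\log(2/\delta')/N}$ term in $\widetilde{\mathscr{E}}$ is not needed for this argument; it is inherited from the general theorem, and with probability $\ge 1-\delta-\delta'$ the bound holds.
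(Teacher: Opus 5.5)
Your argument is correct in substance, but it takes a genuinely different route from the paper. The paper never exploits the block structure of $\bfL$; it only uses $K(\bfx,\bfx)=n$. It splits $L_{\Scal_{\disc}}(f)$ into two pieces. The first is $\widetilde{L}_{\Scal_{\disc}}(f)=N\L_{\Scal_{\disc}}(f\rho/K)$, a linear statistic of a \emph{fixed} function; this is how it sidesteps the data-dependence of $1/\bfK_{xx}$ that you flag. The second is the remainder $L_{\Scal_{\disc}}(f)-\widetilde{L}_{\Scal_{\disc}}(f)$. The first piece is handled by Theorem~\ref{thm:disc-cont-true} with $K_{\max}=n/\rho_{\min}$, followed by Theorem~\ref{thm:wavelet-dpp} applied to $f\rho/n$. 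The remainder is handled by Lemma~\ref{lm:n-dpp}(ii), i.e.\ $|\bfL_{ii}-\bfK_{ii}|\le\max_j|\lambda_j(\bfL)-1|\,\bfK_{ii}$, combined with Lemma~\ref{lm:bernstein}.

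You instead identify $\Scal_{\disc}$ exactly as a discrete stratified sampler: one point per nonempty dyadic cube, drawn with probability proportional to $1/\rho$, independently across cubes. The conditional variance is then an exact sum of per-cube importance-sampling variances, each bounded with a control-variate constant. Your route is more elementary and sharper. It needs neither the pipeline, nor Theorem~\ref{thm:wavelet-dpp}, nor the $\bfK_{ii}$-versus-$\bfL_{ii}$ comparison, nor the McDiarmid event, so probability $1-\delta$ suffices. It uses the hypothesis on $\widetilde{\mathscr{E}}$ only to get $\max_{\bfk}S_{\bfk}=O(N/n)$. A per-cube Bernstein bound with a union bound would already give this for $N\gtrsim n\log(n/\delta)/\rho_{\min}$. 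The paper's route is modular and illustrates the turnkey use of the pipeline, a template that does not depend on the disjoint supports specific to Haar.

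There are two slips to fix. First, unbiasedness holds for every data set, not only when all cubes are occupied. Each data point lies in its own cube $Q_{\bfk(x)}$, so $\bfK_{xx}=\rho(x)^{-1}/S_{\bfk(x)}>0$ always; empty cubes simply contribute no eigenvector.

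Second, Lemma~\ref{lm:bernstein} with $K_{\max}=n/\rho_{\min}$ gives $\|\frac{1}{N}\mathbf{\Psi}^*\bfD(\rho^{-1})\mathbf{\Psi}-\bfI_n\|_{\Frob}^2\le\frac{4n}{\rho_{\min}^2}\widetilde{\mathscr{E}}\le\frac{4}{\rho_{\min}^2}n^{-2s/d}$, not a bound by $\widetilde{\mathscr{E}}$ itself. Hence neither the window $S_{\bfk}\in[\frac{N}{2n},\frac{3N}{2n}]$ nor occupancy of every cube follows without an extra condition such as $n^{2s/d}\rho_{\min}^2\ge 16$. Compare the assumption $n^{2\alpha/d}\rho_{\min}^2\ge 8$ in Theorem~\ref{thm:discrete-wavelet-coreset-guarantee}. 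What does follow is $\max_{\bfk}S_{\bfk}\le(1+2/\rho_{\min})N/n$, which is all your variance bound uses; the extra factor is absorbed into $C=C(d,\rho)$.
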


Theorem~\ref{thm:discrete-wavelet-coreset} shows that, under mild regularity assumptions on \(f\), 
the statistic \(L_{\Scal_{\disc}}(f)\) is an unbiased estimator of \(L(f)\) with fast MSE decay. On the other hand, \cite{coreset-dpp} develops a general machinery for discrete DPPs that converts such MSE decay estimates into coreset guarantees, i.e. to a guarantee of low mean square relative error \emph{uniformly} in $\mathcal{F}$. This is the celebrated {\it Probably Approximately Correct} (PAC)-style guarantee that is well-suited to underpin practical ML procedures.
In Appendix~\ref{app:coreset-guarantee-wavelet}, we apply this machinery to the bound in Theorem~\ref{thm:discrete-wavelet-coreset} and give an explicit coreset guarantee.

\section{Experiments} \label{sec:experiments}
We first extend an experiment on coresets for $k$-means from \cite{TrBaAm19,coreset-dpp}.
Then we demonstrate DPP minibatches in SGD applied to a non-smooth loss.
In Appendix~\ref{a:additional_experiments}, we provide quadrature experiments.\footnote{
    Our code will be made publicly available upon publication.
}

\subsection{A controlled experiment on $k$-means}

We compare five methods. 
Given an input dataset $\mathcal{X}$ and an integer $m$, each method returns a random coreset of size $m$. 
The baseline is i.i.d. uniform sampling, and we implement four DPP samplers. 
The first DPP is the OPE built on the empirical measure of the dataset $\mathcal{X}$, denoted \textsc{vdm-dpp} as in the original reference \cite{TrBaAm19}. 
The other three DPP samplers are built using our continuous-to-discrete pipeline of Section~\ref{sec:discreteDPP}, namely Equation \eqref{e:LHat}.
We take $\hat{\rho}$ to be the kernel density estimator (KDE) built on $\mathcal{X}$ using the Epanechnikov kernel, with Scott's bandwidth selection method.
The three methods differ by the underlying continuous DPP Kernel they use: \textsc{ope} uses the multivariate OPE kernel of \cite{bardenet-aap}, \textsc{haar} uses the wavelet DPP kernel \eqref{eq:wavelet-dpp-kernel} with the Haar scaling function, and \textsc{db2} the wavelet DPP kernel with the popular\footnote{
 For wavelets, we use the Python package \href{https://github.com/PyWavelets/pywt}{PyWavelets} \cite{Lee2019}, MIT Licence.
}
 \emph{periodized Daubechies-2} scaling function \cite{db92, Mal08}. 
Note that for this \textsc{db2} sampler, we use the adjusted linear statistic $\tilde{\mathcal{L}}_{\mathcal{S}_d}$ in \eqref{eq:adjusted-linear-stat} to estimate the full empirical loss. 

Our performance metric is $Q_{\mathcal{S}}(0.9)$, where $Q_{\mathcal{S}}$ is the quantile function of the relative error $\sup |L_{\mathcal{S}_d}(f) - L(f)|/|L(f)$. While evaluating the supremum of the relative error, we uniformly sample without replacement $k$ elements of $\mathcal{X}$ 150 times, and we take the maximum value of the relative error among these values. Moreover, for each method and each coreset size $m$, the quantile is estimated by an empirical quantile over 150 independent coresets.

\paragraph{Results.} We first conduct our experiment using a trimodal synthetic dataset of 1024 points supported on $[-1,1]^2$. 
The dataset is drawn from a balanced Gaussian mixture on $\mathbb{R}^2$, with centers at $(\pm \sqrt{2}/2, 
\pm \sqrt{2}/2)$ and the origin. 
The second dataset is the MNIST dataset,\footnote{made available under the terms of the Creative Commons Attribution-Share Alike 3.0 license.} and reduced to $d =2$ using PCA. The experiment is performed with $k = 3$ for the trimodal dataset, and with $k = 10$ for the MNIST dataset.
Results are shown in Figure~\ref{fig:clustering-results}.
For both datasets, the DPP samplers beat uniform sampling, as expected. 
In the case of MNIST, both \textsc{haar} and \textsc{db2} improve over the OPE samplers, with \textsc{vdm-dpp} slightly outperforming \textsc{ope}. 
In the case of the trimodal dataset, the performance of \textsc{haar} regresses to that of the OPE samplers, while the improvement offered by \textsc{db2} persists. 
Overall, the experiment supports the recommendation to use wavelet-based samplers.

\begin{figure}[!htbp]
    \centering

    \begin{minipage}{0.45\textwidth}
        \centering
        \includegraphics[width=\textwidth]{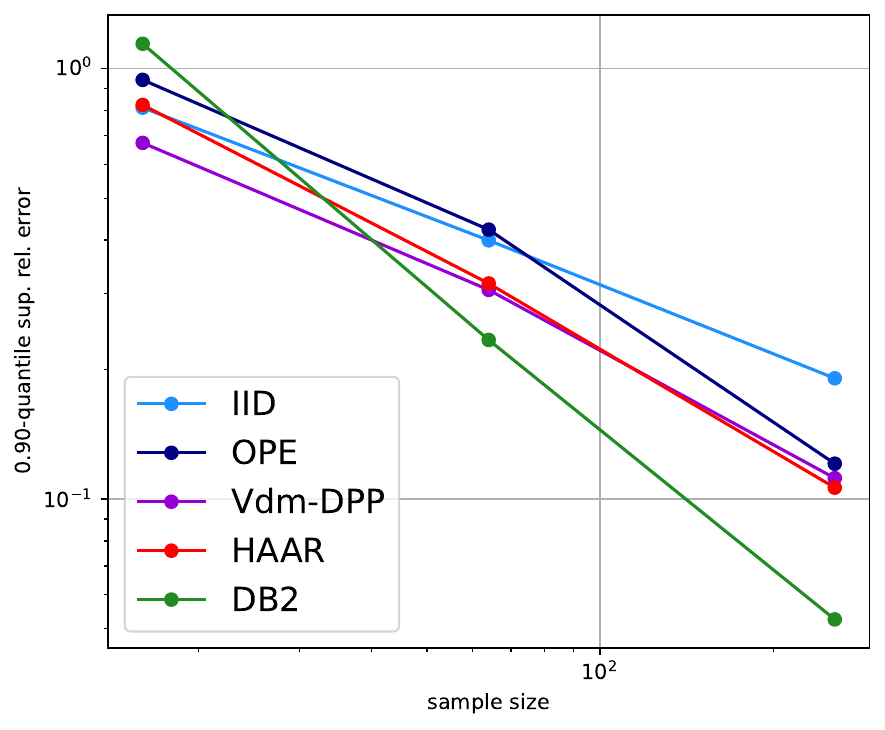}
        \caption{$Q_{\mathcal{S}}(0.9)$ vs. $m$: Trimodal Dataset}
    \end{minipage}
    \hfill
    \begin{minipage}{0.45\textwidth}
        \centering
        \includegraphics[width=\textwidth]{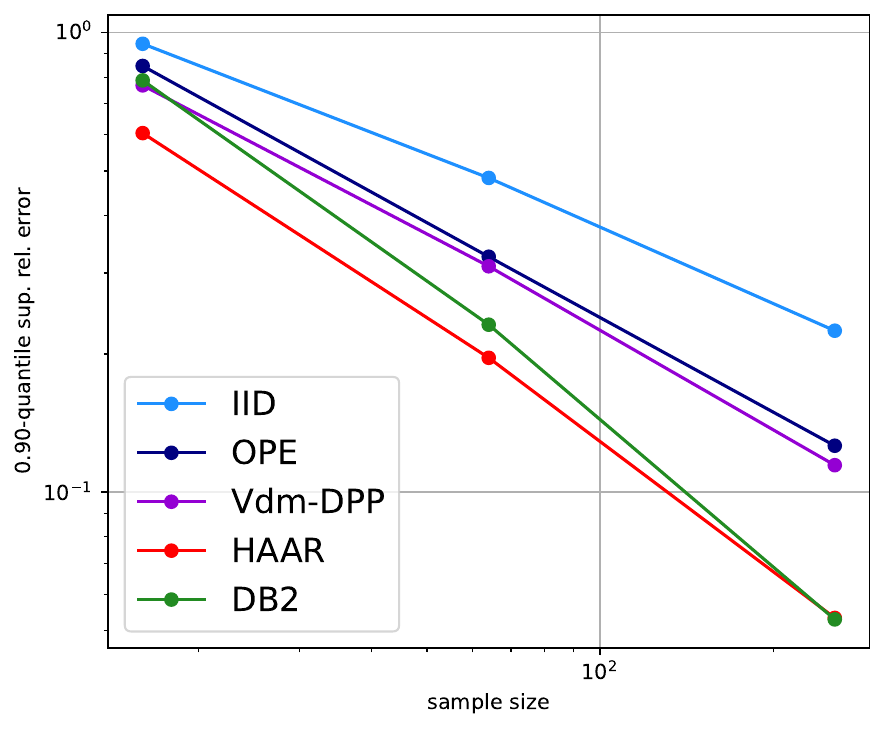}
        \caption{$Q_{\mathcal{S}}(0.9)$ vs. $m$: MNIST Dataset}
    \end{minipage}
    \label{fig:clustering-results}
\end{figure}

\subsection{Classification of MNIST with a non-smooth loss} \label{sec:MNIST}
To demonstrate the benefits of coupling wavelet-based DPPs and our discretization mechanism in minibatch sampling for non-smooth losses, we consider the \emph{Pegasos} algorithm \cite{shalev2007pegasos}.
\emph{Pegasos} is sub-gradient descent on the hinge loss, i.e. for basic SVMs. 
The sub-gradient at $\theta_t$
is given by
\begin{equation*}
    \nabla_t = \lambda \theta_t + \frac{1}{|\mathcal{S}_d|}\sum_{i \in \mathcal{S}_d} \mathbf{1}\{y_i\langle\theta_t, x_i\rangle < 1\}y_ix_i.
\end{equation*}
We compute this using minibatches $\mathcal{S}_d$
produced by four samplers: uniform sampling (\textsc{iid}), which we use as a baseline, and three DPP-based samplers built using \eqref{e:LHat} again, with $\hat\rho$ given by a Gaussian KDE on $\mathcal{X}$ with Scott's bandwidth.
The DPP samplers only vary by the kernel of the underlying continuous kernel: the kernel of the multivariate OPE is labeled as \textsc{ope} in Figure~\ref{fig:sgd-three-plots}, the wavelet-based kernel with the Haar scaling function yields \textsc{haar}, and \textsc{db2} corresponds to the periodized Daubechies-2 scaling function. 
With the \textsc{db2} sampler, we again use the adjusted linear statistic \eqref{eq:adjusted-linear-stat}.

\paragraph{Results.} We use the (balanced) dataset $\mathcal{X}$ of $N = 1000$, of 4s and 9s in MNIST, which we reduce to $d = 2$ using PCA. We use a train-test split of 70:30.
For each sampler, we form two minibatches of size $m = 16$, one from each class. This gives a total minibatch size of $m = 32$. 
Pegasos is run for $T = 200$ iterations with regularization $\lambda = 0.1$ and step size $\eta_t = 1/(\lambda t)$.
Our performance metrics are the test classification error, the full-batch sub-gradient norm $\|\nabla f(\theta_t)\|_2$, and the parameter-error $\|\theta_t - \theta^\star\|_2$. 
The reference solution $\theta^\star$ is obtained using Scikit-learn's linear SVM solver \texttt{sklearn.svm.SVC} \cite{scikit-learn}.
The performance metrics are averaged over 100 independent trials for each sampler, and the error bars are for $\pm 2$ standard errors.

The DPP samplers offer considerable improvement over the iid baseline on all metrics. 
Moreover, both wavelet DPPs also outperform \textsc{ope}, with the advantage being the most significant for \textsc{db2}.

\begin{figure*}[!htbp]
    \centering

    \begin{minipage}{0.32\textwidth}
        \centering
        \includegraphics[width=\linewidth]{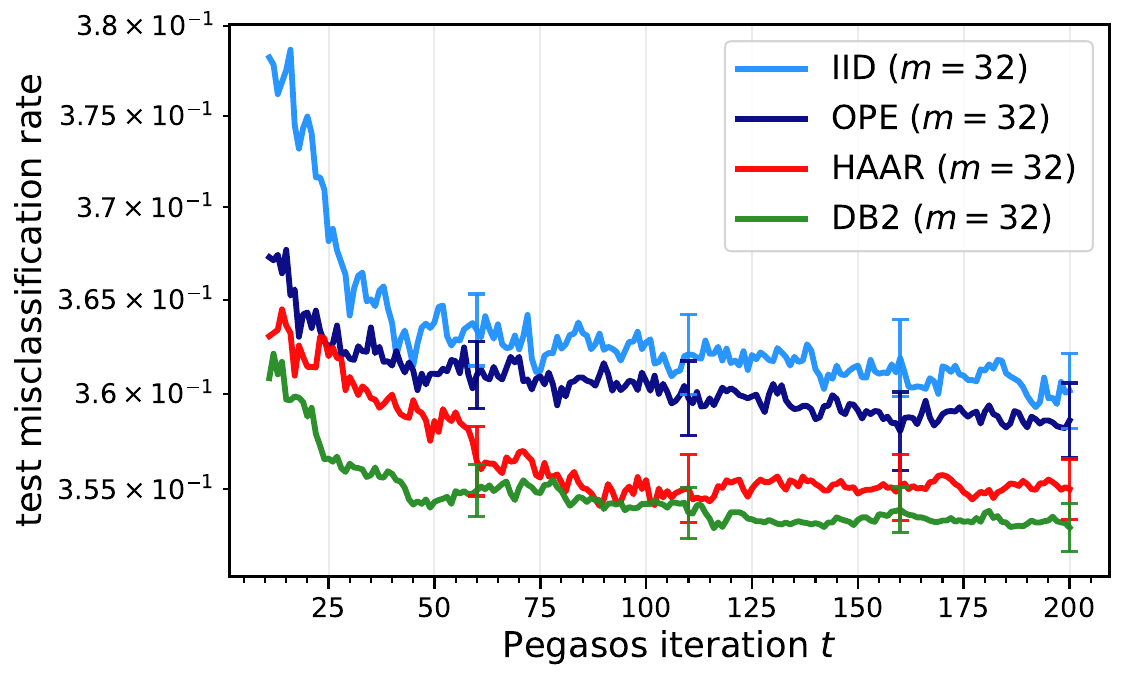}
        \caption{Test Error}
    \end{minipage}
    \hfill
    \begin{minipage}{0.32\textwidth}
        \centering
        \includegraphics[width=\linewidth]{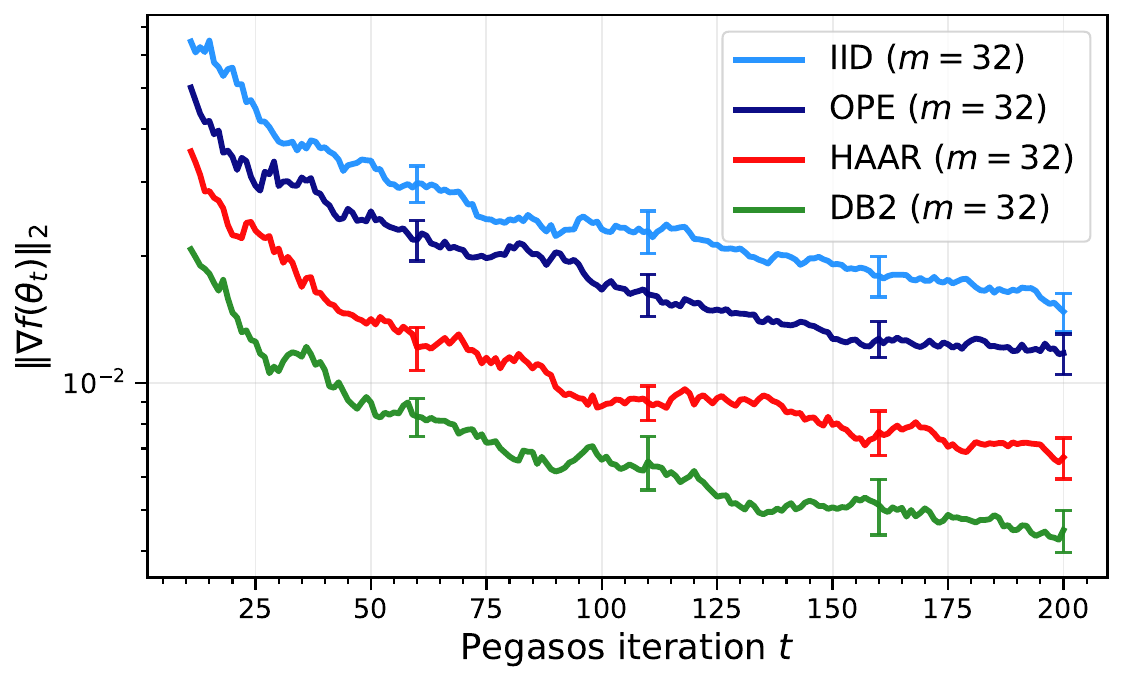}
        \caption{$\|\nabla f(\theta_t)\|_2$}
    \end{minipage}
    \hfill
    \begin{minipage}{0.32\textwidth}
        \centering
        \includegraphics[width=\linewidth]{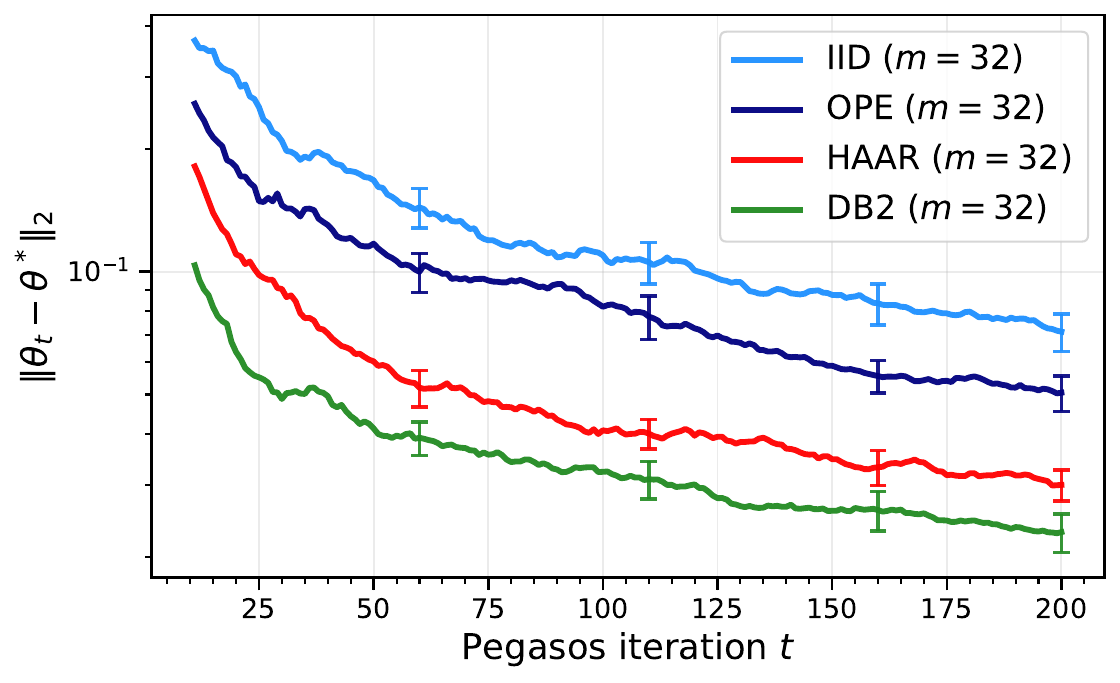}
        \caption{$\|\theta_t - \theta^*\|_2$}
    \end{minipage}
    \label{fig:sgd-three-plots}
\end{figure*}

\section{Discussion} \label{sec:discussion}
This work contributes to the programme of developing DPPs as a subsampling toolbox for ML in multiple ways. 
First, we propose new DPPs on the Euclidean space based on wavelets, with provably better accuracy guarantees than the best known rates. In this vein, we are able to address classes of highly irregular functions, with a view to ML applications with non-smooth objective functions (e.g. coresets with non-differentiable loss functions)  and rough data classes (such as financial data). Second, we introduce a general pipeline to convert such continuous DPPs, which are more amenable to proving analytical estimates, into discrete kernels, which are pertinent for subsampling tasks such as minibatch and coreset constructions.  
This very general conversion pipeline, acting on an input continuous kernel in a completely turn-key fashion, simultaneously preserves the desired variance decay and automatically reveals a low-rank decomposition of the discrete kernel, which makes sampling the corresponding DPP computationally inexpensive. 
This work also raises natural questions for further investigation -- principal among them being how much sharper one can push the variance reduction properties of determinantal samplers through more efficient kernel designs. 
An associated question is that of improving the dependence of the rates on the effective dimension.
Our results on non-smooth objective functions raise natural questions on
optimal dependencies of variance reduction rates on their regularity. 
Finally, and more specific to the wavelet-based DPP, it would also be practically  relevant to decouple the cardinality of the DPP from the dimension.

\section*{Acknowledgement \& Funding}
HST was supported by the Singapore MOE  grant A-8003576-00-00.
SG was supported in part by the NUS Dean's Chair Associate Professorship E-146-00-0037-01 and the Singapore MOE grants A-8002014-00-00 and A-8003802-00-00.
\bibliographystyle{plainnat}
\bibliography{contents/reference,contents/stats}

@article{tropp2015,
  title={An introduction to matrix concentration inequalities},
  author={Tropp, J.},
  journal={Foundations and trends{\textregistered} in machine learning},
  volume={8},
  number={1-2},
  pages={1--230},
  year={2015},
  publisher={Emerald Publishing Limited}
}

@article{mcdiarmid1989,
  title={On the method of bounded differences},
  author={McDiarmid, C.},
  journal={Surveys in combinatorics},
  volume={141},
  number={1},
  pages={148--188},
  year={1989},
  publisher={Norwich}
}

@article{acosta2017,
  title={A fractional {L}aplace equation: regularity of solutions and finite element approximations},
  author={Acosta, G. and Borthagaray, J.P.},
  journal={SIAM Journal on Numerical Analysis},
  volume={55},
  number={2},
  pages={472--495},
  year={2017},
  publisher={SIAM}
}

@inproceedings{shalev2007pegasos,
  title={Pegasos: Primal estimated sub-gradient solver for svm},
  author={Shalev-Shwartz, S. and Singer, Y. and Srebro, N.},
  booktitle={Proceedings of the 24th international conference on Machine learning},
  pages={807--814},
  year={2007}
}

@InProceedings{gillenwater19a,
  title = 	 {A Tree-Based Method for Fast Repeated Sampling of Determinantal Point Processes},
  author =       {Gillenwater, J. and Kulesza, A. and Mariet, Z. and Vassilvtiskii, S.},
  booktitle = 	 {Proceedings of the 36th International Conference on Machine Learning},
  pages = 	 {2260--2268},
  year = 	 {2019},
  editor = 	 {Chaudhuri, Kamalika and Salakhutdinov, Ruslan},
  volume = 	 {97},
  series = 	 {Proceedings of Machine Learning Research},
  month = 	 {09--15 Jun},
  publisher =    {PMLR},
  url = 	 {https://proceedings.mlr.press/v97/gillenwater19a.html}
}

@article{bardenet-aap,
author = {R. Bardenet and A. Hardy},
title = {{Monte Carlo with determinantal point processes}},
volume = {30},
journal = {The Annals of Applied Probability},
number = {1},
publisher = {Institute of Mathematical Statistics},
pages = {368 -- 417},
year = {2020},
doi = {10.1214/19-AAP1504},
URL = {https://doi.org/10.1214/19-AAP1504}
}

@article{dpp-sgd,
  title={Determinantal point processes based on orthogonal polynomials for sampling minibatches in SGD},
  author={Bardenet, R. and Ghosh, S. and Lin, M.},
  journal={Advances in Neural Information Processing Systems},
  volume={34},
  pages={16226--16237},
  year={2021}
}

@misc{jaquard2026,
      title={Statistical Consistency of Discrete-to-Continuous Limits of Determinantal Point Processes}, 
      author={H. Jaquard and N. Keriven},
      year={2026},
      eprint={2603.01670},
      archivePrefix={arXiv},
      primaryClass={math.PR},
      url={https://arxiv.org/abs/2603.01670}, 
}

@article{coreset-dpp,
  title={Small coresets via negative dependence: DPPs, linear statistics, and concentration},
  author={Bardenet, R. and Ghosh, S. and Simon-Onfroy, H. and Tran, H.S.},
  journal={Advances in Neural Information Processing Systems},
  volume={37},
  pages={84329--84349},
  year={2024}
}

@article{bachem-coreset,
  title={Practical coreset constructions for machine learning},
  author={Bachem, O. and Lucic, M. and Krause, A.},
  journal={arXiv preprint arXiv:1703.06476},
  year={2017}
}

@InProceedings{kde-jiang-icml,
  title = 	 {Uniform Convergence Rates for Kernel Density Estimation},
  author =       {H. Jiang},
  booktitle = 	 {Proceedings of the 34th International Conference on Machine Learning},
  pages = 	 {1694--1703},
  year = 	 {2017},
  editor = 	 {Precup, Doina and Teh, Yee Whye},
  volume = 	 {70},
  series = 	 {Proceedings of Machine Learning Research},
  month = 	 {06--11 Aug},
  publisher =    {PMLR},
  url = 	 {https://proceedings.mlr.press/v70/jiang17b.html}
}

@article{payne60,
	author = {Payne, L. E. and Weinberger, H. F.},
	date = {1960/01/01},
	doi = {10.1007/BF00252910},
	id = {Payne1960},
	isbn = {1432-0673},
	journal = {Archive for Rational Mechanics and Analysis},
	number = {1},
	pages = {286--292},
	title = {An optimal Poincar{\'e}inequality for convex domains},
	url = {https://doi.org/10.1007/BF00252910},
	volume = {5},
	year = {1960}}

@article{Lee2019, 
    doi = {10.21105/joss.01237}, 
    url = {https://doi.org/10.21105/joss.01237}, 
    year = {2019}, 
    publisher = {The Open Journal}, 
    volume = {4}, 
    number = {36}, 
    pages = {1237}, 
    author = {Lee, G. R. and Gommers, R. and Waselewski, F. and Wohlfahrt, K. and O'Leary, A.}, 
    title = {PyWavelets: A Python package for wavelet analysis}, 
    journal = {Journal of Open Source Software} 
    }

@article{LeBa24Sub,
	author = {Lemoine, T. and Bardenet, R.},
	journal = {arXiv preprint arXiv:2405.09203},
	title = {Monte {C}arlo methods on compact complex manifolds using {B}ergman kernels},
	year = {2024}}

@phdthesis{Mac72,
	author = {Macchi, O.},
	school = {Universit\'e Paris-Sud},
	title = {Processus ponctuels et coincidences -- Contributions {\`a} l'{\'e}tude th{\'e}orique des processus ponctuels, avec applications {\`a} l'optique statistique et aux communications optiques},
	year = {1972}}

@article{TrBaAm19,
	author = {Tremblay, N. and Barthelm\'e, S. and Amblard, P.-O.},
	journal = {Journal of Machine Learning Research},
	number = {168},
	pages = {1--70},
	title = {Determinantal Point Processes for Coresets.},
	volume = {20},
	year = {2019}}

@article{CoMaAm20Sub,
	author = {Coeurjolly, J.-F. and Mazoyer, A. and Amblard, P.-O.},
	journal = {arXiv preprint arXiv:2003.10323},
	title = {{M}onte {C}arlo integration of non-differentiable functions on $[0,1]^\iota$, $\iota=1,\dots,d$, using a single determinantal point pattern defined on $[0,1]^d$},
	year = {2020}}

@inproceedings{BeBaCh19,
	author = {Belhadji, A. and Bardenet, R. and Chainais, P.},
	booktitle = {Advances in Neural Information Processing Systems (NeurIPS)},
	journal = {\href{https://arxiv.org/abs/1906.07832}{Arxiv preprint:1906.07832}},
	title = {Kernel quadrature with determinantal point processes},
	year = {2019}}

@book{RoCa04,
	author = {Robert, C. P. and Casella, G.},
	publisher = {Springer},
	title = {Monte {C}arlo statistical methods},
	year = {2004}}

@article{haber1966modified,
  title={A modified Monte-Carlo quadrature},
  author={Haber, S.},
  journal={Mathematics of Computation},
  volume={20},
  number={95},
  pages={361--368},
  year={1966},
  publisher={JSTOR}
}

@inproceedings{barthelme2023faster,
  title={A faster sampler for discrete determinantal point processes},
  author={Barthelm\'e, Simon and Tremblay, Nicolas and Amblard, Pierre-Olivier},
  booktitle={International Conference on Artificial Intelligence and Statistics},
  pages={5582--5592},
  year={2023},
  organization={PMLR}
}

@article{Mac75,
	author = {Macchi, O.},
	journal = {Advances in Applied Probability},
	pages = {83--122},
	title = {The coincidence approach to stochastic point processes},
	volume = {7},
	year = {1975}}

@article{TBUA23,
  title={Extended L-ensembles: a new representation for Determinantal Point Processes},
  author={Tremblay, N. and Barthelm{\'e}, S. and Usevich, K. and Amblard, P.-O.},
  journal={The Annals of Applied Probability},
  volume={33},
  number={1},
  pages={613--640},
  year={2023},
  publisher={Institute of Mathematical Statistics}
}

@book{Mal08,
  title={A wavelet tour of signal processing},
  edition={Third},
  year={2008},
  author={Mallat, S.},
  publisher={Elsevier}
}

@article{HKPV06,
	author = {Hough, J. B. and Krishnapur, M. and Peres, Y. and Vir\'ag, B.},
	journal = {Probability surveys},
	title = {Determinantal processes and independence},
	year = {2006}}

@article{tran2025negative,
  title={Negative Dependence as a toolbox for machine learning: review and new developments},
  author={Tran, H.S. and Petrovic, V. and Bardenet, R. and Ghosh, S.},
  journal={arXiv preprint arXiv:2502.07285},
  year={2025}
}

@article{KuTa12,
	author = {Kulesza, A. and Taskar, B.},
	journal = {Foundations and Trends in Machine Learning},
	title = {Determinantal point processes for machine learning},
	year = {2012}}

@article{scikit-learn,
  author  = {Pedregosa, F. and Varoquaux, G. and Gramfort, A. and Michel, V.
             and Thirion, B. and Grisel, O. and Blondel, M. and Prettenhofer, P.
             and Weiss, R. and Dubourg, V. and Vanderplas, J. and Passos, A.
             and Cournapeau, D. and Brucher, M. and Perrot, M. and Duchesnay, E.},
  title   = {Scikit-learn: Machine Learning in {P}ython},
  journal = {Journal of Machine Learning Research},
  volume  = {12},
  pages   = {2825--2830},
  year    = {2011},
}

@book{db92,
author = {Daubechies, I.},
title = {Ten Lectures on Wavelets},
publisher = {Society for Industrial and Applied Mathematics},
year = {1992},
doi = {10.1137/1.9781611970104},
address = {},
edition   = {},
URL = {https://epubs.siam.org/doi/abs/10.1137/1.9781611970104},
eprint = {https://epubs.siam.org/doi/pdf/10.1137/1.9781611970104}
}

\newpage

\appendix

\section{Technical appendices to discrete-to-continuous DPPs} \label{a:appendix}
In this section, we provide additional comments as well as the detailed proofs of our theorems in Section \ref{sec:discreteDPP}. 

\subsection{Additional comments on Theorem~\ref{thm:disc-cont-true}}
\label{s:more_remarks_on_the_theorem}
\paragraph{On the order of $K_{\max}$ in \eqref{eq:Kmax}.}
    Since $\int K(x,x)\mu(\d x)=n$, we have $K_{\max}\ge n$. In many examples of interest, one in fact has $K_{\max}\asymp n$. For instance, in orthogonal polynomial ensembles, if $\supp(\rho)$ is contained in a compact subset of the bulk and the reference measure $\mu$ is regular enough so that uniform bulk Christoffel asymptotics hold there, then $\sup_{x\in \supp(\rho)} K(x,x)\asymp n$ (see \cite{bardenet-aap}, Theorem A.4); in addition, if $\rho$ is bounded below on its support, we will have $K_{\max}\asymp n$.
    Another family of examples is our novel class of wavelet DPPs introduced in Section \ref{sec:waveletDPP}. We will see that $K_{\max}\asymp n$ in these examples as well.

\paragraph{On the minibatch size.}
   As remarked earlier, our algorithm produces a minibatch \(\Scal_{\disc}\) of size \(m \le n\), rather than one that is necessarily of size exactly \(n\). However, we note that if \(N\) is sufficiently large so that $K^2_{\max} \mathscr{E}(n,N,\delta,\delta') <1$ (which is the relevant regime in practice), then on the same event appearing in the conclusion of Theorem~\ref{thm:disc-cont-true}, the minibatch has size exactly \(n\). This follows from our proof technique; see Lemma~\ref{lm:bernstein}. 

\paragraph{On explicit error terms.}
A notable feature of our result is that the error term \(\mathscr{E}(n,N,\delta,\delta')\) is completely explicit and contains no hidden constants, which is particularly useful for quantitative analysis. It also distinguishes our result from many existing works in the literature, which often provide only asymptotic guarantees or estimates with implicit constants. In particular, existing discrete-to-continuous results typically treat the regime where \(N\to\infty\) while the minibatch size \(n\) remains fixed. By contrast,
our framework allows both the data set size \(N\) and the minibatch size \(n\) to grow simultaneously, with an explicit control of the corresponding error terms.

\paragraph{On the regime $N\gg n$.}    
Theorem \ref{thm:disc-cont-true} establishes that our minibatch sampling algorithm satisfies a \emph{variance-transfer guarantee}: the conditional variance of the linear statistic associated with the minibatch is bounded above and below by that of the underlying continuous DPP, up to \emph{explicit} error terms. Therefore, our algorithm does not merely produce a repulsive minibatch; it approximately preserves the key second-order quantity (variance of linear statistics) of the underlying continuous DPPs, which is useful in sampling.
    
    In the regime most relevant for applications, where the full data size $N$ is much larger than the target minibatch size $n$, these error terms are negligible. Indeed, we observe that \(K_{\max}\asymp n\) and $\mathscr{E}(n,N,\delta,\delta') = O(N^{-1/2})$ for large $N$. Thus, the error term is of order \(O(n^2/N)\) in the expectation bound, and it is of order \(O(n^2/\sqrt{N})\) in the high probability bound. Therefore, in the large $N$ regime, the dominant term is \(\var[\L_{\Scal_{\cont}}(f)]\).
    By choosing a continuous DPP kernel $K$ that is known to enjoy variance reduction (i.e., \(\var[\L_{\Scal_{\cont}}(f)]=O(n^{1-\alpha})\) for some \(\alpha>0\)), the resulting minibatch $\Scal_{\disc}$ automatically inherits this variance reduction property. 

\subsection{Proofs of results in Section \ref{sec:discreteDPP}}

For simplicity of notation, we write the proofs in the real-valued case, namely, when $\psi_k$'s are real-valued. 
The complex-valued case follows by the same argument after systematically replacing transposes by Hermitian adjoints and inserting complex conjugates where appropriate.

We set up some notations. Let $f:E\rightarrow \R$ be a bounded test function, we define
\begin{equation} \label{eq:def-F}
    F(X_1,\ldots,X_N):= \frac{1}{N^2}\sum_{1\le i,j \le N} |f(X_i)-f(X_j)|^2 \frac{|K(X_i,X_j)|^2}{\rho(X_i)\rho(X_j)} \cdot 
\end{equation}

For a function $\varphi$ on $\Xcal$, we set 
\[ \bfD(\varphi):= \diag (\varphi(X_1),\ldots,\varphi(X_N))\]
where $\diag(a_1,\ldots,a_N)$ denotes the $N\times N$-diagonal matrix with the diagonal entries are given by $(a_1,\ldots,a_N)$. For a square matrix $\bfA$, we denote by $\|\bfA\|_{\Frob}$ the Frobenius norm of $\bfA$.

For $n,N \in \N$ and $\delta,\delta' \in (0,1)$, we define
\begin{equation} 
     \mathscr{E}(n,N,\delta,\delta') = 4\sqrt{\frac{2\log (2/\delta')}{N}}+ \frac{4}{9N^2} \log^2 \Big (\frac{n^2+1}{\delta}\Big )+ \frac{4}{N} \log \Big (\frac{n^2+1}{\delta}\Big ).
\end{equation}





\subsubsection{Some technical lemmas}

\begin{lemma}[Structure of $r$-DPP] \label{lm:n-dpp}
    Let $[N]=\{1,2,\ldots,N\}$ and $r\le N$. Let $\Ycal$ be an $L$-ensemble on $[N]$ with an $L$-kernel $\mathbf{L}$ of rank $r$
    \[ \mathbf{L} = \sum_{i=1}^r \lambda_i \mathbf{v}_i \mathbf{v}_i^{\top}, \quad \lambda_i>0,\]
    where $\{\mathbf{v}_i:1\le i \le r\}$ is orthonormal in $\R^N$. Then:
    \begin{itemize}
        \item[(i)]  The $r$-DPP defined by $\mathbf{L}$, denoted by $\Ycal_{\ndpp}$, is also a projection DPP on $[N]$ with kernel given by
    \[\mathbf{K} = \sum_{i=1}^r \mathbf{v}_i \mathbf{v}_i^{\top}. \]
        \item[(ii)]  Moreover, we have
    \[ | \bfL_{jj} - \bfK_{jj}| \le \Big( \max_{1\le i \le r} |\lambda_i-1| \Big) \bfK_{jj}, \quad \forall 1\le j \le N.\]
    \end{itemize} 
\end{lemma}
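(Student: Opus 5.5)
For part (i), I would compute the law of the $r$-DPP directly from the $L$-ensemble definition. For $T\subset[N]$ with $|T|=r$, we have $\P(\Ycal_{\ndpp}=T)\propto\det(\bfL_T)$. Write $\bfL=\bfV\Lambda\bfV^\top$, where $\bfV=[\bfv_1,\ldots,\bfv_r]\in\R^{N\times r}$ and $\Lambda=\diag(\lambda_1,\ldots,\lambda_r)$. Then $\bfL_T=\bfV_{T,:}\Lambda\bfV_{T,:}^\top$, where $\bfV_{T,:}$ is the $r\times r$ submatrix of rows indexed by $T$. This gives $\det(\bfL_T)=\big(\prod_i\lambda_i\big)\det(\bfV_{T,:})^2$. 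Similarly $\det(\bfK_T)=\det(\bfV_{T,:})^2$, so the factor $\prod_i\lambda_i$ cancels in the normalization and $\P(\Ycal_{\ndpp}=T)\propto\det(\bfK_T)$.

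It remains to identify this law with the projection DPP with kernel $\bfK$. That DPP has exactly $r=\rank\bfK$ points almost surely, and for $|T|=r$ it satisfies $\P(\Scal=T)=\P(\Scal\supset T)=\det(\bfK_T)$. By Cauchy--Binet, $\sum_{|T|=r}\det(\bfV_{T,:})^2=\det(\bfV^\top\bfV)=1$, so the normalizing constant is $1$ and the two distributions coincide.

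For part (ii), I would expand both diagonal entries in the eigenbasis: $\bfL_{jj}=\sum_i\lambda_i(\bfv_i)_j^2$ and $\bfK_{jj}=\sum_i(\bfv_i)_j^2$. Then
\[
|\bfL_{jj}-\bfK_{jj}|=\Big|\sum_i(\lambda_i-1)(\bfv_i)_j^2\Big|\le\max_i|\lambda_i-1|\sum_i(\bfv_i)_j^2=\Big(\max_i|\lambda_i-1|\Big)\bfK_{jj}.
\]

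Nothing here is a genuine obstacle. The only step needing care is the identification in (i): I need to justify that a projection DPP of rank $r$ has exactly $r$ points, which follows since $\E|\Scal|=\mathrm{tr}\,\bfK=r$ and $\P(|\Scal|>r)=0$ because all $(r+1)$-minors of $\bfK$ vanish. I also need the Cauchy--Binet normalization above.
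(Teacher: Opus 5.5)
Your proof is correct and is essentially the paper's. In both, the core step is factoring an $r\times r$ principal minor of $\bfV\bfD\bfV^{\top}$ as $\det\bfD\cdot\det(\bfV_{T,:})^2$, and your part (ii) is the same eigen-expansion the paper uses.

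The only difference is bookkeeping in (i). You apply the factorization directly to $\det\bfL_T$ and fix the normalization by Cauchy--Binet. The paper instead passes through the marginal kernel $\bfL(\bfI+\bfL)^{-1}$ of the $L$-ensemble and the bound $|\Ycal|\le r$. You also justify the final identification (a rank-$r$ projection DPP has exactly $r$ points), which the paper states in one line without proof.
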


\begin{proof}
(i)
    Let $J\subset [N]$ with $|J|=r$,  we have the following formula:
    \[ \P(\Ycal_{\ndpp} = J) = \frac{1}{\lambda_1\ldots\lambda_r} \det[\mathbf{L}+\mathbf{I}] \P(\Ycal = J) 
    = \prod_{i=1}^r \frac{\lambda_i +1}{\lambda_i} \P(\Ycal =J). \]
    Since $\Ycal$ is an $L$-ensemble on $[N]$ with the kernel $\bfL$, $\Ycal$ is also a DPP on $[N]$ with the kernel given by
    \[ \bfL(\mathbf{I}+\bfL)^{-1} = \sum_{i=1}^r \frac{\lambda_i}{\lambda_i+1}\mathbf{v}_i\mathbf{v}_i^{\top}. \]
    Thus, $|\Ycal| \le r$ almost surely. In particular, for any $J\subset [N]$ with $|J| = r$, we must have 
    \[\P(\Ycal = J) = \P(\Ycal \supset J).\] 
    Thus
    \[ \P(\Ycal_{\ndpp} = J) 
    = \Big (\prod_{i=1}^r \frac{\lambda_i+1}{\lambda_i} \Big ) 
    \cdot \det \Big [ \Big(\sum_{i=1}^r \frac{\lambda_i}{\lambda_i+1}\mathbf{v}_i\mathbf{v}_i^{\top}\Big)_{J\times J}\Big ].\]
    We now claim that for every $J \subset [N]$ with $|J|=r$
    \begin{equation} \label{eq:det-submatrix}
         \Big (\prod_{i=1}^r \frac{\lambda_i+1}{\lambda_i} \Big ) \cdot 
         \det \Big [ \Big(\sum_{i=1}^r \frac{\lambda_i}{\lambda_i+1}\mathbf{v}_i\mathbf{v}_i^{\top}\Big)_{J\times J}\Big ] =
    \det \Big [ \Big(\sum_{i=1}^r \mathbf{v}_i\mathbf{v}_i^{\top}\Big)_{J\times J}\Big ].
    \end{equation}
     Indeed, let 
    \[ \bfD := \diag \Big ( \frac{\lambda_1}{\lambda_1+1}, \ldots, \frac{\lambda_r}{\lambda_r+1}\Big ) \in \R^{r\times r} \quad,\quad \bfV = [\bfv_1,\ldots,\bfv_r] \in \R^{N\times r}.   \]
    Then
    \[ \sum_{i=1}^r \frac{\lambda_i}{\lambda_i+1} \bfv_i\bfv_i^{\top} = \bfV \bfD \bfV^{\top} = (\bfD^{1/2}\bfV^{\top})^{\top}(\bfD^{1/2}\bfV^{\top}) \in \R^{N\times N}.\]
    For $J\subset [N]$ with $|J|=r$, 
    let $(\bfD^{1/2}\bfV^{\top})_J$ denote the $r\times r$-submatrix of $\bfD^{1/2}\bfV^{\top}$ whose columns are indexed by $J$ (note that $\bfD^{1/2}\bfV^{\top} \in \R^{r\times N}$). Then it is clear that
    \[ \Big (\sum_{i=1}^r \frac{\lambda_i}{\lambda_i+1} \bfv_i\bfv_i^{\top} \Big )_{J \times J} = ((\bfD^{1/2}\bfV^{\top})_J)^{\top} (\bfD^{1/2}\bfV^{\top})_J. \]
    Taking determinants both sides gives 
    \begin{eqnarray*}
        \det \Big (\sum_{i=1}^r \frac{\lambda_i}{\lambda_i+1} \bfv_i\bfv_i^{\top} \Big )_{J \times J} &=& \Big ( \det (\bfD^{1/2}\bfV^{\top})_J \Big )^2 = \det [\bfD] \cdot\det [\bfV \bfV^{\top}]_{J\times J} \\
        &=& \Big (\prod_{i=1}^r \frac{\lambda_i}{\lambda_i+1} \Big ) \det \Big (\sum_{i=1}^r \bfv_i \bfv_i^{\top} \Big )_{J\times J}.
    \end{eqnarray*}
    as desired. The claim follows.

    Therefore, $\P(\Ycal_{\ndpp} = J) = \det [\bfK_{J\times J}]$ 
    where $\bfK:=  \sum_{i=1}^r \mathbf{v}_i \mathbf{v}_i^{\top}$ and $\bfK_{J \times J}$ is the submatrix of $\bfK$ whose rows and columns indexed by $J$. Since $\bfK$ is a projection matrix, $\Ycal_{\ndpp}$ is a projection DPP with kernel $\bfK$ as desired.

\medskip
\noindent

    (ii) 
    For each $1\le i \le r$, we write $\bfv_i = (\bfv_{i1},\ldots,\bfv_{iN})^\top$.
    Then we have
    \begin{eqnarray*}
        \bfL_{jj} = \sum_{i=1}^r \lambda_i \bfv_{ij}^2 \quad,\quad \bfK_{jj} = \sum_{i=1}^r \bfv_{ij}^2 \qquad \forall 1\le j \le N.
    \end{eqnarray*}
    This implies 
    \[ |\bfL_{jj} - \bfK_{jj}| \le \sum_{i=1}^r |\lambda_i-1| \bfv_{ij}^2 \le  \Big( \max_{1\le i \le r} |\lambda_i-1| \Big) \sum_{i=1}^r \bfv_{ij}^2 =  \Big( \max_{1\le i \le r} |\lambda_i-1| \Big) \bfK_{jj} \]
    as desired.
\end{proof}

\begin{lemma} \label{lm:mcdiarmid}
    Let $F$ be defined as in Equation \eqref{eq:def-F}. We have
    \[ \E[F(X_1,\ldots,X_N)]= \frac{2(N-1)}{N}\var[\L_{\Scal_{\cont}}(f)].\]
    Moreover, for every $t\ge 0$
    \[ \P(|F(X_1,\ldots,X_N) - \mathbb{E}[F(X_1,\ldots,X_N)]| \ge t ) \le 2\exp \Big (-\frac{Nt^2}{32\|f\|^4_{\infty}K_{\max}^4} \Big )\cdot\]
    In particular,
    for every $\delta'\in (0,1)$, with probability at least $1-\delta'$ we have
    \[ |F(X_1,\ldots,X_N)-\E[F(X_1,\ldots,X_N)]|\le  4\|f\|_{\infty}^2K_{\max}^2\sqrt{\frac{2\log (2/\delta')}{N}} \cdot\]
\end{lemma}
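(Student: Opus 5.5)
Two things have to be shown: the expectation identity, and a McDiarmid bounded-differences bound; the high-probability statement then follows by solving for $t$.

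\emph{Expectation.} The diagonal terms $i=j$ in \eqref{eq:def-F} vanish, so there are $N(N-1)$ off-diagonal pairs, each with the same law by the i.i.d.\ assumption. Using $\nu(\d x)=\rho(x)\mu(\d x)$, the factors $\rho$ cancel:
\[
\E[F]=\frac{N(N-1)}{N^2}\int\!\!\int |f(x)-f(y)|^2 |K(x,y)|^2\,\mu(\d x)\mu(\d y).
\]
Now use the standard identity for a projection DPP. Since $\E[\L(f)^2]=\int f^2K(x,x)\,\d\mu+\iint f(x)f(y)\big(K(x,x)K(y,y)-|K(x,y)|^2\big)$, we get
\[
\var[\L(f)]=\int f^2(x)K(x,x)\,\d\mu-\iint f(x)f(y)|K(x,y)|^2 .
\]
The reproducing property $\int |K(x,y)|^2\mu(\d y)=K(x,x)$ of a projection kernel then gives
\[
\iint|f(x)-f(y)|^2|K(x,y)|^2=2\int f^2K(x,x)-2\iint f(x)f(y)|K|^2=2\var[\L_{\Scal_{\cont}}(f)].
\]
Combining the two displays yields the stated constant $\frac{2(N-1)}{N}$.

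\emph{Bounded differences.} For the concentration, I will bound each summand. By Cauchy--Schwarz, $|K(x,y)|^2\le K(x,x)K(y,y)$, so by \eqref{eq:Kmax}
\[
\frac{|K(X_i,X_j)|^2}{\rho(X_i)\rho(X_j)}\le K_{\max}^2 \quad\text{for }\nu\text{-a.e. points}.
\]
Also $|f(X_i)-f(X_j)|^2\le 4\|f\|_\infty^2$, so each term lies in $[0,4\|f\|_\infty^2K_{\max}^2]$.

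Changing a single coordinate $X_k$ affects only the terms with $i=k$ or $j=k$, i.e.\ at most $2(N-1)$ ordered pairs. Each of these terms changes by at most $4\|f\|_\infty^2K_{\max}^2$, since the terms are nonnegative and bounded by that quantity. So the bounded-difference constant satisfies
\[
c_k\le \frac{2(N-1)}{N^2}\,4\|f\|_\infty^2K_{\max}^2\le \frac{8\|f\|_\infty^2K_{\max}^2}{N}.
\]
McDiarmid's inequality gives
\[
\P(|F-\E F|\ge t)\le 2\exp\!\Big(-\frac{2t^2}{\sum_k c_k^2}\Big),
\qquad \sum_k c_k^2\le \frac{64\|f\|_\infty^4K_{\max}^4}{N},
\]
so the exponent is $-Nt^2/(32\|f\|_\infty^4K_{\max}^4)$, exactly as claimed. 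Setting the right-hand side equal to $\delta'$ gives $t=4\|f\|_\infty^2K_{\max}^2\sqrt{2\log(2/\delta')/N}$.

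\emph{Main obstacle.} The delicate point is that \eqref{eq:Kmax} holds only $\nu$-a.e. McDiarmid needs a function with bounded differences on the whole product space. I will handle this by restricting to the full-measure set where the bound holds, or by replacing $F$ with a truncated version that agrees with it almost surely. I will also use the reproducing identity $\int |K(x,y)|^2\,\d\mu(y)=K(x,x)$, which holds $\mu$-a.e.\ for the chosen representatives $\psi_k$.
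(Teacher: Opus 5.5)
Your proposal is correct and follows the paper's proof essentially step for step. You compute the expectation via the $N(N-1)$ identically distributed off-diagonal pairs, the cancellation of $\rho$, and the projection-kernel identity $\iint |f(x)-f(y)|^2|K(x,y)|^2\,\mu(\d x)\mu(\d y)=2\var[\L_{\Scal_{\cont}}(f)]$; you then apply McDiarmid with bounded-difference constant $8\|f\|_\infty^2K_{\max}^2/N$, obtained from $|K(x,y)|^2\le K(x,x)K(y,y)$ and \eqref{eq:Kmax}. Your remark that \eqref{eq:Kmax} holds only $\nu$-a.e., so that McDiarmid should be applied on a full-measure product set, is a small point of rigor that the paper passes over silently.
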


\begin{proof}
   We have
\begin{eqnarray*}
    \E[F(X_1,\ldots,X_N)] &=& \frac{N-1}{N} \E\Big [|f(X_1)-f(X_2)|^2\frac{|K(X_1,X_2)|^2}{\rho(X_1)\rho(X_2)}\Big ] \\
    &=& \frac{N-1}{N}\int |f(x)-f(y)|^2|K(x,y)|^2\mu(\d x)\mu(\d y) \\
    &=& \Big(2-\frac{2}{N}\Big )\var[\L_{\Scal_{\cont}}(f)].
\end{eqnarray*}
    For every $1\le i \le N$, we have
    \begin{eqnarray*}
        &&|F(x_1,\ldots,x_i,\ldots,x_N) - F(x_1,\ldots,x_i',\ldots,x_N)| \\&\le& \frac{2}{N^2}\sum_{j=1}^N4\|f\|^2_{\infty}\max\Big (\frac{|K(x_i,x_j)|^2}{\rho(x_i)\rho(x_j)},\frac{|K(x_i',x_j)|^2}{\rho(x_i')\rho(x_j)}\Big )\le\frac{8\|f\|^2_{\infty}K_{\max}^2}{N} ,
    \end{eqnarray*}
    where we used that $|K(x,y)|^2 \le K(x,x) K(y,y)$. 
    By the McDiarmid's inequality for functionals of independent random variables satisfying the bounded differences property (see \cite{mcdiarmid1989}), we have for every $t\ge 0$
    \[ \P(|F(X_1,\ldots,X_N) - \mathbb{E}[F(X_1,\ldots,X_N)]| \ge t ) \le 2\exp \Big (-\frac{Nt^2}{32\|f\|^4_{\infty}K_{\max}^4} \Big )\cdot\]
    Let $\delta' = 2\exp \Big (-\frac{Nt^2}{32\|f\|^4_{\infty}K_{\max}^4} \Big ) $, we get 
    \[ t = \|f\|^2_{\infty} K_{\max}^2\sqrt{\frac{32\log(2/\delta')}{N}} \]
    as desired.
\end{proof}

\begin{lemma}[Structures of $\bfL$ and $\widehat{\bfL}$] \label{lm:L-structure}
    For $1\le i,j \le N$, we have
    \begin{equation*}
        \bfL_{ij} = \frac{K(X_i,X_j)}{N\sqrt{\rho(X_i) \rho(X_j)}}  \quad ,  \quad \widehat{\bfL}_{ij} = \frac{K(X_i,X_j)}{N\sqrt{\hat\rho(X_i) \hat\rho(X_j)}} \cdot
    \end{equation*}
\end{lemma}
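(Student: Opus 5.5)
This identity is a direct entrywise computation from the definitions \eqref{e:Phi}, \eqref{e:L} and \eqref{e:LHat}, and I would carry it out in three steps, treating $\bfL$ and $\widehat{\bfL}$ in parallel since they differ only by $\rho$ versus $\hat\rho$.

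\emph{Step 1: the Gram product.} Compute the $(i,j)$ entry of $\mathbf{\Psi}\mathbf{\Psi}^*$. Since the $i$-th row of $\mathbf{\Psi}$ is $(\psi_1(X_i),\ldots,\psi_n(X_i))$, we get
\[
(\mathbf{\Psi}\mathbf{\Psi}^*)_{ij}=\sum_{k=1}^n \psi_k(X_i)\overline{\psi_k(X_j)} .
\]
By the definition of the projection kernel, this equals $K(X_i,X_j)$. In the real-valued case adopted in these proofs the conjugate is absent, and in the complex case it matches the convention $K(x,y)=\sum_k\psi_k(x)\overline{\psi_k(y)}$.

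\emph{Step 2: the diagonal scalings.} Multiplying on the left by the diagonal matrix $\bfD(\rho^{-1/2})$ rescales row $i$ by $\rho(X_i)^{-1/2}$. Multiplying on the right by the same matrix rescales column $j$ by $\rho(X_j)^{-1/2}$. Together with the prefactor $1/N$ this gives the stated formula for $\bfL_{ij}$.

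\emph{Step 3: well-definedness.} I would note that $\rho(X_i)>0$ almost surely, because $\rho>0$ $\mu$-a.e. and $X_i\sim\rho\,\d\mu$. Likewise $\hat\rho$ is positive by assumption, so the square roots are well defined. The identical computation with $\hat\rho$ in place of $\rho$ yields the formula for $\widehat{\bfL}_{ij}$.

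There is no real obstacle. The only point needing care is the conjugation convention in the complex case, which the paper has already set aside by working with real-valued $\psi_k$.
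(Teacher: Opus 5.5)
Your proposal is correct and follows essentially the same route as the paper: the paper also identifies the kernel matrix $[K(X_i,X_j)]_{i,j}$ with $\mathbf{\Psi}\mathbf{\Psi}^{\top}$ and then reads off $\bfL_{ij}$ from the two diagonal scalings and the factor $1/N$. It treats $\widehat{\bfL}$ by the same computation. Your extra remark on positivity of $\rho(X_i)$ and $\hat\rho(X_i)$ is a harmless addition that the paper leaves implicit.
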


\begin{proof}
    Since $K(x,y) = \sum_{k=1}^n \psi_k(x)\psi_k(y)$, we have
    \[ 
    \begin{pmatrix}
        K(X_1,X_1) & \ldots & K(X_1,X_N) \\
        \vdots & \ddots & \vdots \\
        K(X_N,X_1) & \ldots & K(X_N,X_N)
    \end{pmatrix}
    = 
    \begin{pmatrix}
        \psi_1(X_1) & \ldots & \psi_n(X_1) \\
        \vdots &\ddots &\vdots \\
       \psi_1(X_N)  & \ldots & \psi_n(X_N)
    \end{pmatrix}
     \begin{pmatrix}
        \psi_1(X_1) & \ldots & \psi_n(X_1) \\
        \vdots &\ddots &\vdots \\
       \psi_1(X_N)  & \ldots & \psi_n(X_N)
    \end{pmatrix}^{\top}.
    \]
    Since $\bfL = N^{-1}\bfD(\rho^{-1/2}) \mathbf{\Psi}\mathbf{\Psi}^\top \bfD(\rho^{-1/2})$, we deduce that 
    \[\bfL_{ij}=N^{-1}\rho(X_i)^{-1/2}K(X_i,X_j)\rho(X_j)^{-1/2}.\] 
    A similar argument applies to $\widehat{\bfL}$. The lemma follows. 
\end{proof}

\begin{lemma}\label{lm:bernstein}
We have
\[ \E \Big \| \frac{1}{N} \mathbf{\Psi}^{\top} \bfD(\rho^{-1})\mathbf{\Psi} - \bfI_n \Big \|^2_{\Frob} \le \frac{nK_{\max}}{N} \cdot\]
Moreover, for any $t\ge 0$, we have
    \[ \P \Big (\Big \| \frac{1}{N} \mathbf{\Psi}^{\top} \bfD(\rho^{-1})\mathbf{\Psi} - \bfI_n \Big \|_{\Frob} \ge t \Big ) \le (n^2+1)\exp\Big (-\frac{Nt^2}{2K_{\max}(n+ t/3)} \Big ) \cdot \]
    In particular,
    for every $\delta\in (0,1)$, with probability at least $1-\delta$ we have
 \[ \Big \| \frac{1}{N} \mathbf{\Psi}^{\top} \bfD(\rho^{-1})\mathbf{\Psi} - \bfI_n \Big \|^2_{\Frob} \le 4K_{\max}^2 \Big ( \frac{1}{9N^2} \log^2 \Big (\frac{n^2+1}{\delta}\Big )+ \frac{1}{N} \log \Big (\frac{n^2+1}{\delta}\Big )\Big )\cdot\]
\end{lemma}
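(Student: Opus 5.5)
We write $\frac{1}{N}\mathbf{\Psi}^{\top}\bfD(\rho^{-1})\mathbf{\Psi}-\bfI_n=\sum_{i=1}^N \mathbf{Z}_i$ with
\[\mathbf{Z}_i:=\frac{1}{N}\Big(\frac{\bm{\psi}(X_i)\bm{\psi}(X_i)^{\top}}{\rho(X_i)}-\bfI_n\Big),\qquad \bm{\psi}(x):=(\psi_1(x),\ldots,\psi_n(x))^{\top}.\]
These are i.i.d. symmetric $n\times n$ matrices. They are centered, because $\E[\psi_k(X)\psi_l(X)/\rho(X)]=\int\psi_k\psi_l\,\d\mu=\delta_{kl}$. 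For the expectation bound, independence and centering give
\[\E\Big\|\sum_i\mathbf{Z}_i\Big\|_{\Frob}^2=N\,\E\|\mathbf{Z}_1\|_{\Frob}^2 .\]
Next we compute
\[\E\Big\|\frac{\bm{\psi}\bm{\psi}^\top}{\rho}-\bfI\Big\|_{\Frob}^2=\E\frac{\|\bm{\psi}(X)\|^4}{\rho(X)^2}-n .\]
Since $\|\bm{\psi}(x)\|^2=K(x,x)$ and \eqref{eq:Kmax} gives $K(x,x)/\rho(x)\le K_{\max}$, we get
\[\E\frac{K(X,X)^2}{\rho(X)^2}\le K_{\max}\,\E\frac{K(X,X)}{\rho(X)}=K_{\max}\int K(x,x)\,\d\mu=nK_{\max}.\]
Hence $\E\|\sum_i\mathbf{Z}_i\|_{\Frob}^2\le nK_{\max}/N$.

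For the tail bound I would embed the problem into a matrix Bernstein inequality, which explains the dimension factor $n^2+1$. Let $\mathrm{vec}(\mathbf{Z}_i)\in\R^{n^2}$, viewed as an $n^2\times 1$ rectangular matrix. Its operator norm equals $\|\mathbf{Z}_i\|_{\Frob}$. The rectangular matrix Bernstein inequality of \cite{tropp2015} has dimensional factor $d_1+d_2=n^2+1$. We need two inputs.

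\emph{Uniform bound.} $\|\mathbf{Z}_i\|_{\Frob}\le \frac1N\max\big(K(X_i,X_i)/\rho(X_i),1\big)\le K_{\max}/N$. This holds because $\bm{\psi}\bm{\psi}^\top/\rho-\bfI$ has eigenvalues $K/\rho-1$ and $-1$ (the latter with multiplicity $n-1$). The bound on $\|\cdot\|_{\Frob}$ is cruder, so I would use $\|\cdot\|_{\Frob}^2\le (K/\rho-1)^2+(n-1)$ where needed. Most simply, I would take $L=K_{\max}/N$ and absorb the $n$-dependence into the variance term.

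\emph{Variance proxy.} The two variance statistics are $\sum_i\E\|\mathrm{vec}\mathbf{Z}_i\|^2$ (a scalar) and $\|\sum_i\E\,\mathrm{vec}\mathbf{Z}_i\mathrm{vec}\mathbf{Z}_i^\top\|_{\op}$. Both are at most $N\E\|\mathbf{Z}_1\|_{\Frob}^2\le nK_{\max}/N$ by the computation above. Bernstein then yields
\[\P\Big(\Big\|\sum_i\mathbf{Z}_i\Big\|_{\Frob}\ge t\Big)\le (n^2+1)\exp\Big(-\frac{t^2/2}{nK_{\max}/N+K_{\max}t/(3N)}\Big),\]
which is exactly the stated form.

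The high-probability statement follows by solving $\delta=(n^2+1)\exp(\cdot)$ for $t$. Set $\ell=\log((n^2+1)/\delta)$. The quadratic $Nt^2=2K_{\max}\ell(n+t/3)$ gives
\[t\le \frac{2K_{\max}\ell}{3N}+\sqrt{\frac{2K_{\max}n\ell}{N}} .\]
Squaring with $(a+b)^2\le2a^2+2b^2$ and using $n\le K_{\max}$ (from $\int K(x,x)\,\d\mu=n$ and \eqref{eq:Kmax}), we obtain
\[t^2\le 4K_{\max}^2\Big(\frac{\ell^2}{9N^2}+\frac{\ell}{N}\Big),\]
matching the claim.

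The main obstacle is the uniform bound $L$. A careless bound gives $\|\mathbf{Z}_i\|_{\Frob}\le (K_{\max}+\sqrt n)/N$ rather than $K_{\max}/N$. I would handle this by noting that $\|\mathbf{Z}_i\|_{\Frob}^2=\frac1{N^2}\big((K/\rho-1)^2+n-1\big)\le \frac1{N^2}K_{\max}^2$ whenever $K_{\max}\ge n$, which holds here. So $L=K_{\max}/N$ indeed suffices.
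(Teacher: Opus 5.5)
Your approach is the paper's argument. You decompose into i.i.d. centered terms $Y_iY_i^\top-\bfI_n$, compute $\|Y_iY_i^\top-\bfI_n\|_{\Frob}^2=(K/\rho)^2-2K/\rho+n$ to get both the almost-sure bound $K_{\max}$ (via $K_{\max}\ge n$) and the second moment $nK_{\max}$, and use independence for the expectation. You then apply Tropp's rectangular Bernstein inequality to the $n^2\times 1$ vectorization and solve for $t$. Your explanation of the factor $n^2+1=d_1+d_2$ is actually cleaner than the paper's.

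There is, however, an arithmetic slip in the last step. From your intermediate bound $t\le \frac{2K_{\max}\ell}{3N}+\sqrt{2K_{\max}n\ell/N}$, the inequality $(x+y)^2\le 2x^2+2y^2$ gives
\[
t^2\le \frac{8K_{\max}^2\ell^2}{9N^2}+\frac{4K_{\max}n\ell}{N}.
\]
This is a factor $2$ worse than claimed in the $\ell^2/N^2$ term. That intermediate bound is genuinely too weak to recover the stated constant in general; for example, it fails when $K_{\max}=n$ and $\ell>9N/8$.

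The fix is not to split the square root. Set $a=K_{\max}\ell/(3N)$ and $b=2K_{\max}n\ell/N$, so the exact root is $t=a+\sqrt{a^2+b}$. Then
\[
t^2\le 2a^2+2(a^2+b)=4a^2+2b,
\]
which after $n\le K_{\max}$ is exactly $4K_{\max}^2\bigl(\ell^2/(9N^2)+\ell/N\bigr)$, as in the paper.

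Also drop your first claim $\|\mathbf{Z}_i\|_{\Frob}\le\frac1N\max(K/\rho,1)$. That is the operator-norm bound, not the Frobenius one. Your later Frobenius computation, which gives $K_{\max}/N$ using $K_{\max}\ge n$, is the correct one.
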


\begin{proof}
We define the following random vectors 
\begin{equation*}
    Y_i = \Big (   \psi_1(X_i)/\sqrt{\rho(X_i)}, \ldots ,\psi_n(X_i)/\sqrt{\rho(X_i)}\Big )^{\top} , \quad 1\le i \le N.
\end{equation*}
Then it is clear that
    \begin{equation*}
        \frac{1}{N} \mathbf{\Psi}^{\top} \bfD(\rho^{-1})\mathbf{\Psi} = \frac{1}{N} \sum_{i=1}^N Y_iY_i^\top.
    \end{equation*}

    We have
  $ \frac{1}{N} \mathbf{\Psi}^{\top} \bfD(\rho^{-1})\mathbf{\Psi} - \bfI_n\ = \frac{1}{N}\sum_{i=1}^N(Y_iY_i^{\top} - \bfI_n)$. It is easy to see that
  \begin{eqnarray*}
      \|Y_iY_i^{\top} - \bfI_n\|^2_{\Frob} &=& \sum_{k=1}^n \Big (\frac{\psi_k(X_i)^2}{\rho(X_i)}-1 \Big )^2 
      + 2\sum_{1\le k <j\le n}\frac{\psi_k(X_i)^2}{\rho(X_i)}\frac{\psi_j(X_i)^2}{\rho(X_i)} \\
      &=& \Big (\sum_{k=1}^n \frac{\psi_k(X_i)^2}{\rho(X_i)} \Big )^2 
      - 2\sum_{k=1}^n \frac{\psi_k(X_i)^2}{\rho(X_i)} + n \\
      &=& \Big (\frac{K(X_i,X_i)}{\rho(X_i)}\Big )^2 - 2 \frac{K(X_i,X_i)}{\rho(X_i)} + n .
  \end{eqnarray*}
  This particularly implies $\|Y_iY_i^{\top}-\bfI_n\|_{\Frob} \le K_{\max}$ almost surely for every $1\le i \le N$. Moreover, we have
  \[ \mathbb{E} \|Y_iY_i^{\top} - \bfI_n\|^2_{\Frob} \le nK_{\max}.\]
   Since $Y_1,\ldots,Y_N$ are independent, we have
   \[ \E \Big \|  \frac{1}{N} \mathbf{\Psi}^{\top} \bfD(\rho^{-1})\mathbf{\Psi} - \bfI_n \Big \|_{\Frob}^2 = \frac{1}{N^2} \sum_{i=1}^N \mathbb{E} \|Y_iY_i^{\top} - \bfI_n\|^2_{\Frob} \le \frac{n K_{\max}}{N} \]
  as desired.

Now observe that
  $ \frac{1}{N} \mathbf{\Psi}^{\top} \bfD(\rho^{-1})\mathbf{\Psi} - \bfI_n\ = \frac{1}{N}\sum_{i=1}^N(Y_iY_i^{\top} - \bfI_n)$ and $Y_i$'s are independent. By treating $Y_iY_i^\top - \bfI_n$ as a vector in $\R^{n^2+1}$ and 
  applying Bernstein inequality for vector-valued random variables (\cite{tropp2015}, Theorem 1.6.2), we have for every $t\ge 0$
  \[ \P \Big (\Big \| \frac{1}{N} \mathbf{\Psi}^{\top} \bfD(\rho^{-1})\mathbf{\Psi} - \bfI_n \Big \|_{\Frob} \ge t \Big ) \le (n^2+1)\exp\Big (-\frac{Nt^2}{2K_{\max}(n+ t/3)} \Big ) \cdot\]
   Let 
   \[ \delta = (n^2+1)\exp\Big (-\frac{Nt^2}{2K_{\max}(n+ t/3)} \Big ) \cdot\]
   Solving for $t\ge 0$ we have
   \[ t = \frac{K_{\max}}{3N} \log \Big (\frac{n^2+1}{\delta}\Big ) + \sqrt{\frac{K_{\max}^2}{9N^2} \log^2 \Big (\frac{n^2+1}{\delta}\Big )+ \frac{2nK_{\max}}{N} \log \Big (\frac{n^2+1}{\delta}\Big )} \cdot\]
   Using $(a+b)^2 \le 2 a^2+2b^2$, we have
   \begin{eqnarray*}
        t^2 &\le&  2\frac{K_{\max}^2}{9N^2} \log^2 \Big (\frac{n^2+1}{\delta}\Big )+ 2\frac{K_{\max}^2}{9N^2} \log^2 \Big (\frac{n^2+1}{\delta}\Big )+ \frac{4nK_{\max}}{N} \log \Big (\frac{n^2+1}{\delta}\Big )  \\
         &=&  \frac{4K_{\max}^2}{9N^2} \log^2 \Big (\frac{n^2+1}{\delta}\Big )+ \frac{4nK_{\max}}{N} \log \Big (\frac{n^2+1}{\delta}\Big )\\
          &\le& 4K_{\max}^2 \Big ( \frac{1}{9N^2} \log^2 \Big (\frac{n^2+1}{\delta}\Big )+ \frac{1}{N} \log \Big (\frac{n^2+1}{\delta}\Big )\Big )
   \end{eqnarray*}
   where we used $K_{\max} \ge n$ in the last inequality.
   Therefore, with probability at least $1-\delta$ we have
   \[ \Big \| \frac{1}{N} \mathbf{\Psi}^{\top} \bfD(\rho^{-1})\mathbf{\Psi} - \bfI_n \Big \|^2_{\Frob} \le 4K_{\max}^2 \Big ( \frac{1}{9N^2} \log^2 \Big (\frac{n^2+1}{\delta}\Big )+ \frac{1}{N} \log \Big (\frac{n^2+1}{\delta}\Big )\Big )\cdot\]
\end{proof}

A direct consequence of Lemma \ref{lm:mcdiarmid} and Lemma \ref{lm:bernstein} is the following:
\begin{corollary} \label{cor:joint-prob}
    With probability  at least $1-\delta-\delta'$, we have 
    \begin{eqnarray*}
       |F-\E[F]| + \|f\|^2_{\infty} \Big \| \frac{1}{N} \mathbf{\Psi}^{\top} \bfD(\rho^{-1})\mathbf{\Psi} - \bfI_n \Big \|^2_{\Frob}\le  \|f\|^2_{\infty} K^2_{\max}\mathscr{E}(n,N,\delta,\delta')
    \end{eqnarray*}
    where $\mathscr{E}(n,N,\delta,\delta')$ defined as in \eqref{eq:def-Error}, namely
    \[ \mathscr{E}(n,N,\delta,\delta') = 4\sqrt{\frac{2\log (2/\delta')}{N}}+ \frac{4}{9N^2} \log^2 \Big (\frac{n^2+1}{\delta}\Big )+ \frac{4}{N} \log \Big (\frac{n^2+1}{\delta}\Big ) \cdot\]
\end{corollary}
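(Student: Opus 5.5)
This corollary is a union bound combining the two high-probability statements of Lemma~\ref{lm:mcdiarmid} and Lemma~\ref{lm:bernstein}. Let $A$ be the event in Lemma~\ref{lm:mcdiarmid} on which
\[
|F-\E[F]|\le 4\|f\|_{\infty}^2K_{\max}^2\sqrt{2\log(2/\delta')/N};
\]
this event has probability at least $1-\delta'$. Let $B$ be the event in Lemma~\ref{lm:bernstein} on which
\[
\Big\|\tfrac{1}{N}\mathbf{\Psi}^{\top}\bfD(\rho^{-1})\mathbf{\Psi}-\bfI_n\Big\|_{\Frob}^2\le 4K_{\max}^2\Big(\tfrac{1}{9N^2}\log^2\big(\tfrac{n^2+1}{\delta}\big)+\tfrac{1}{N}\log\big(\tfrac{n^2+1}{\delta}\big)\Big);
\]
this event has probability at least $1-\delta$. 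The union bound gives $\P(A\cap B)\ge 1-\delta-\delta'$. Both events are measurable with respect to the dataset $\Xcal$, so no independence between them is needed.

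On $A\cap B$, multiply the second inequality by $\|f\|_{\infty}^2$ and add it to the first. This yields
\[
\|f\|_{\infty}^2K_{\max}^2\Big(4\sqrt{\tfrac{2\log(2/\delta')}{N}}+\tfrac{4}{9N^2}\log^2\big(\tfrac{n^2+1}{\delta}\big)+\tfrac{4}{N}\log\big(\tfrac{n^2+1}{\delta}\big)\Big),
\]
which is exactly $\|f\|_{\infty}^2K_{\max}^2\,\mathscr{E}(n,N,\delta,\delta')$ by \eqref{eq:def-Error}.

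There is no real obstacle. The only point to check is that the constants match: the factor $4$ in the McDiarmid term and the factor $4$ distributed over both logarithmic Bernstein terms combine exactly into the definition of $\mathscr{E}$, and the lemmas are already stated in precisely those forms.
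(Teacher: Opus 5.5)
Your proof is correct and is the same argument the paper intends: the paper states the corollary as a direct consequence of Lemma~\ref{lm:mcdiarmid} and Lemma~\ref{lm:bernstein}, which means a union bound over the two events, then adding the McDiarmid bound to $\|f\|_\infty^2$ times the Bernstein bound. Your check that the constants combine exactly into $\|f\|_\infty^2 K_{\max}^2\,\mathscr{E}(n,N,\delta,\delta')$ is also right.
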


\subsubsection{Proof of Theorem \ref{thm:disc-cont-true}}

\begin{lemma} \label{lm:asbound}
    Given the data set $\Xcal$, for every bounded test function $f: E \rightarrow \R$, we have
    \[  \var[\L_{\Scal_{\disc}}(f)|\Xcal]  \le  F(X_1,\ldots,X_N) + 4 \|f\|^2_{\infty} \Big \|
    \frac{1}{N} \mathbf{\Psi}^{\top} \bfD(\rho^{-1})\mathbf{\Psi} - \bfI_n \Big \|^2_{\Frob}  \]
    and
     \[\var[\L_{\Scal_{\disc}}(f)|\Xcal] \ge \frac{1}{4}F(X_1,\ldots,X_N) - 2\|f\|^2_{\infty} \Big \|
    \frac{1}{N} \mathbf{\Psi}^{\top} \bfD(\rho^{-1})\mathbf{\Psi} - \bfI_n \Big \|^2_{\Frob}.\]
\end{lemma}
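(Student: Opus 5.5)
The plan is to express everything through the Frobenius-type quadratic form that gives the variance of a projection DPP, and then compare the projection kernel $\mathbf{K}$ of $\Scal_{\disc}$ entrywise with $\mathbf{L}$. Recall that for a projection DPP on $\Xcal$ with (real symmetric) projection kernel $\mathbf{K}$,
\[
\var[\L_{\Scal_{\disc}}(f)\mid\Xcal] = \sum_i f(X_i)^2\mathbf{K}_{ii} - \sum_{i,j} f(X_i)f(X_j)\mathbf{K}_{ij}^2 = \frac12\sum_{i,j}|f(X_i)-f(X_j)|^2\,\mathbf{K}_{ij}^2 .
\]
The second equality uses $\sum_j \mathbf{K}_{ij}^2=\mathbf{K}_{ii}$, which holds because $\mathbf{K}^2=\mathbf{K}$. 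By Lemma~\ref{lm:L-structure}, $\mathbf{L}_{ij}^2 = K(X_i,X_j)^2/(N^2\rho(X_i)\rho(X_j))$. Hence the same quadratic form with $\mathbf{L}$ in place of $\mathbf{K}$ gives exactly
\[
\sum_{i,j}|f(X_i)-f(X_j)|^2\mathbf{L}_{ij}^2 = F(X_1,\ldots,X_N).
\]

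Next I would control $\|\mathbf{K}-\mathbf{L}\|_{\Frob}$. Write $\mathbf{B}:=N^{-1/2}\bfD(\rho^{-1/2})\mathbf{\Psi}$, so that $\mathbf{L}=\mathbf{B}\mathbf{B}^\top$ and $\mathbf{G}:=\mathbf{B}^\top\mathbf{B}=\frac1N\mathbf{\Psi}^\top\bfD(\rho^{-1})\mathbf{\Psi}$. The nonzero eigenvalues $\lambda_1,\dots,\lambda_m$ of $\mathbf{L}$ coincide with those of $\mathbf{G}$, and $\mathbf{G}$ has $n-m$ further zero eigenvalues. By Lemma~\ref{lm:n-dpp}, $\mathbf{K}=\sum_{i\le m}\bfv_i\bfv_i^\top$ with the same eigenvectors as $\mathbf{L}$. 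Therefore
\[
\|\mathbf{K}-\mathbf{L}\|_{\Frob}^2=\sum_{i=1}^m(1-\lambda_i)^2\le \sum_{i=1}^m(1-\lambda_i)^2+(n-m)=\|\mathbf{G}-\bfI_n\|_{\Frob}^2 .
\]
The inequality holds because each zero eigenvalue of $\mathbf{G}$ contributes $(0-1)^2=1$ to $\|\mathbf{G}-\bfI_n\|_{\Frob}^2$. This spectral comparison is the only step requiring care, specifically the rank-deficient case $m<n$. The observation above handles it without assuming invertibility.

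Finally I would apply the elementary bounds $\mathbf{K}_{ij}^2\le 2\mathbf{L}_{ij}^2+2(\mathbf{K}-\mathbf{L})_{ij}^2$ and $\mathbf{K}_{ij}^2\ge \frac12\mathbf{L}_{ij}^2-(\mathbf{K}-\mathbf{L})_{ij}^2$, together with $|f(X_i)-f(X_j)|^2\le 4\|f\|_\infty^2$.

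For the upper bound:
\[
\var[\L_{\Scal_{\disc}}(f)\mid\Xcal]\le F+4\|f\|_\infty^2\|\mathbf{K}-\mathbf{L}\|_{\Frob}^2\le F+4\|f\|_\infty^2\|\mathbf{G}-\bfI_n\|_{\Frob}^2 .
\]

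For the lower bound:
\[
\var[\L_{\Scal_{\disc}}(f)\mid\Xcal]\ge \tfrac14 F-2\|f\|_\infty^2\|\mathbf{K}-\mathbf{L}\|_{\Frob}^2\ge \tfrac14F-2\|f\|_\infty^2\|\mathbf{G}-\bfI_n\|_{\Frob}^2 .
\]

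These are exactly the two claimed inequalities.
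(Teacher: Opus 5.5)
Your proposal is correct and follows the paper's proof essentially step by step. Both use the projection-DPP variance formula $\frac12\sum_{i,j}|f(X_i)-f(X_j)|^2\bfK_{ij}^2$, the entrywise bounds $\bfK_{ij}^2\le 2\bfL_{ij}^2+2(\bfK_{ij}-\bfL_{ij})^2$ and $\bfK_{ij}^2\ge\frac12\bfL_{ij}^2-(\bfK_{ij}-\bfL_{ij})^2$, and the spectral comparison $\|\bfK-\bfL\|_{\Frob}^2=\sum_{\lambda(\bfL)>0}(\lambda(\bfL)-1)^2\le\|\mathbf{G}-\bfI_n\|_{\Frob}^2$. Your explicit count of the $n-m$ zero eigenvalues of $\mathbf{G}$, each contributing $1$, justifies this last inequality in the rank-deficient case, which the paper only asserts.
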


\begin{proof}
By Lemma \ref{lm:n-dpp}, $\Scal_{\disc}$ is a DPP on $\Xcal$ with a projection kernel $\bfK=[\bfK_{ij}]$. Then we have 
\[  \var[\L_{\Scal_{\disc}}(f)|\Xcal] = \frac{1}{2} \sum_{i,j \in [N]} |f(X_i)-f(X_j)|^2 \bfK_{ij}^2 .\]
Using the inequality $b^2 \le 2a^2+2(a-b)^2$, we have
\[ \bfK_{ij}^2 \le 2 \bfL_{ij}^2 + 2 (\bfK_{ij} - \bfL_{ij})^2, \quad\forall i,j.\]
Thus
\begin{eqnarray*} 
    \var[\L_{\Scal_{\disc}}(f)|\Xcal] 
    &\le& \sum_{i,j \in [N]} |f(X_i)-f(X_j)|^2 \bfL_{ij}^2 + \sum_{i,j \in [N]} |f(X_i)-f(X_j)|^2 (\bfK_{ij} - \bfL_{ij})^2 \notag \\
     &\le&  \sum_{i,j \in [N]} |f(X_i)-f(X_j)|^2 \bfL_{ij}^2 + 4 \|f\|^2_{\infty} \|\bfL - \bfK\|^2_{\Frob} \notag \\
     &=& \frac{1}{N^2}\sum_{i,j \in [N]} |f(X_i)-f(X_j)|^2 \frac{K(X_i,X_j)^2}{\rho(X_i)\rho(X_j)} + 4 \|f\|^2_{\infty} \|\bfL - \bfK\|^2_{\Frob} \notag\\
     &=& F(X_1,\ldots,X_N) + 4 \|f\|^2_{\infty} \|\bfL - \bfK\|^2_{\Frob} .
\end{eqnarray*}
By Lemma \ref{lm:n-dpp}, $\bfK$ shares the same eigenvectors with $\bfL$. Thus
\[ \|\bfL-\bfK\|^2_{\Frob}= \sum_{\lambda(\bfL)>0} (\lambda(\bfL)-1)^2.\]
Moreover, $\bfL = N^{-1}\bfD(\rho^{-1/2}) \mathbf{\Psi} \mathbf{\Psi}^\top \bfD(\rho^{-1/2})$ implies nonzero eigenvalues of $\bfL$ are the same as nonzero eigenvalues of 
$N^{-1} \mathbf{\Psi}^{\top}\bfD(\rho^{-1})\mathbf{\Psi}$.
Hence
\[ \sum_{\lambda(\bfL)>0} (\lambda(\bfL)-1)^2 \le  
\Big \|
    \frac{1}{N} \mathbf{\Psi}^{\top} \bfD(\rho^{-1})\mathbf{\Psi} - \bfI_n \Big \|^2_{\Frob}.\]
Therefore
\begin{equation*}
     \var[\L_{\Scal_{\disc}}(f)|\Xcal]  \le F(X_1,\ldots,X_N) + 4 \|f\|_{\infty}^2 \Big \|
    \frac{1}{N} \mathbf{\Psi}^{\top} \bfD(\rho^{-1})\mathbf{\Psi} - \bfI_n \Big \|^2_{\Frob}.
\end{equation*}
Using the inequality $b^2 \ge \frac{1}{2}a^2-(a-b)^2$, we 
have
\[ \bfK_{ij}^2 \ge \frac{1}{2} \bfL_{ij}^2 - (\bfK_{ij} - \bfL_{ij})^2, \quad\forall i,j.\]
By a similar argument, we can show that  
\begin{equation*}
    \var[\L_{\Scal_{\disc}}(f)|\Xcal] \ge \frac{1}{4}F(X_1,\ldots,X_N) - 2\|f\|^2_{\infty} \Big \|
    \frac{1}{N} \mathbf{\Psi}^{\top} \bfD(\rho^{-1})\mathbf{\Psi} - \bfI_n \Big \|^2_{\Frob}.
\end{equation*}
\end{proof}

\begin{proposition}[Expectation bound] \label{prop:expected-bound-discrete-DPP}
     We have 
    \begin{eqnarray*}
     \E\Big[\var[\L_{\Scal_{\disc}}(f)|\Xcal] \Big] 
     &\le& 2\var[\L_{\Scal_{\cont}}(f)] + \frac{4\|f\|^2_{\infty}nK_{\max}}{N},  \quad \text{and}\\
     \E\Big[\var[\L_{\Scal_{\disc}}(f)|\Xcal] \Big]  &\ge& \frac{N-1}{2N}\var[\L_{\Scal_{\cont}}(f)] -\frac{2\|f\|^2_{\infty}nK_{\max}}{N}.
\end{eqnarray*}
\end{proposition}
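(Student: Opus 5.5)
The plan is to take expectations over the data set $\Xcal$ in the almost-sure bounds of Lemma~\ref{lm:asbound}, and then evaluate each resulting term with the first-moment statements of Lemmas~\ref{lm:mcdiarmid} and~\ref{lm:bernstein}. No concentration is needed here; only the expectation parts of those lemmas enter.

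For the upper bound, Lemma~\ref{lm:asbound} gives, for every realization of $\Xcal$,
\[
\var[\L_{\Scal_{\disc}}(f)|\Xcal] \le F(X_1,\ldots,X_N) + 4\|f\|_\infty^2 \Big\|\tfrac1N \mathbf{\Psi}^\top\bfD(\rho^{-1})\mathbf{\Psi}-\bfI_n\Big\|_{\Frob}^2 .
\]
Both terms on the right are nonnegative and bounded: $F\le 4\|f\|_\infty^2K_{\max}^2$ by \eqref{eq:Kmax} and $|K(x,y)|^2\le K(x,x)K(y,y)$, and the Frobenius term is bounded because each $\|Y_iY_i^\top-\bfI_n\|_{\Frob}\le K_{\max}$. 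So expectations are well defined and we may take them termwise. Lemma~\ref{lm:mcdiarmid} gives $\E[F]=\frac{2(N-1)}{N}\var[\L_{\Scal_{\cont}}(f)]\le 2\var[\L_{\Scal_{\cont}}(f)]$. The first claim of Lemma~\ref{lm:bernstein} gives $\E\|\cdot\|_{\Frob}^2\le nK_{\max}/N$. Adding these yields the stated upper bound $2\var[\L_{\Scal_{\cont}}(f)]+4\|f\|_\infty^2nK_{\max}/N$.

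For the lower bound, take expectations in the second inequality of Lemma~\ref{lm:asbound}:
\[
\E\big[\var[\L_{\Scal_{\disc}}(f)|\Xcal]\big]\ge \tfrac14\E[F]-2\|f\|_\infty^2\,\E\Big\|\tfrac1N \mathbf{\Psi}^\top\bfD(\rho^{-1})\mathbf{\Psi}-\bfI_n\Big\|_{\Frob}^2 .
\]
Substituting $\tfrac14\E[F]=\frac{N-1}{2N}\var[\L_{\Scal_{\cont}}(f)]$ exactly, and bounding the negative term by $-2\|f\|_\infty^2 nK_{\max}/N$, gives the claimed lower bound.

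I see no real obstacle, since everything reduces to results already established. The only points to check are measurability and integrability of $\var[\L_{\Scal_{\disc}}(f)|\Xcal]$ as a function of $\Xcal$. It is a bounded measurable function of the $X_i$, because the projection kernel $\bfK$ depends measurably on $\mathbf{\Psi}$ and the entries of $\bfK$ lie in $[-1,1]$. Boundedness also covers the null event where $m<n$, so the a.s. inequalities of Lemma~\ref{lm:asbound} integrate without issue.
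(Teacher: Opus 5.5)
Your proposal is correct and follows the paper's argument exactly: integrate the pointwise bounds of Lemma~\ref{lm:asbound} over $\Xcal$, then insert $\E[F]=\frac{2(N-1)}{N}\var[\L_{\Scal_{\cont}}(f)]$ from Lemma~\ref{lm:mcdiarmid} and the first-moment bound of Lemma~\ref{lm:bernstein}. One correction to your closing remark: the event $\{m<n\}$ is not null in general (for the Haar kernel, an empty dyadic cell already produces a zero column of $\mathbf{\Psi}$), but this does not affect the argument, because Lemma~\ref{lm:asbound} holds for every realization of $\Xcal$; the $n-m$ zero eigenvalues only add nonnegative terms to $\|\frac1N\mathbf{\Psi}^\top\bfD(\rho^{-1})\mathbf{\Psi}-\bfI_n\|_{\Frob}^2$.
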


\begin{proof}
     By Lemma \ref{lm:asbound}, we have 
     \[  \var[\L_{\Scal_{\disc}}(f)|\Xcal]  \le  F(X_1,\ldots,X_N) + 4 \|f\|^2_{\infty} \Big \|
    \frac{1}{N} \mathbf{\Psi}^{\top} \bfD(\rho^{-1})\mathbf{\Psi} - \bfI_n \Big \|^2_{\Frob}.  \]
    By Lemma \ref{lm:mcdiarmid}, we have 
    \[ \E [F(X_1,\ldots,X_N)] = \frac{2(N-1)}{N} \var [\L_{\Scal_{\cont}}(f)] \le 2 \var [\L_{\Scal_{\cont}}(f)].\]
    By Lemma \ref{lm:bernstein}, we have 
    \[ \E \Big \| \frac{1}{N} \mathbf{\Psi}^{\top} \bfD(\rho^{-1})\mathbf{\Psi} - \bfI_n \Big \|^2_{\Frob} \le \frac{nK_{\max}}{N} \cdot\]
    Combining these ingredients, we deduce that
    \[  \E\Big [\var[\L_{\Scal_{\disc}}(f)|\Xcal] \Big ] \le  2 \var [\L_{\Scal_{\cont}}(f)] + 4 \|f\|^2_{\infty} \frac{nK_{\max}}{N} \cdot  \]
    Similarly, we get the lower bound for the expectation.
\end{proof}

Now we are ready to complete the proof of Theorem \ref{thm:disc-cont-true}.

\begin{proof}[End of proof of Theorem \ref{thm:disc-cont-true}]
   By Lemma \ref{lm:asbound}
    \begin{eqnarray*}
         \var[\L_{\Scal_{\disc}}(f)|\Xcal]  &\le&  F(X_1,\ldots,X_N) + 4 \|f\|^2_{\infty} \Big \|
    \frac{1}{N} \mathbf{\Psi}^{\top} \bfD(\rho^{-1})\mathbf{\Psi} - \bfI_n \Big \|^2_{\Frob} \\
    &\le& \E[F] + |F-\E[F]| + 4 \|f\|^2_{\infty} \Big \|
    \frac{1}{N} \mathbf{\Psi}^{\top} \bfD(\rho^{-1})\mathbf{\Psi} - \bfI_n \Big \|^2_{\Frob} \\
    &\le& 2\var[\L_{\Scal_{\cont}}(f)] + 4\Big (|F-\E[F]|+ \|f\|^2_{\infty} \Big \|
    \frac{1}{N} \mathbf{\Psi}^{\top} \bfD(\rho^{-1})\mathbf{\Psi} - \bfI_n \Big \|^2_{\Frob} \Big )
    \end{eqnarray*}  
    and 
     \begin{eqnarray*}
         \var[\L_{\Scal_{\disc}}(f)|\Xcal]  &\ge&  \frac{1}{4}F(X_1,\ldots,X_N) - 2\|f\|^2_{\infty} \Big \|
    \frac{1}{N} \mathbf{\Psi}^{\top} \bfD(\rho^{-1})\mathbf{\Psi} - \bfI_n \Big \|^2_{\Frob} \\
    &\ge& \frac{1}{4}\E[F] - \frac{1}{4}|F-\E[F]| - 2 \|f\|^2_{\infty} \Big \|
    \frac{1}{N} \mathbf{\Psi}^{\top} \bfD(\rho^{-1})\mathbf{\Psi} - \bfI_n \Big \|^2_{\Frob} \\
    &\ge& \frac{N-1}{2N}\var[\L_{\Scal_{\cont}}(f)] - 2\Big (|F-\E[F]|+ \|f\|^2_{\infty} \Big \|
    \frac{1}{N} \mathbf{\Psi}^{\top} \bfD(\rho^{-1})\mathbf{\Psi} - \bfI_n \Big \|^2_{\Frob} \Big ).
    \end{eqnarray*}  
    Applying Corollary \ref{cor:joint-prob}, the theorem follows.
\end{proof}

\subsubsection{Proof of Theorem \ref{thm:disc-cont-approx}}

\begin{lemma} \label{lm:perturb-rho}
    If $\max_{1\le i \le N} \Big |\frac{\rho(X_i)}{\hat{\rho}(X_i)}-1 \Big| \le \varepsilon$, we have
    \[ \Big \|\frac{1}{N}\mathbf{\Psi}^{\top} \bfD(\hat\rho^{-1}) \mathbf{\Psi} - \bfI_n \Big \|_{\Frob} ^2  \le 2\varepsilon^2 n +8 \Big \|\frac{1}{N}\mathbf{\Psi}^{\top} \bfD(\rho^{-1}) \mathbf{\Psi} - \bfI_n \Big \|_{\Frob}^2.\]
\end{lemma}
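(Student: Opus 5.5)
Throughout, write $\mathbf{A}:=\frac1N\mathbf{\Psi}^\top\bfD(\rho^{-1})\mathbf{\Psi}$ and $\widehat{\mathbf{A}}:=\frac1N\mathbf{\Psi}^\top\bfD(\hat\rho^{-1})\mathbf{\Psi}$. The plan is to write $\widehat{\mathbf{A}}$ as a perturbation of $\mathbf{A}$ by a diagonal reweighting whose entries are at most $\varepsilon$ in absolute value. Then we bound the Frobenius norm of the perturbation by $\varepsilon\|\mathbf{A}\|_{\Frob}$, and finish with the triangle inequality and $(a+b)^2\le 2a^2+2b^2$.

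\emph{Step 1 (factorization).} Set $\mathbf{B}:=N^{-1/2}\bfD(\rho^{-1/2})\mathbf{\Psi}\in\R^{N\times n}$, so that $\mathbf{B}^\top\mathbf{B}=\mathbf{A}$. Define the diagonal matrix $\mathbf{E}:=\diag(e_1,\ldots,e_N)$ with $e_i:=\rho(X_i)/\hat\rho(X_i)-1$; by hypothesis $|e_i|\le\varepsilon$. Since $\hat\rho^{-1}=\rho^{-1}\cdot(\rho/\hat\rho)$ pointwise, we get
\[
\widehat{\mathbf{A}}=\mathbf{B}^\top(\bfI+\mathbf{E})\mathbf{B}=\mathbf{A}+\mathbf{B}^\top\mathbf{E}\mathbf{B},
\qquad\text{hence}\qquad
\widehat{\mathbf{A}}-\bfI_n=(\mathbf{A}-\bfI_n)+\mathbf{B}^\top\mathbf{E}\mathbf{B}.
\]

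\emph{Step 2 (the key bound).} This is the only nontrivial step: we need $\|\mathbf{B}^\top\mathbf{E}\mathbf{B}\|_{\Frob}\le\varepsilon\|\mathbf{A}\|_{\Frob}$ with constant $1$. A naive split $\mathbf{E}=\mathbf{E}_+-\mathbf{E}_-$ with PSD monotonicity would lose a factor $2$. Instead, let $\mathbf{M}:=\mathbf{B}\mathbf{B}^\top$, which is symmetric with $\|\mathbf{M}\|_{\Frob}=\|\mathbf{B}^\top\mathbf{B}\|_{\Frob}=\|\mathbf{A}\|_{\Frob}$. Then
\[
\|\mathbf{B}^\top\mathbf{E}\mathbf{B}\|_{\Frob}^2=\mathrm{tr}(\mathbf{E}\mathbf{M}\mathbf{E}\mathbf{M})=\sum_{i,j}e_ie_j\mathbf{M}_{ij}^2\le\varepsilon^2\sum_{i,j}\mathbf{M}_{ij}^2=\varepsilon^2\|\mathbf{A}\|_{\Frob}^2 .
\]

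\emph{Step 3 (conclusion).} Since $\|\bfI_n\|_{\Frob}=\sqrt n$, we have $\|\mathbf{A}\|_{\Frob}\le\|\mathbf{A}-\bfI_n\|_{\Frob}+\sqrt n$. Combining this with Steps 1 and 2 and using $\varepsilon<1$,
\[
\|\widehat{\mathbf{A}}-\bfI_n\|_{\Frob}\le(1+\varepsilon)\|\mathbf{A}-\bfI_n\|_{\Frob}+\varepsilon\sqrt n\le 2\|\mathbf{A}-\bfI_n\|_{\Frob}+\varepsilon\sqrt n .
\]
Squaring and applying $(a+b)^2\le2a^2+2b^2$ gives exactly $\|\widehat{\mathbf{A}}-\bfI_n\|_{\Frob}^2\le 8\|\mathbf{A}-\bfI_n\|_{\Frob}^2+2\varepsilon^2n$, which is the claimed bound.
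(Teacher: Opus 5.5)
Your proof is correct and follows the paper's argument step for step. The paper also factors the perturbation through $\bfD(\rho^{-1/2})\bigl(\bfD(\rho/\hat\rho)-\bfI_N\bigr)\bfD(\rho^{-1/2})$, bounds it by $\varepsilon\|\mathbf{A}\|_{\Frob}$, and then uses $\|\mathbf{A}\|_{\Frob}\le\sqrt n+\|\mathbf{A}-\bfI_n\|_{\Frob}$, $\varepsilon<1$, and $(a+b)^2\le 2a^2+2b^2$. The one difference is in the key bound $\|\mathbf{B}^\top\mathbf{E}\mathbf{B}\|_{\Frob}\le\varepsilon\|\mathbf{B}^\top\mathbf{B}\|_{\Frob}$: the paper merely asserts it as an operator-norm-times-Frobenius-norm estimate, which is not a direct consequence of submultiplicativity, whereas your identity $\mathrm{tr}(\mathbf{E}\mathbf{M}\mathbf{E}\mathbf{M})=\sum_{i,j}e_ie_j\mathbf{M}_{ij}^2$ actually proves it.
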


\begin{proof}
    We have
\begin{eqnarray*}
   &&\Big \|\frac{1}{N}\mathbf{\Psi}^{\top} \bfD(\hat\rho^{-1}) \mathbf{\Psi} - \bfI_n \Big \|_{\Frob}\\
    &\le& \Big \|\frac{1}{N}\mathbf{\Psi}^{\top} \Big (\bfD(\hat\rho^{-1}) - \bfD(\rho^{-1})\Big )\mathbf{\Psi} \Big \|_{\Frob} 
    + \Big \|\frac{1}{N}\mathbf{\Psi}^{\top} \bfD(\rho^{-1}) \mathbf{\Psi} - \bfI_n \Big \|_{\Frob} \\
    &=& \frac{1}{N}\Big \|\mathbf{\Psi}^{\top}\bfD(\rho^{-1/2}) \Big (\bfD(\rho/\hat\rho) - \bfI_N\Big )\bfD(\rho^{-1/2})\mathbf{\Psi} \Big \|_{\Frob} 
    + \Big \|\frac{1}{N}\mathbf{\Psi}^{\top} \bfD(\rho^{-1}) \mathbf{\Psi} - \bfI_n \Big \|_{\Frob}.
\end{eqnarray*}

Now note that 
\begin{eqnarray*}
    \Big \|\mathbf{\Psi}^{\top}\bfD(\rho^{-1/2}) \Big (\bfD(\rho/\hat\rho) - \bfI_N\Big )\bfD(\rho^{-1/2})\mathbf{\Psi} \Big \|_{\Frob} \le \|\bfD(\rho/\hat\rho) - \bfI_N\|_{\op}\|\mathbf{\Psi}^{\top}\bfD(\rho^{-1})\mathbf{\Psi}\|_{\Frob}.
\end{eqnarray*}

Since $|\rho(X_i)/\hat\rho(X_i)-1|\le \varepsilon$ for every $i$, one has $\|\bfD(\rho/\hat\rho) - \bfI_N\|_{\op} \le \varepsilon$. Therefore
\begin{eqnarray*}
  \Big \|\frac{1}{N}\mathbf{\Psi}^{\top} \bfD(\hat\rho^{-1}) \mathbf{\Psi} - \bfI_n \Big \|_{\Frob}  &\le& \varepsilon \Big \|\frac{1}{N} \mathbf{\Psi}^{\top}\bfD(\rho^{-1})\mathbf{\Psi}\Big \|_{\Frob} +  \Big \|\frac{1}{N}\mathbf{\Psi}^{\top} \bfD(\rho^{-1}) \mathbf{\Psi} - \bfI_n \Big \|_{\Frob}\\
  &\le& \varepsilon \|\bfI_n\|_{\Frob} +  (1+\varepsilon)\Big \|\frac{1}{N}\mathbf{\Psi}^{\top} \bfD(\rho^{-1}) \mathbf{\Psi} - \bfI_n \Big \|_{\Frob} \\
  &=& \varepsilon \sqrt{n} +  (1+\varepsilon)\Big \|\frac{1}{N}\mathbf{\Psi}^{\top} \bfD(\rho^{-1}) \mathbf{\Psi} - \bfI_n \Big \|_{\Frob}\\
  &\le& \varepsilon \sqrt{n} +  2\Big \|\frac{1}{N}\mathbf{\Psi}^{\top} \bfD(\rho^{-1}) \mathbf{\Psi} - \bfI_n \Big \|_{\Frob},
\end{eqnarray*}
where we used the fact that $\varepsilon <1$.
Using the elementary inequality $(a+b)^2 \le 2a^2 + 2b^2$, we have 
\[ \Big \|\frac{1}{N}\mathbf{\Psi}^{\top} \bfD(\hat\rho^{-1}) \mathbf{\Psi} - \bfI_n \Big \|_{\Frob} ^2  \le 2\varepsilon^2 n +8 \Big \|\frac{1}{N}\mathbf{\Psi}^{\top} \bfD(\rho^{-1}) \mathbf{\Psi} - \bfI_n \Big \|_{\Frob}^2\]
as desired.
\end{proof}

\begin{lemma} \label{lm:asbound-approx}
     If $\max_{1\le i \le N} \Big |\frac{\rho(X_i)}{\hat{\rho}(X_i)}-1 \Big| \le \varepsilon$, we have
    \[\var[\L_{\widehat{\Scal}_{\disc}}(f)|\Xcal] \le (1+\varepsilon)^2F(X_1,\ldots,X_N)+ 32 \|f\|^2_{\infty}  \Big \|\frac{1}{N}\mathbf{\Psi}^{\top} \bfD(\rho^{-1}) \mathbf{\Psi} - \bfI_n \Big \|^2_{\Frob} + 8 \|f\|^2_{\infty}\varepsilon^2n, \]
and 
\[\var[\L_{\widehat{\Scal}_{\disc}}(f)|\Xcal] \ge \frac{(1-\varepsilon)^2}{4}F(X_1,\ldots,X_N) - 16 \|f\|^2_{\infty}  \Big \|\frac{1}{N}\mathbf{\Psi}^{\top} \bfD(\rho^{-1}) \mathbf{\Psi} - \bfI_n \Big \|^2_{\Frob} - 4 \|f\|^2_{\infty}\varepsilon^2n. \]
\end{lemma}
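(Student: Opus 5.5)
I will mirror the proof of Lemma~\ref{lm:asbound}, with $\widehat{\bfL}$ in place of $\bfL$ and an extra step comparing $\widehat{\bfL}$ to $\bfL$ entrywise. By Lemma~\ref{lm:n-dpp}, $\widehat{\Scal}_{\disc}$ is a projection DPP on $\Xcal$. Its kernel $\widehat{\bfK}$ is the projection onto the range of $\widehat{\bfL}$. Hence
\[
\var[\L_{\widehat{\Scal}_{\disc}}(f)|\Xcal] = \frac12\sum_{i,j}|f(X_i)-f(X_j)|^2\widehat{\bfK}_{ij}^2 .
\]
For the upper bound I write $\widehat{\bfK}_{ij}^2\le 2\widehat{\bfL}_{ij}^2+2(\widehat{\bfK}_{ij}-\widehat{\bfL}_{ij})^2$. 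Bounding $|f(X_i)-f(X_j)|^2\le 4\|f\|_\infty^2$ in the second sum gives
\[
\var[\L_{\widehat{\Scal}_{\disc}}(f)|\Xcal]\le \sum_{i,j}|f(X_i)-f(X_j)|^2\widehat{\bfL}_{ij}^2+4\|f\|_\infty^2\|\widehat{\bfL}-\widehat{\bfK}\|_{\Frob}^2 .
\]
For the lower bound I use $\widehat{\bfK}_{ij}^2\ge \tfrac12\widehat{\bfL}_{ij}^2-(\widehat{\bfK}_{ij}-\widehat{\bfL}_{ij})^2$. This gives the analogous inequality, with factor $\tfrac14$ on the first sum and $2\|f\|_\infty^2$ on the Frobenius term.

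Next I compare the first sum to $F$ using Lemma~\ref{lm:L-structure}. That lemma gives
\[
\widehat{\bfL}_{ij}^2=\bfL_{ij}^2\,\frac{\rho(X_i)}{\hat\rho(X_i)}\frac{\rho(X_j)}{\hat\rho(X_j)} .
\]
Under the hypothesis, each ratio lies in $[1-\varepsilon,1+\varepsilon]$, so the product lies in $[(1-\varepsilon)^2,(1+\varepsilon)^2]$. Therefore
\[
(1-\varepsilon)^2F\le \sum_{i,j}|f(X_i)-f(X_j)|^2\widehat{\bfL}_{ij}^2\le(1+\varepsilon)^2F .
\]
This produces the leading terms $(1+\varepsilon)^2F$ in the upper bound and $\tfrac{(1-\varepsilon)^2}{4}F$ in the lower bound.

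For the remainder, $\widehat{\bfK}$ and $\widehat{\bfL}$ share eigenvectors (Lemma~\ref{lm:n-dpp}), so $\|\widehat{\bfL}-\widehat{\bfK}\|_{\Frob}^2=\sum_{\lambda(\widehat{\bfL})>0}(\lambda-1)^2$. Writing $\widehat{\bfL}=\mathbf{B}\mathbf{B}^\top$ with $\mathbf{B}=N^{-1/2}\bfD(\hat\rho^{-1/2})\mathbf{\Psi}$, the nonzero eigenvalues of $\widehat{\bfL}$ are those of $N^{-1}\mathbf{\Psi}^\top\bfD(\hat\rho^{-1})\mathbf{\Psi}$. The sum over nonzero eigenvalues is therefore at most $\|N^{-1}\mathbf{\Psi}^\top\bfD(\hat\rho^{-1})\mathbf{\Psi}-\bfI_n\|_{\Frob}^2$; any zero eigenvalues only add nonnegative terms $1$ to the right side. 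Lemma~\ref{lm:perturb-rho} bounds this by $2\varepsilon^2n+8\|N^{-1}\mathbf{\Psi}^\top\bfD(\rho^{-1})\mathbf{\Psi}-\bfI_n\|_{\Frob}^2$.

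Multiplying by $4\|f\|_\infty^2$ gives the constants $8\|f\|_\infty^2\varepsilon^2n$ and $32\|f\|_\infty^2\|\cdot\|_{\Frob}^2$. Multiplying by $2\|f\|_\infty^2$ gives $4\|f\|_\infty^2\varepsilon^2n$ and $16\|f\|_\infty^2\|\cdot\|_{\Frob}^2$. These match the statement exactly.

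The only real care point is this eigenvalue step, the same subtlety already handled in Lemma~\ref{lm:asbound}. Everything else is bookkeeping of constants.
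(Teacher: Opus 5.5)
Your proposal is correct and follows essentially the same route as the paper. The paper also reruns the argument of Lemma~\ref{lm:asbound} with $\widehat{\bfL}$, sandwiches $K(X_i,X_j)^2/(\hat\rho(X_i)\hat\rho(X_j))$ between $(1-\varepsilon)^2$ and $(1+\varepsilon)^2$ times $K(X_i,X_j)^2/(\rho(X_i)\rho(X_j))$, and then applies Lemma~\ref{lm:perturb-rho} to obtain the same constants; like the paper, you implicitly need $\varepsilon<1$, which that lemma already assumes.
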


\begin{proof}
By similar arguments as in Lemma \ref{lm:asbound}, we have 
\[ \var[\L_{\widehat{\Scal}_{\disc}}(f)|\Xcal] \le \frac{1}{N^2} \sum_{i,j \in [N]} |f(X_i)-f(X_j)|^2 \frac{|K(X_i,X_j)|^2}{\hat\rho(X_i)\hat\rho(X_j)} + 4 \|f\|^2_{\infty} \Big \|\frac{1}{N}\mathbf{\Psi}^{\top} \bfD(\hat\rho^{-1}) \mathbf{\Psi} - \bfI_n \Big \|^2_{\Frob},\]
and 
\[ \var[\L_{\widehat{\Scal}_{\disc}}(f)|\Xcal] \ge \frac{1}{4N^2} \sum_{i,j \in [N]} |f(X_i)-f(X_j)|^2 \frac{|K(X_i,X_j)|^2}{\hat\rho(X_i)\hat\rho(X_j)} - 2 \|f\|^2_{\infty} \Big \|\frac{1}{N}\mathbf{\Psi}^{\top} \bfD(\hat\rho^{-1}) \mathbf{\Psi} - \bfI_n \Big \|^2_{\Frob}.\]
Since $1-\varepsilon\le |\rho(X_i)/\hat\rho(X_i)| \le 1 + \varepsilon$ for every $i$, we have
\[ (1-\varepsilon)^2\frac{|K(X_i,X_j)|^2}{\rho(X_i)\rho(X_j)}\le  \frac{|K(X_i,X_j)|^2}{\hat\rho(X_i)\hat\rho(X_j)} \le  (1+\varepsilon)^2\frac{|K(X_i,X_j)|^2}{\rho(X_i)\rho(X_j)}, \quad \forall \: i,j.\]
Therefore
\[\var[\L_{\widehat{\Scal}_{\disc}}(f)|\Xcal] \le (1+\varepsilon)^2F(X_1,\ldots,X_N)+ 4 \|f\|^2_{\infty} \Big \|\frac{1}{N}\mathbf{\Psi}^{\top} \bfD(\hat\rho^{-1}) \mathbf{\Psi} - \bfI_n \Big \|^2_{\Frob}, \]
and 
\[\var[\L_{\widehat{\Scal}_{\disc}}(f)|\Xcal] \ge \frac{(1-\varepsilon)^2}{4}F(X_1,\ldots,X_N) - 2 \|f\|^2_{\infty} \Big \|\frac{1}{N}\mathbf{\Psi}^{\top} \bfD(\hat\rho^{-1}) \mathbf{\Psi} - \bfI_n \Big \|^2_{\Frob}. \]
By Lemma \ref{lm:perturb-rho}, we deduce that
\[\var[\L_{\widehat{\Scal}_{\disc}}(f)|\Xcal] \le (1+\varepsilon)^2F(X_1,\ldots,X_N)+ 32 \|f\|^2_{\infty}  \Big \|\frac{1}{N}\mathbf{\Psi}^{\top} \bfD(\rho^{-1}) \mathbf{\Psi} - \bfI_n \Big \|^2_{\Frob} + 8 \|f\|^2_{\infty}\varepsilon^2n, \]
and 
\[\var[\L_{\widehat{\Scal}_{\disc}}(f)|\Xcal] \ge \frac{(1-\varepsilon)^2}{4}F(X_1,\ldots,X_N) - 16 \|f\|^2_{\infty} \Big \|\frac{1}{N}\mathbf{\Psi}^{\top} \bfD(\rho^{-1}) \mathbf{\Psi} - \bfI_n \Big \|^2_{\Frob} - 4 \|f\|^2_{\infty}\varepsilon^2n \]
as desired.
\end{proof}

\begin{proof}[End of proof of Theorem \ref{thm:disc-cont-approx}]
 If $\max_{1\le i \le N} \Big |\frac{\rho(X_i)}{\hat{\rho}(X_i)}-1 \Big| \le \varepsilon$, by Lemma \ref{lm:asbound-approx}
\begin{eqnarray*}
     &&\var[\L_{\widehat{\Scal}_{\disc}}(f)|\Xcal]\\ 
     &\le& (1+\varepsilon)^2 \E[F] + (1+\varepsilon)^2|F - \E[F]|+ 32 \|f\|^2_{\infty}  \Big \|\frac{1}{N}\mathbf{\Psi}^{\top} \bfD(\rho^{-1}) \mathbf{\Psi} - \bfI_n \Big \|^2_{\Frob} + 8 \|f\|^2_{\infty}\varepsilon^2n \\
     &\le& (1+\varepsilon)^22 \var[\L_{\Scal_{\cont}}(f)] + 32 \Big (|F-\E[F]|+  \|f\|^2_{\infty}  \Big \|\frac{1}{N}\mathbf{\Psi}^{\top} \bfD(\rho^{-1}) \mathbf{\Psi} - \bfI_n \Big \|^2_{\Frob}\Big )  + 8 \|f\|^2_{\infty}\varepsilon^2n 
\end{eqnarray*}
and 
\begin{eqnarray*}
    &&\var[\L_{\widehat{\Scal}_{\disc}}(f)|\Xcal] \\
    &\ge& \frac{(1-\varepsilon)^2}{4}\E[F] - \frac{(1-\varepsilon)^2}{4}|F-\E[F]|- 16 \|f\|^2_{\infty} \Big \|\frac{1}{N}\mathbf{\Psi}^{\top} \bfD(\rho^{-1}) \mathbf{\Psi} - \bfI_n \Big \|^2_{\Frob} - 4 \|f\|^2_{\infty}\varepsilon^2n \\
    &\ge& {(1-\varepsilon)^2}\frac{N-1}{2N}\var[\L_{\Scal_{\cont}}(f)] - 16 \Big (|F-\E[F]|+  \|f\|^2_{\infty}  \Big \|\frac{1}{N}\mathbf{\Psi}^{\top} \bfD(\rho^{-1}) \mathbf{\Psi} - \bfI_n \Big \|^2_{\Frob}\Big ) - 4 \|f\|^2_{\infty}\varepsilon^2n .
\end{eqnarray*}
Applying Corollary \ref{cor:joint-prob} completes the proof.
\end{proof}

\section{Technical Appendices to wavelet DPPs} \label{a:appendix-wavelet}
In this section, we provide detailed discussions and proofs for our theorems in Section \ref{sec:waveletDPP}. We first recall a useful formula to compute variance of linear statistics of DPPs that will be used throughout this section.

\begin{lemma} \label{lm:var-DPP-formula}
Let $\Scal$ be a DPP on a Polish space $(E,\mu)$ whose kernel $K$ has the following orthonormal expansion
\[ K(x,y) = \sum_{k=1}^n \phi_k(x)\overline{\phi_k(y)}, \quad x,y \in E,\]
where $\{\phi_k:1\le k \le n\}$ is an orthonormal set in $L^2(\mu):=L^2(E,\mu)$. Then for any test function $f:E\rightarrow \R$, we have 
\[ \var[\L_{\Scal}(f)] = \sum_{k=1}^n \|\proj_{H^\perp}(f\phi_k)\|^2_{L^2(\mu)}\]
where $H^{\perp}$ is the orthogonal complement of the subspace $H=\span\{\phi_k:1\le k \le n\}$ in $L^2(\mu)$.
\end{lemma}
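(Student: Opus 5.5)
We will start from the standard variance formula for a DPP with a Hermitian projection kernel and then rewrite it in Hilbert-space language. Taking $k=1$ and $k=2$ in the defining identity \eqref{eq:definition_DPPs}, we have
\[
\E[\L_\Scal(f)^2]=\int f(x)^2K(x,x)\,\d\mu(x)+\iint f(x)f(y)\bigl(K(x,x)K(y,y)-|K(x,y)|^2\bigr)\,\d\mu(x)\,\d\mu(y).
\]
We also have $\E[\L_\Scal(f)]=\int fK(x,x)\,\d\mu$. Subtracting the square of the mean cancels the product term. This leaves
\[
\var[\L_\Scal(f)]=\int f^2(x)K(x,x)\,\d\mu(x)-\iint f(x)f(y)|K(x,y)|^2\,\d\mu(x)\,\d\mu(y).
\]
We assume $f$ bounded, or more generally $f\phi_k\in L^2(\mu)$, so that every integral is finite.

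Next we expand each term in the orthonormal system. Since $K(x,x)=\sum_k|\phi_k(x)|^2$, the first term equals $\sum_{k=1}^n\|f\phi_k\|^2_{L^2(\mu)}$. For the second term we write
\[
|K(x,y)|^2=\sum_{k,l}\phi_k(x)\overline{\phi_l(x)}\,\overline{\phi_k(y)}\phi_l(y).
\]
Integrating against $f(x)f(y)$ then gives
\[
\sum_{k,l}\langle f\phi_k,\phi_l\rangle\,\overline{\langle f\phi_k,\phi_l\rangle}=\sum_{k,l}|\langle f\phi_k,\phi_l\rangle|^2,
\]
where $\langle g,h\rangle=\int g\bar h\,\d\mu$. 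Because $f$ is real, the conjugate is taken correctly. For fixed $k$, the inner sum over $l$ equals $\|\proj_H(f\phi_k)\|^2$, since $\{\phi_l\}$ is an orthonormal basis of $H$.

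Combining the two terms and applying Pythagoras, $\|g\|^2-\|\proj_Hg\|^2=\|\proj_{H^\perp}g\|^2$ with $g=f\phi_k$, yields the claimed formula. The only delicate points are the following.
\begin{itemize}
\item We must justify interchanging the finite sums with the integrals. This is immediate because the sums are finite and each term is integrable by Cauchy–Schwarz given $f\phi_k\in L^2$.
\item We must verify the sign and conjugation bookkeeping in the $|K|^2$ expansion. This step needs the most care, but it is routine.
\end{itemize}
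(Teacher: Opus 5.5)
Your proposal is correct and follows essentially the same route as the paper: start from the variance formula for a Hermitian projection DPP, expand $K(x,x)$ and $|K(x,y)|^2$ in the orthonormal system to get $\sum_k \|f\phi_k\|^2 - \sum_{k,l}|\langle f\phi_k,\phi_l\rangle|^2$, then apply Pythagoras. The only difference is that you derive the variance formula from the $k=1,2$ correlation identities and check integrability under $f\phi_k\in L^2(\mu)$, whereas the paper takes the formula as its starting point.
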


\begin{proof}
    We have
    \begin{eqnarray*}
        \var[\L_{\Scal}(f)] &=& \int |f(x)|^2K(x,x)\d\mu(x) - \iint f(x)\overline{f(y)}|K(x,y)|^2 \d\mu(x)\d\mu(y) \\
        &=& \int |f(x)|^2\sum_{k=1}^n |\phi_k(x)|^2 \d \mu(x) - \iint f(x)\overline{f(y)}\Big |\sum_{k=1}^n\phi_k(x)\overline{\phi_k(y)}\Big |^2\d\mu(x) \d\mu(y) \\
        &=&\sum_{k=1}^n \| f\phi_k\|^2_{L^2(\mu)} - \sum_{k,k'=1}^n |\langle f\phi_k,\phi_{k'}\rangle_{L^2(\mu)} |^2\\
        &=& \sum_{k=1}^n \Big ( \|f\phi_k\|^2_{L^2(\mu)} - \sum_{k'=1}^n |\langle f\phi_k, \phi_{k'}\rangle_{L^2(\mu)}|^2 \Big ) =  \sum_{k=1}^n \|\proj_{H^\perp}(f\phi_k)\|^2_{L^2(\mu)}.
    \end{eqnarray*}
\end{proof}

\subsection{Brief introduction to wavelets}

We briefly recall some standard facts from wavelet theory, following the notation of \cite[Chapter 7]{Mal08}. 
\begin{definition}
    A multiresolution approximation of \(L^2(\mathbb{R})\) is a sequence of closed subspaces
\((V_j)_{j\in\mathbb{Z}}\) satisfying
\[
\cdots \subset V_2 \subset V_1 \subset V_0 \subset V_{-1}\subset V_{-2}\subset \cdots,
\]
together with
\[
\overline{\bigcup_{j\in\mathbb{Z}} V_j}=L^2(\mathbb{R}),
\qquad
\bigcap_{j\in\mathbb{Z}} V_j=\{0\},
\]
and the scaling relation
\[
f(x)\in V_j
\quad\Longleftrightarrow\quad
f(2x)\in V_{j-1}.
\]
\end{definition}

The space \(V_j\) should be understood as the approximation space at scale \(2^j\); thus, smaller values of \(j\) correspond to finer resolution.

\begin{definition}
    Given a multiresolution approximation $(V_j)_{j\in \Z}$ of $L^2(\R)$, a function \(\phi\in L^2(\mathbb{R})\) is called a scaling function if its integer translates \(
\{\phi(\cdot-k):k\in\mathbb{Z}\}
\) generate the reference space \(V_0\). In addition, $\phi$ is called an orthonormal scaling function if 
\(
\{\phi(\cdot-k):k\in\mathbb{Z}\}
\)
is an orthonormal basis of \(V_0\).
\end{definition}
 The corresponding normalized dilates and translates are
\[
\phi_{j,k}(x)
=
2^{-j/2}\phi(2^{-j}x-k),
\qquad j\in\mathbb{Z},\ k\in\mathbb{Z}.
\]
Then, for each \(j\in\mathbb{Z}\),
\[
V_j
=
\overline{\operatorname{span}}\{\phi_{j,k}:k\in\mathbb{Z}\}.
\]
The functions \(\phi_{j,k}\) therefore describe the low-frequency or approximation component of a function at scale \(2^j\).

\begin{lemma} \label{lm:scaling-function}
Let \(\phi\in L^\infty(\mathbb R)\) be a compactly supported normalized
orthonormal scaling function associated with a multiresolution analysis, normalized so that
$\int_{\mathbb R}\phi(x)\d x=1$.
Then
\[
    \sum_{k\in\mathbb Z}\phi(x-k)=1
    \qquad\text{for a.e. }x\in\R.
\]
Consequently, 
\[\sum_{\bfk\in\mathbb Z^d}2^{-dj/2}\Phi_{-j,\bfk}(\bfx)=1 \quad \text{for a.e. }\bfx\in\R^d.\] Moreover, there exists $c_{\phi}>0$ such that $\sum_{k\in\mathbb Z}\phi(x-k)^2 \ge c_\phi$ for a.e. $x\in \R$.
\end{lemma}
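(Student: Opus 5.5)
The plan is to prove the partition of unity through periodization and Poisson summation, and the lower bound through a short compactness argument.

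\emph{Partition of unity.} Set $P(x):=\sum_{k\in\Z}\phi(x-k)$. Because $\phi$ is bounded with compact support, this sum is locally finite, so $P\in L^\infty$ and $P$ is $1$-periodic. Its Fourier coefficients are
\[
\int_0^1 P(x)e^{-2\pi i m x}\,\d x=\int_\R \phi(x)e^{-2\pi i m x}\,\d x=\hat\phi(2\pi m),
\]
using the convention $\hat\phi(\xi)=\int\phi(x)e^{-i\xi x}\d x$. Hence it suffices to show $\hat\phi(2\pi m)=\delta_{m,0}$; then $P\equiv 1$ a.e. by uniqueness of Fourier coefficients in $L^2(0,1)$.

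The case $m=0$ is the normalization $\int\phi=1$. For $m\neq0$ I will use two standard MRA facts (see \cite[Ch.~7]{Mal08}).
\begin{itemize}
\item Orthonormality of the translates gives $\sum_{k}|\hat\phi(\xi+2\pi k)|^2=1$.
\item The refinement relation gives $\hat\phi(2\xi)=\frac{1}{\sqrt2}\hat h(\xi)\hat\phi(\xi)$, where $\hat h$ is a $2\pi$-periodic trigonometric polynomial. It is a polynomial because $\phi$ has compact support, so the filter $h$ is finite.
\end{itemize}
Evaluating the first identity at $\xi=0$ and using $\hat\phi(0)=1$ gives $\hat\phi(2\pi k)=0$ for all $k\neq0$. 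Note that $\hat\phi$ is continuous, indeed entire, so pointwise evaluation is legitimate. This is the step I expect to be the main obstacle, since it is where the MRA structure is really used. A fallback is the quick check that $|\hat\phi(0)|=1$ forces all other terms of the sum at $\xi=0$ to vanish.

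\emph{Multivariate identity.} Since $2^{-dj/2}\Phi_{-j,\bfk}(\bfx)=\prod_{i=1}^d\phi(2^jx_i-k_i)$, the sum over $\bfk\in\Z^d$ factorizes into $\prod_{i=1}^d P(2^jx_i)$. Each factor equals $1$ for a.e. $x_i$, because dilation preserves null sets, and a product of full-measure sets has full measure in $\R^d$.

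\emph{Lower bound.} Let $Q(x)=\sum_k\phi(x-k)^2$. It is $1$-periodic and, by local finiteness, a sum of at most $M$ nonzero terms at each $x$, where $M$ depends only on the length of $\supp\phi$. Cauchy–Schwarz applied to the partition of unity gives
\[
1=\Big|\sum_k\phi(x-k)\Big|^2\le M\,Q(x)\qquad\text{a.e.},
\]
so $c_\phi=1/M$ works. Concretely, $M\le \lceil b-a\rceil+1$ when $\supp\phi\subset[a,b]$. This last step is routine.
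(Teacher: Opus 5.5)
Your proposal follows essentially the same route as the paper. Both use periodization with Fourier coefficients $\hat\phi(2\pi m)$, derive $\hat\phi(2\pi m)=0$ for $m\neq 0$ from the orthonormal-translate identity together with $\hat\phi(0)=1$, factorize to get the $d$-dimensional identity, and apply Cauchy--Schwarz with $c_\phi=1/M$ for the lower bound. The refinement relation you list is never actually needed.

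The one step to tighten is the evaluation at $\xi=0$. The identity $\sum_k|\hat\phi(\xi+2\pi k)|^2=1$ holds only for a.e. $\xi$, and continuity of $\hat\phi$ alone does not make the infinite sum continuous. Argue essentially as the paper does: each finite partial sum (for instance the two terms $k=0$ and $k=m$) is continuous and $\le 1$ a.e., hence $\le 1$ at $\xi=0$. This forces $\hat\phi(2\pi m)=0$ for $m\neq 0$.
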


\begin{proof}
We use the Fourier transform convention
   \( \widehat f(\xi)
    =
    \int_{\mathbb R} f(x)e^{-i\xi x}\d x .
\)
Since \(\phi\) is compactly supported and belongs to \(L^\infty(\mathbb R)\),
we have \(\phi\in L^1(\mathbb R)\cap L^2(\mathbb R)\), and hence
\(\widehat\phi\) is continuous. The normalization gives
\[
    \widehat\phi(0)
    =
    \int_{\mathbb R}\phi(x)\d x
    =
    1.
\]
Because the integer translates
\(
    \{\phi(\cdot-k):k\in\mathbb Z\}
\)
are orthonormal in \(L^2(\mathbb R)\), the standard Fourier characterization
of orthonormal translates gives
\[
    \sum_{\ell\in\mathbb Z}
    \bigl|\widehat\phi(\xi+2\pi \ell)\bigr|^2
    =
    1
    \qquad\text{for a.e. }\xi\in\mathbb R.
\]
We claim that
\[
    \widehat\phi(2\pi m)=0 \quad \text{ for all }m\in\mathbb Z\setminus\{0\}.
\]
Indeed, suppose that for some \(m\ne 0\) we have
\(
    \bigl|\widehat\phi(2\pi m)\bigr|>0,
\) then by continuity of \(\widehat\phi\), for \(\xi\) sufficiently close to \(0\),
both
\(
    \bigl|\widehat\phi(\xi)\bigr|
\)
is close to \(1\), and
\(
    \bigl|\widehat\phi(\xi+2\pi m)\bigr|
\)
is bounded away from \(0\). Hence, on a set of positive measure near
\(\xi=0\), we would have
\(
    \sum_{\ell\in\mathbb Z}
    \bigl|\widehat\phi(\xi+2\pi \ell)\bigr|^2
    >
    1,
\)
contradicting the orthonormal-translate identity. The claim follows.

We define the periodization
   $ P_{\phi}(x)
    :=
    \sum_{k\in\mathbb Z}\phi(x-k)$.
Since \(\phi\) is compactly supported, this sum is locally finite. Hence
\(P_\phi\) is a well-defined \(1\)-periodic function in \(L^2_{\mathrm{loc}}\).
For \(m\in\mathbb Z\), its \(m\)-th Fourier coefficient is
\[ \int_0^1 P_\phi(x)e^{-2\pi i m x}\d x =  \sum_{k\in\mathbb Z}
    \int_0^1 \phi(x-k)e^{-2\pi i m x}\d x  =
    \int_{\mathbb R}\phi(u)e^{-2\pi i m u}\d u  =
    \widehat\phi(2\pi m). \]
Hence,
\(
    \int_0^1 P_\phi(x)\d x = \widehat\phi(0)=1,
\)
while for \(m \ne 0\), we have \(
    \int_0^1 P_\phi(x)e^{-2\pi i m x}\d x
    =
    \widehat\phi(2\pi m)
    =
    0.
\)
Thus \(P_\phi\) has the same Fourier coefficients as the constant function \(1\).
Hence
\(P_\phi(x)=1\)
    for a.e. \(x\in\mathbb R,
\)
that is,
\[
    \sum_{k\in\mathbb Z}\phi(x-k)=1
    \qquad\text{for a.e. }x\in\mathbb R.
\]
Next, recall that
\(
    \phi_{-j,k}(x)
    =
    2^{j/2}\phi(2^j x-k),
\)
which implies that for \(\bfx=(x_1,\ldots,x_d)\) and \(\bfk=(k_1,\ldots,k_d)\), we have
\[
    2^{-dj/2}\Phi_{-j,\bfk}(x)
    =
    \prod_{r=1}^d
    \phi(2^j x_r-k_r).
\]
Summing over \(\bfk\in\mathbb Z^d\), we get
\[
    \sum_{\bfk\in\mathbb Z^d}
    2^{-dj/2}\Phi_{-j,\bfk}(\bfx)
    =
    \sum_{k_1,\ldots,k_d\in\mathbb Z}
    \prod_{r=1}^d
    \phi(2^j x_r-k_r) =
    \prod_{r=1}^d
    \left(
        \sum_{k_r\in\mathbb Z}
        \phi(2^j x_r-k_r)
    \right) =
1
\]
for a.e. \(\bfx\in\mathbb R^d\) as desired.

Finally, since \(\phi\) is compactly supported, there exists an integer
\(M_\phi<\infty\) such that, for every \(x\in\mathbb R\), at most
\(M_\phi\) terms in the sum
\(
    \sum_{k\in\mathbb Z}\phi(x-k)
\)
are nonzero. By Cauchy-Schwarz, we have
\[
\begin{aligned}
    1
    =
    \Big (
        \sum_{k\in\mathbb Z}\phi(x-k)
    \Big)^2 \le
    \left(
        \#\{k\in\mathbb Z:\phi(x-k)\ne 0\}
    \right)
    \sum_{k\in\mathbb Z}\phi(x-k)^2 \le
    M_\phi
    \sum_{k\in\mathbb Z}\phi(x-k)^2
\end{aligned}
\]
for a.e. \(x\in\mathbb R\). Let $c_{\phi}:= M_\phi^{-1}$, we then have
\(
    \sum_{k\in\mathbb Z}\phi(x-k)^2
    \ge
    c_\phi>0\)
for a.e. \(x\in\mathbb R\). This completes the proof.
\end{proof}

\subsection{Spaces of test functions} \label{sec:test-function}

In this section, we present the definitions of H\"older continuous functions and Sobolev spaces that we mentioned in Section \ref{sec:waveletDPP}.

\begin{definition}[H\"older continuous functions]
    Let $s\in (0,1]$. The space of $s$-H\"older continuous functions on $\mathbb{R}^d$, denoted by $C^{0,s}$, consist of all functions $f:\R^d \rightarrow \R$ such that
    \[ |f|_{C^{0,s}} := \sup_{\bfx\ne \bfy} \frac{|f(\bfx) - f(\bfy)|}{\|\bfx - \bfy\|^s} < \infty.\]
\end{definition}

\begin{remark}
    If $f$ satisfies \eqref{eq:cpt-supp}, namely, $f(\bfx) = 0$ on $\R^d \setminus (0,1)^d$, then we have
    \[|f|_{C^{0,s}}  = \sup_{\bfx,\bfy \in [0,1]^d,\bfx\ne \bfy} \frac{|f(\bfx) - f(\bfy)|}{\|\bfx - \bfy\|^s} \cdot \]
\end{remark}
 
We also consider the Sobolev spaces $H^s(\Omega)$, where $\Omega$ is an open domain in $\R^d$. 
For $s\in \mathbb{N}$, the common definition of Sobolev spaces is as follows:

\begin{definition}[Sobolev spaces]
    Let $s\in \N, s\ge 1$, the Sobolev space $H^s(\Omega)$ is defined by
    \[ H^s(\Omega) := \Big \{ f\in L^2(\Omega): D^{\alpha}f \in L^2(\Omega) \text{ for every multi-index } \alpha \text{ with } |\alpha| \le s\Big \}\]
    where $D^{\alpha}f$ denotes the weak derivative of $f$. The Sobolev norm is defined by
    \[ \|f\|^2_{H^s(\Omega)} := \sum_{|\alpha|\le s}  \|D^{\alpha} f\|^2_{L^2(\Omega)}.\]
    In particular, for $s=1$ 
    \[\|f\|^2_{H^1(\Omega)} = \|f\|^2_{L^2(\Omega)} + \|\nabla f\|^2_{L^2(\Omega)}.\]
\end{definition}

 For $s<1$, we take the following definition for the fractional Sobolev space $H^s(\Omega)$:

\begin{definition}[Fractional Sobolev spaces]
    Let $s\in (0,1)$, the fractional Sobolev space $H^s(\Omega)$ is defined by
   \begin{equation*}
       H^{s}(\Omega):= \Big \{ f\in L^2(\Omega) : |f|_{H^{s}(\Omega)} < \infty  \Big \},
   \end{equation*}
   where
  \begin{equation*}
       |f|_{H^{s}(\Omega)}:= \Big(\iint_{\Omega \times \Omega}\frac{(f(\bfx)-f(\bfy))^2}{\|\bfx-\bfy\|^{d+2s}} \d\bfx \d\bfy\Big)^{1/2},
   \end{equation*}
   called the Gagliardo seminorm of $f$.
\end{definition}

We are particularly interested in the case $\Omega = (0,1)^d$. For simplicity of notation, we set 
\[H^s:=H^s((0,1)^d).\]

\subsection{Proof of Theorem \ref{thm:wavelet-dpp}}

In this section, we present the detailed proof of Theorem \ref{thm:wavelet-dpp}. Let $\phi$ be a bounded, compactly supported, orthonormal scaling function associated with a multiresolution analysis, and normalized by $\int \phi(x)\d x =1$. We consider the wavelet DPP kernel $K$ defined in \eqref{eq:wavelet-dpp-kernel}, namely, 
    \[ K(\bfx,\bfy) = \sum_{\bfk\in \Ical} \Phi_{-j,\bfk}(\bfx)\Phi_{-j,\bfk}(\bfy).\]
    Let $H:= \span\{\Phi_{-j,\bfk}:\bfk\in \Ical\}$ in $L^2(\R^d, \d\bfx)$. 
    Let $\Scal$ be the DPP on $\R^d$ defined by $K$ with respect to the Lebesgue measure $\d \bfx$.
    Let $f$ be a test function satisfying \eqref{eq:cpt-supp}, then by Lemma \ref{lm:var-DPP-formula} we have 
    \[ \var[\L_{\Scal}(f)] = \sum_{\bfk \in \Ical} \|\proj_{H^\perp} (f\Phi_{-j,\bfk})\|^2_{L^2(\d\bfx)}.\]

    \begin{remark}
            In what follows, $C$ will denote a constant depending only on $d$ and $\phi$, which possibly changes from line by line.
    \end{remark}

\subsubsection{The case of $C^{0,s}$.}
We first prove for the case $f\in C^{0,s}$ and $f$ satisfies \eqref{eq:cpt-supp}. In particular, we will show that 
\[ \var[\L_{\Scal}(f)] \le C |f|^2_{C^{0,s}} n^{1-2s/d},\]
where $C>0$ depending only on $\phi$ and $d$. 

To this end, for each $\bfk\in \Ical$, we fix a point $\bfx_\bfk \in \supp(\Phi_{-j,\bfk})$. We define a remainder function 
    \[R_\bfk(\bfx):= f(\bfx) - f(\bfx_\bfk).\] 
    Then
    \[ \proj_{H^\perp}(f\Phi_{-j,\bfk}) = f(\bfx_\bfk)\proj_{H^\perp} (\Phi_{-j,\bfk}) +\proj_{H^\perp}(R_\bfk\Phi_{-j,\bfk}) = \proj_{H^\perp}(R_\bfk\Phi_{-j,\bfk})\]
    since $\Phi_{-j,\bfk} \in H$. 
    Therefore
    \begin{eqnarray*}
        \|\proj_{H^\perp} (f\Phi_{-j,\bfk})\|^2_{L^2(\d\bfx)} 
        &=& \|\proj_{H^\perp} (R_\bfk\Phi_{-j,\bfk})\|^2_{L^2(\d\bfx)} \le  \|R_\bfk\Phi_{-j,\bfk}\|^2_{L^2(\d\bfx)} \\
        &=& \int_{\supp(\Phi_{-j,\bfk})} |f(\bfx)-f(\bfx_\bfk)|^2 |\Phi_{-j,\bfk}(\bfx)|^2 \d \bfx \\
        &\le& \sup_{\bfx \in \supp(\Phi_{-j,\bfk})}|f(\bfx)-f(\bfx_\bfk)|^2
    \end{eqnarray*}
    since $\int |\Phi_{-j,\bfk}(\bfx)|^2\d\bfx = 1$. 
    Note that $f\in C^{0,s}$, we have
   \[ |f(\bfx)-f(\bfx_\bfk)|^2 \le |f|^2_{C^{0,s}} \|\bfx - \bfx_\bfk\|^{2s}, \quad \forall \bfx. \]
    This implies 
    \begin{eqnarray*}
        \|\proj_{H^\perp} (f\Phi_{-j,\bfk})\|^2_{L^2(\d\bfx)}  &\le& |f|^2_{C^{0,s}}  \sup_{\bfx \in \supp(\Phi_{-j,\bfk})} \|\bfx - \bfx_\bfk\|^{2s} \\
        &\le& |f|^2_{C^{0,s}} \diam(\supp(\Phi_{-j,\bfk}))^{2s},
    \end{eqnarray*}
    where $\diam(A):= \sup_{\bfx,\bfy \in A} \|\bfx - \bfy\|$ for $A\subset \R^d$.
    Note that $\Phi_{-j,\bfk}(\bfx) = \phi_{-j,k_1}(x_1)\ldots \phi_{-j,k_d}(x_d)$, which implies 
    \[ \diam(\supp(\Phi_{-j,\bfk}))= \sqrt{d}\cdot 2^{-j} \cdot \diam(\supp(\phi)).\]
    Thus
     \[  \|\proj_{H^\perp} (f\Phi_{-j,\bfk})\|^2_{L^2(\d\bfx)}  \le \diam(\supp(\phi))^{2s} |f|^2_{C^{0,s}}   d^{s} 2^{-2s j}, \quad \forall \bfk \in \Ical.  \]
     Therefore,
     \[\var[\L_{\Scal}(f)] \le n \cdot \diam(\supp(\phi))^{2s} |f|^2_{C^{0,s}}   d^{s} 2^{-2s j} \le C(\phi,d) \cdot |f|^2_{C^{0,s}} n^{1-2s/d},\]
     as desired.
    
\subsubsection{The case of $H^s$ with $s<1$.}
Let $f\in H^s = H^s((0,1)^d)$ with $s\in (0,1)$. We now show that 
\[ \var[\L_{\Scal}(f)] \le C |f|^2_{H^s} n^{1-2s/d},\]
where $C>0$ depending only on $\phi$ and $d$. 
To this end,
let $\mathrm{Conv}[A]$ denote the convex hull of a set $A$.
     For each $\bfk\in \Ical$, let $S_{\bfk} := \mathrm{Conv}[\supp(\Phi_{-j,\bfk}) ]\cap (0,1)^d$, and define
     \[ \bar{f}_{\bfk}:= \frac{1}{\vol(S_\bfk)}\int_{S_\bfk} f(\bfy)\d\bfy. \]
     By the same argument as above, we have
     \begin{eqnarray*}
          &&\|\proj_{H^\perp} (f\Phi_{-j,\bfk})\|^2_{L^2(\d\bfx)} =  \|\proj_{H^\perp}( (f-\bar{f}_{\bfk})\Phi_{-j,\bfk} ) \|^2_{L^2(\d\bfx)} \\
          &\le& \int_{S_\bfk} |f(\bfx) - \bar{f}_{\bfk}|^2 |\Phi_{-j,\bfk}(\bfx)|^2 \d \bfx\\
          &\le& 2^{dj} \|\phi\|_{\infty}^{2d}  \int_{S_\bfk} \Big |f(\bfx) -  \frac{1}{\vol(S_\bfk)}\int_{S_\bfk} f(\bfy)\d\bfy \Big |^2 \d \bfx.
     \end{eqnarray*}
    By the fractional Poincar\'e-Wirtinger inequality (see e.g. \cite{acosta2017}, Proposition 2.2), we have for $f\in H^s((0,1)^d)$
    \[\int_{S_\bfk} \Big |f(\bfx) -  \frac{1}{\vol(S_\bfk)}\int_{S_\bfk} f(\bfy)\d\bfy \Big |^2\d\bfx\le C \cdot \diam(S_\bfk)^{2s} |f|^2_{H^s(S_\bfk)} \]
    where $C$ depends only on $\phi$ and $d$, and 
    \[|f|^2_{H^s(\Omega)}:= \iint_{\Omega\times \Omega}\frac{(f(\bfx)-f(\bfy))^2}{\|\bfx-\bfy\|^{d+2s}} \d\bfx \d\bfy\]
    for a domain $\Omega \subset \R^d$. Now note that 
    \[\diam(S_\bfk) \le \diam(\supp(\Phi_{-j,\bfk})) = \diam(\supp(\phi)) \cdot 2^{-j}\sqrt{d}.\]
    Thus
    \[\int_{S_\bfk} \Big |f(\bfx) -  \frac{1}{\vol(S_\bfk)}\int_{S_\bfk} f(\bfy)\d\bfy \Big |^2\d\bfx\le C \cdot 2^{-2sj} |f|^2_{H^s(S_\bfk)} \]
    for some constant $C$ depending only on $d$ and $\phi$. Hence
    \[ \var[\L_{\Scal}(f)] \le C \cdot 2^{dj-2sj} \sum_{\bfk\in \Ical} |f|^2_{H^s(S_\bfk)}.\]
    
    We observe that
    \begin{eqnarray*}
        \sum_{\bfk\in \Ical}|f|^2_{H^s(S_\bfk)} &=&\iint_{(0,1)^d\times(0,1)^d}\frac{(f(\bfx)-f(\bfy))^2}{\|\bfx-\bfy\|^{d+2s}}\Big (\sum_{\bfk\in \Ical} \mathbf{1}_{S_\bfk}(\bfx)\mathbf{1}_{S_\bfk}(\bfy) \Big )\d\bfx \d\bfy \\
        &\le& \iint_{(0,1)^d\times(0,1)^d}\frac{(f(\bfx)-f(\bfy))^2}{\|\bfx-\bfy\|^{d+2s}}\Big (\sum_{\bfk\in \Ical} \mathbf{1}_{S_\bfk}(\bfx) \Big )
        \Big (\sum_{\bfk\in \Ical}\mathbf{1}_{S_\bfk}(\bfy) \Big )\d\bfx \d\bfy.
    \end{eqnarray*}
    For $\bfk, \bfk'\in \Ical$, we have $S_\bfk \cap S_{\bfk'} \ne \varnothing$ only if 
    \[\mathrm{Conv}[\supp(\phi_{-j,k_i})] \cap \mathrm{Conv}[\supp(\phi_{-j,k_i'})] \ne \varnothing \quad\text{ for all }1\le i \le d.\]
   Assuming that $\supp(\phi) \subset [a,b]$, we have
   \[ \mathrm{Conv}[\supp(\phi_{-j,k})]\subset [2^{-j}(a+k), 2^{-j}(b+k)]\]
   which implies that 
   \[ \#\{k' \in \Z: \mathrm{Conv}[\supp(\phi_{-j,k})] \cap \mathrm{Conv}[\supp(\phi_{-j,k'})] \ne \varnothing\} \le 2(b-a)+1.\]
   Thus, for every $\bfx \in (0,1)^d$, we have 
   \[ \sum_{\bfk \in \Ical} \mathbf{1}_{S_\bfk}(\bfx) \le (2(b-a)+1)^d,\]
   which implies 
   \[ \sum_{\bfk\in \Ical}|f|^2_{H^s(S_\bfk)}  \le (2(b-a)+1)^{2d} |f|^2_{H^s((0,1)^d)}.\]
   Combining all ingredients, we deduce that 
   \[\var[\L_{\Scal}(f)] \le C|f|^2_{H^s((0,1)^d)}  n^{1-2s/d} \]
   where $C$ depending only on $d$ and $\phi$ as desired.

\subsubsection{The case of $H^1$.}

Let $f\in H^1=H^1((0,1)^d)$.  We now show that 
\[ \var[\L_{\Scal}(f)] \le C \|\nabla f\|^2_{L^2((0,1)^d)} n^{1-2s/d},\]
where $C>0$ depending only on $\phi$ and $d$. The proof follows exactly as the case $H^s$ with $s<1$ above, where the Poincar\'e-Wirtinger inequality replaced by the Payne–Weinberger Poincaré inequality \cite{payne60}, which states that for every bounded, convex open domain $\Omega \subset \R^d$
\[ \int_{\Omega} \Big |f(\bfx) - \frac{1}{\vol(\Omega)}\int_{\Omega} f(\bfy) \d\bfy \Big |^2 \d \bfx \le \frac{\diam(\Omega)^2}{\pi^2} \|\nabla f\|^2_{L^2(\Omega)}, \quad \forall f\in H^1(\Omega).\]

\subsection{Proof of Proposition~\ref{p:stratified_sampling}}
Consider the kernel $\eqref{eq:wavelet-dpp-kernel}$, for some $j\geq 1$ and $\phi=\phi_{\mathrm{Haar}}$ the Haar scaling function. Note that in that case, the rank of the kernel is $n=2^{dj}$, that $K(x,x)=n$ for all $x\in[0,1]^d$, and that for $k\neq \ell$, 
$$
    \mathrm{supp}(\mathbf{\Phi}_{-j,k}) \cap \mathrm{supp}(\mathbf{\Phi}_{-j,\ell}) = \emptyset.
$$
Actually, the supports of $\mathbf{\Phi_{-j,k}}$, $k\in\mathcal{I}$ are a partition of $[0,1]^d$.
Moreover, by construction \eqref{eq:wavelet-dpp-kernel}, for $k\in\mathcal{I}$, 
$$
    x,y\in\mathrm{supp}(\mathbf{\Phi}_{-j,k}) \Rightarrow K(x,y) = n,
$$ 
while $K(x,y)=0$ as soon as $x$ and $y$ belong to the supports of two different wavelets.
Now consider the pdf \eqref{e:density_dpp} defining DPP($K$,$\d x$). If any two points $x_i,x_j$, $i\neq j$ are in the support of the same $\mathbf{\Phi}_{-j,k}$, then two columns of the matrix in the determinant are equal, and the determinant is thus zero. Meanwhile, the RHS of \eqref{e:density_dpp} is constant on configurations $x_1, \dots, x_n$ that belong to $n$ different supports.
This proves the claim.

\section{Technical Appendices to applications}
\label{a:appendix-application}
\subsection{A quadrature for the Haar DPP}

The simple linear statistic \eqref{e:first_MC_estimator} has fast-decaying mean squared error when the scaling function $\phi$ is the Haar scaling function.
In light of Proposition~\ref{p:stratified_sampling}, the result is expected since the rate had been known for stratified sampling since \citep{haber1966modified}.
For completeness, we still state the proposition and prove it as a consquence of our more general result for wavelet-based DPPs.

\begin{proposition}[Haar wavelet DPPs as quadratures] \label{p:wavelet-dpp-nor}
    Let $\Scal$ be the DPP  on $(\R^d, \d\bfx)$ defined by the kernel $K$ in \eqref{eq:wavelet-dpp-kernel} with $\phi_{\mathrm{Haar}}$ the Haar scaling function. 
    Let $f$ be a function satisfying \eqref{eq:cpt-supp}. 
    Then, there exists a constants $C>0$ depending only on $d$ and $\omega$ such that 
        \[\E \Big [ \Big|\Lcal_{\Scal}(f) - \int f(\bfx) \omega(\bfx) \d \bfx) \Big |^2\Big ] \le C \|f\|^2_s  n^{-1-2s/d},\]
       where 
       \[ \|f\|^2_s:= 
       \begin{cases}
           \|f\|^2_{\infty} + |f|^2_{C^{0,s}} & \text{ if } f\in C^{0,s} \\
            \|f\|^2_{L^2((0,1)^d)} + |f|^2_{H^s} & \text{ if } f\in H^s, s<1 \\ 
             \|f\|^2_{L^2((0,1)^d)} + \|\nabla f\|^2_{L^2((0,1)^d)} & \text{ if } f\in H^s, s=1. \\ 
       \end{cases}
       \]
\end{proposition}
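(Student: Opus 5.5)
For the Haar scaling function, Proposition~\ref{p:stratified_sampling} shows that $K(\bfx,\bfx)=n$ on $[0,1]^d$. The estimator \eqref{e:first_MC_estimator} is therefore the linear statistic $\Lcal_{\Scal}(f)=\frac1n\Lambda_{\Scal}(f\omega)$. By \eqref{eq:definition_DPPs} with $k=1$ its mean is $\int f\omega\,\d\bfx$, since $f\omega$ vanishes outside $(0,1)^d$ where $K(\bfx,\bfx)=n$. The mean squared error is thus exactly a variance:
\[
\E\Big[\Big|\Lcal_{\Scal}(f)-\int f\omega\,\d\bfx\Big|^2\Big]=\frac{1}{n^2}\var[\Lambda_{\Scal}(f\omega)].
\]
The plan is to apply Theorem~\ref{thm:wavelet-dpp} to the test function $g:=f\omega$. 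This gives $\var[\Lambda_{\Scal}(g)]\le C|g|_s^2\,n^{1-2s/d}$ and hence the rate $n^{-1-2s/d}$. All the remaining work is to check that $g\in\mathcal F_s$ and that $|g|_s^2\le C_\omega\|f\|_s^2$.

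First, $g$ satisfies \eqref{eq:cpt-supp}, since $\supp(\omega)\Subset(0,1)^d$ and $\|g\|_\infty\le\|\omega\|_\infty\|f\|_\infty$. In the H\"older case, write $g(\bfx)-g(\bfy)=\omega(\bfx)(f(\bfx)-f(\bfy))+f(\bfy)(\omega(\bfx)-\omega(\bfy))$. Since $\omega\in C^1$ with compact support, it is Lipschitz and bounded, so $|\omega(\bfx)-\omega(\bfy)|\le C_\omega\min(\|\bfx-\bfy\|,1)\le C_\omega\|\bfx-\bfy\|^s$. This yields $|g|_{C^{0,s}}\le \|\omega\|_\infty|f|_{C^{0,s}}+C_\omega\|f\|_\infty$.

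In the $H^1$ case, the product rule gives $\nabla g=\omega\nabla f+f\nabla\omega$, so $\|\nabla g\|_{L^2}\le C_\omega(\|\nabla f\|_{L^2}+\|f\|_{L^2})$. In the fractional case $s<1$, use the same splitting inside the Gagliardo integral. The first term is bounded by $\|\omega\|_\infty^2|f|_{H^s}^2$. The second is bounded by
\[
\int_{(0,1)^d} f(\bfy)^2\int_{(0,1)^d}\frac{C_\omega^2\min(\|\bfx-\bfy\|,1)^2}{\|\bfx-\bfy\|^{d+2s}}\,\d\bfx\,\d\bfy .
\]
The inner integral is finite uniformly in $\bfy$ because $2-2s>0$ makes the singularity integrable. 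The second term is therefore at most $C_\omega\|f\|_{L^2}^2$.

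The main point of care is this fractional Leibniz estimate. A second subtlety is that Theorem~\ref{thm:wavelet-dpp} is stated with the seminorm on $(0,1)^d$, while $g$ is viewed on $\R^d$. This is harmless: $g$ vanishes near the boundary, and the proof of that theorem only uses $g$ on $(0,1)^d$ through the sets $S_\bfk$. With these bounds in hand, combining them with the variance identity above gives the claimed bound $C\|f\|_s^2\,n^{-1-2s/d}$, with $C$ depending on $d$ and $\omega$. The dependence on $\phi$ disappears since $\phi=\phi_{\mathrm{Haar}}$ is fixed.
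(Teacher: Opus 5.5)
Your proposal is correct and follows the paper's proof. It uses $K(\bfx,\bfx)=n$ for the Haar kernel to write the MSE as $\var[\L_{\Scal}(f\omega/n)]$, checks that $f\omega$ satisfies \eqref{eq:cpt-supp} with $|f\omega|_s^2\le C_\omega\|f\|_s^2$, and then invokes Theorem~\ref{thm:wavelet-dpp}. Your explicit fractional Leibniz estimate, which uses the integrability of $\|\bfx-\bfy\|^{2-d-2s}$ for $s<1$, supplies the $H^s$ step that the paper only asserts ``by a similar argument''.
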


\begin{proof}
    Note that for the Haar scaling function $\phi=\phi_{\mathrm{Haar}}$, we have for a.e. $\bfx \in (0,1)^d$
    \[ K(\bfx,\bfx) = \sum_{\bfk\in \Ical} \Phi_{-j,\bfk}(\bfx)^2 =  n.\]
    Let $f\in C^{0,s}$ for $s \in (0,1]$. Since $\omega \in C^1$, we have
    \begin{eqnarray*}
        |f(\bfx)\omega(\bfx) -f(\bfy)\omega(\bfy)| &\le& \|\omega\|_{\infty} |f(\bfx)-f(\bfy)| + \|f\|_{\infty}|\omega(\bfx)-\omega(\bfy)| \\
        &\le& \|\omega\|_{\infty} |f|_{C^{0,s}} \|\bfx - \bfy\|^{s} + \|f\|_{\infty} \|\nabla \omega\|_{\infty} \|\bfx - \bfy\| \\
        &\le& \Big (\|\omega\|_{\infty} |f|_{C^{0,s}} + d^{(1-s)/2}\|f\|_{\infty} \|\nabla \omega\|_{\infty} \Big )\|\bfx - \bfy\|^{s}.
    \end{eqnarray*}
    This implies $f\omega \in C^{0,s}$, and 
    \[ |f\omega|^2_{C^{0,s}} \le C (|f|_{C^{0,s}} ^2 + \|f\|^2_{\infty} ) = C\|f\|_s^2\]
    for some constant $C>0$ depending only on $d$ and $\omega$.
    Applying Theorem \ref{thm:wavelet-dpp} yields
    \begin{eqnarray*}
       \E \Big [ \Big|\Lcal_{\Scal}(f) - \int f(\bfx) \omega(\bfx) \d \bfx) \Big |^2\Big ]
       = \var \Big [\L_{\Scal}\Big (\frac{f\omega}{n}\Big )\Big ] 
       \lesssim  \|f\|_s^2 n^{-1-2s/d}.
    \end{eqnarray*}
    
    By a similar argument, we also can show that if $f\in H^s$ and $\omega \in C^1$, then $f\omega \in H^s$ with
    \[    |f\omega|_{H^s}^2 \le  C \|f\|^2_s\]
    for some constant $C>0$ depending only on $d$ and $\omega$. Applying Theorem \ref{thm:wavelet-dpp} yields the desired result.
\end{proof}

\subsection{Proof of Theorem \ref{thm:wavelet-dpp-adj}}
\begin{remark}
    In what follows, we use the notations $\gtrsim, 
\lesssim$ to denote inequalities up to multiplicative constants which depend only on $d$ and $\phi$.
\end{remark}

\begin{proof}
   Let \(\{\bfx_{\bfk}:\bfk\in\Ical\}\) be the nodes designed as in \eqref{eq:adjusted-linear-stat}, i.e., be such that for each \(\bfk\in\Ical\), \(\bfx_{\bfk}\) is an arbitrarily chosen but fixed point in \(\supp(\Phi_{-j,\bfk})\).
   For a function $f$ with $\supp(f)\Subset (0,1)^d$, we define the following operator $Q$ acting on $f$ as
\begin{equation}
Qf(\bfx) := \frac{1}{2^{dj/2}} \sum_{\bfk \in \Ical} \Phi_{-j,\bfk}(\bfx)  f(\bfx_\bfk).
\end{equation}
We then have
\[\int Qf(\bfx) \d \bfx = \frac{1}{2^{dj/2}} \sum_{\bfk\in \Ical} f(\bfx_\bfk) \int \Phi_{-j,\bfk}(\bfx) \d \bfx = \frac{1}{2^{dj}} \sum_{\bfk \in \Ical} f(\bfx_\bfk),\]
where we used the fact that 
\[ \int \Phi_{-j,\bfk}(\bfx) \d \bfx = \prod_{i=1}^d \Big (\int \phi_{-j,k_i}(x_i)\d x_i\Big ) = \prod_{i=1}^d \Big (\int 2^{j/2}\phi(2^jx_i-k_i)\d x_i\Big ) =2^{-dj/2}.\]
We now show that:
\begin{equation} \label{eq:var-Qf}
    \var\Big [ \L_{\Scal}\Big ( \frac{f-Qf}{K}\Big )\Big ] = O(n^{-1-2s/d})
\end{equation}
for all function $f\in C^{0,s}$ with $\supp(f)\Subset (0,1)^d$.

To this end, we first observe that
    \[  \var\Big [ \L_{\Scal}\Big ( \frac{f-Qf}{K}\Big )\Big ] 
    \le \int \Big |\frac{f(\bfx)-Qf(\bfx)}{K(\bfx,\bfx)}\Big |^2 K(\bfx,\bfx) \d\bfx.\]
    Since $\supp(f)$ is bounded away from the boundary of the hypercube $[0,1]^d$, there exists $\delta>0$ such that $\supp(f) \subset (\delta,1-\delta)^d$. Note that for each $\delta>0$, there exists $j_0$ such that for $j\ge j_0$ we have
    \[ K(\bfx,\bfx) = 2^{dj}\prod_{i=1}^d \Big (\sum_{k\in \Z} \phi(2^jx_i-k)^2 \Big ), \quad \forall \bfx \in (\delta,1-\delta)^d.\]
    By Lemma \ref{lm:scaling-function}, there exists $c_{\phi}>0$ so that $\sum_{k\in \Z}\phi(x-k)^2 \ge c_{\phi}>0$ for a.e. $x\in \R$. Thus we have $K(\bfx,\bfx) \gtrsim n $ for all $j\ge j_0$ and a.e. $\bfx \in (\delta,1-\delta)^d$.
    Thus, for $j\ge j_0$ we would have 
    \[\var\Big [ \L_{\Scal}\Big ( \frac{f-Qf}{K}\Big )\Big ]  \lesssim \frac{1}{n} \int_{(\delta,1-\delta)^d}|f(\bfx)-Qf(\bfx)|^2 \d\bfx.\] 

    To control $|f(\bfx)-Qf(\bfx)|$, we note that 
    \[ \sum_{\bfk\in \Z^d} 2^{-dj/2}\Phi_{-j,\bfk}(\bfx) = 1, \quad \text{ for a.e. } \bfx \in \R^d.\]
    This implies 
    \[ f(\bfx) =  \sum_{\bfk\in \Z^d} 2^{-dj/2}\Phi_{-j,\bfk}(\bfx)  f(\bfx), \quad  \text{ for a.e. } \bfx \in \R^d.\]
    Note that if $\supp(f) \subset (\delta,1-\delta)^d$, for large enough $j$ (depending on $\supp(\phi)$ and $\delta$) we would have $(\delta,1-\delta)^d \cap \supp(\Phi_{-j,\bfk}) = \varnothing$ if $\bfk \notin \Ical$. Hence, for large enough $j$, we would have
    \[ f(\bfx) =  \sum_{\bfk\in\Ical} 2^{-dj/2}\Phi_{-j,\bfk}(\bfx)  f(\bfx), \quad  \text{ for a.e. } \bfx \in (\delta,1-\delta)^d.\]
    This implies that for a.e. $\bfx \in (\delta,1-\delta)^d$
    \begin{eqnarray*}
       |f(\bfx)-Qf(\bfx)| &\le& 2^{-dj/2} \sum_{\bfk\in \Ical} |\Phi_{-j,\bfk}(\bfx)||f(\bfx) - f(\bfx_\bfk)| \\
       &\le&\|\phi\|^d_{\infty} \sum_{\bfk \in \Ical} \mathbf{1}_{S_{\bfk}}(\bfx)|f(\bfx) - f(\bfx_\bfk)|,
    \end{eqnarray*}
   where $S_{\bfk}:= \supp(\Phi_{-j,\bfk}) \cap (0,1)^d$. 
   For each $\bfk \in \Ical$, we have 
   \[ |f(\bfx) - f(\bfx_\bfk)| \lesssim |f|_{C^{0,s}}2^{-s j}.\]
    By the proof of Theorem \ref{thm:wavelet-dpp} (ii), the sum $\sum_{\bfk \in \Ical} \mathbf{1}_{S_{\bfk}}(\bfx)$ is finite and depends only on $d$ and $\phi$. Therefore, for a.e. $\bfx \in (\delta,1-\delta)^d$
   \[ |f(\bfx) - Qf(\bfx)|^2 \lesssim |f|^2_{C^{0,s}}2^{-2s j}. \]
Thus,
   \[\var\Big [ \L_{\Scal}\Big ( \frac{f-Qf}{K}\Big )\Big ] \le C |f|^2_{C^{0,s}} n^{-1-2s/d} \]
   for some constant $C>0$ depending only on $\phi$ and $d$. The claim \eqref{eq:var-Qf} follows.

  Now, we observe that 
   \begin{eqnarray*}
       \widetilde{\Lcal}_{\Scal}(f) 
&=&  \Lcal_{\Scal}(f) 
- \frac{1}{2^{dj/2}}\sum_{i=1}^n \sum_{\bfk \in \Ical} \frac{\Phi_{-j,\bfk}(Y_i)}{K(Y_i,Y_i)}  f(\bfx_\bfk)\omega(\bfx_\bfk)
+ \frac{1}{2^{dj}}\sum_{\bfk \in \Ical} f(\bfx_\bfk)\omega(\bfx_\bfk)\\
&=& \L_{\Scal}\Big (\frac{f\omega}{K}\Big ) - \sum_{i=1}^n \frac{Q(f\omega)(Y_i)}{K(Y_i,Y_i)} + \int Q(f\omega)(\bfx) \d \bfx \\
&=& \L_{\Scal}\Big (\frac{f\omega}{K}\Big ) - \L_{\Scal}\Big (\frac{Q(f\omega)}{K}\Big ) + \int Q(f\omega)(\bfx) \d \bfx \\
&=& \L_{\Scal}\Big (\frac{f\omega -Q(f\omega)}{K}\Big ) + \int Q(f\omega)(\bfx) \d \bfx .
   \end{eqnarray*}
   Taking expectation gives
   \[ \E[ \widetilde{\Lcal}_{\Scal}(f) ] = \int \frac{f(\bfx)\omega(\bfx) - Q(f\omega)(\bfx)}{K(\bfx,\bfx)} K(\bfx,\bfx)\d\bfx + \int Q(f\omega)(\bfx)\d\bfx = \int f(\bfx)\omega (\bfx) \d \bfx.\]
   This shows $\widetilde{\Lcal}_{\Scal}(f)$ is an unbiased estimator of $\int f(\bfx)\omega (\bfx) \d \bfx$. 

   Finally, notice that if $f\in C^{0,s}$ and $\omega\in C^1$, then $f\omega \in C^{0,s}$ with
     \[ |f\omega|^2_{C^{0,s}} \le C \Big (|f|_{C^{0,s}} ^2 + \|f\|^2_{\infty} \Big )\]
    for some constant $C>0$ depending only on $d$ and $\omega$.
   Moreover, we have $\supp(f\omega) \subset \supp(\omega) \Subset (0,1)^d$. Applying the claim \eqref{eq:var-Qf} for the function $f\omega$ yields the desired result.
\end{proof}

\subsection{Proof of Theorem \ref{thm:discrete-wavelet-coreset}}

\begin{proof}[Proof of Theorem \ref{thm:discrete-wavelet-coreset}]
Notice that 
\[\E[L_{\Scal_{\disc}}(f)|\Xcal ] = \sum_{\bfx\in \Xcal} \frac{f(\bfx)}{\P( \bfx\in \Scal_{\disc}|\Xcal)}\P( \bfx\in \Scal_{\disc}|\Xcal) = \sum_{\bfx\in \Xcal}f(\bfx) = L(f),\] 
which implies $\E[|L_{\Scal_{\disc}}(f) - L(f)|^2|\Xcal] = \var[L_{\Scal_{\disc}}(f)|\Xcal]$. 
We define 
\[ \widetilde{L}_{\Scal_{\disc}}(f):= \sum_{X_i \in \Scal_{\disc}} \frac{f(X_i)}{\bfL_{ii}} = \sum_{\bfx\in \Scal_{\disc}} \frac{Nf(\bfx)\rho(\bfx)}{K(\bfx,\bfx)} = N\L_{\Scal_{\disc}}\Big(\frac{f\rho}{K}\Big ) \cdot \]
Then, we have
\begin{eqnarray} \label{eq:dwc-1}
    \E \Big [\Big |\frac{L_{\Scal_{\disc}}(f) - L(f)}{N} \Big |^2 \Big |\Xcal\Big ] &=&\var\Big [\frac{1}{N}L_{\Scal_{\disc}}(f) \Big |\Xcal \Big ] \notag\\
    &\le& 2 \var\Big [\frac{1}{N}\widetilde{L}_{\Scal_{\disc}}(f) \Big |\Xcal \Big] + 2 \var\Big[\frac{L_{\Scal_{\disc}}(f) - \widetilde{L}_{\Scal_{\disc}}(f) }{N} \Big | \Xcal \Big ] \notag \\
    &=& 2 \var\Big [\L_{\Scal_{\disc}}\Big(\frac{f\rho}{K}\Big ) \Big |\Xcal \Big] + 2 \var\Big[\frac{L_{\Scal_{\disc}}(f) - \widetilde{L}_{\Scal_{\disc}}(f) }{N} \Big | \Xcal \Big ].
\end{eqnarray}

We note that $K(\bfx,\bfy)$ is built by the Haar scaling function, which yields $K(\bfx,\bfx) = n$ for a.e. $\bfx\in (0,1)^d$. This implies 
\[ \sup_{\bfx\in \supp(\rho)} \frac{K(\bfx,\bfx)}{\rho(\bfx)} \le \frac{n}{\rho_{\min}}\cdot\]
Thus we can choose $K_{\max} = n/\rho_{\min}$.
By Theorem \ref{thm:disc-cont-true}, there is an event $\mathcal{A}$ with $\P(\mathcal{A}) \ge 1-\delta-\delta'$ such that on $\mathcal{A}$ we have the following inequality
\[ \var\Big [\L_{\Scal_{\disc}}\Big(\frac{f\rho}{K}\Big ) \Big |\Xcal \Big] \le  2\var\Big [\L_{\Scal}\Big(\frac{f\rho}{K}\Big ) \Big] + 4\Big \|\frac{f\rho}{K} \Big\|^2_{\infty}  \frac{n^2}{\rho_{\min}^2}\mathscr{E}(n,N,\delta,\delta') \]
where $\mathscr{E}(n,N,\delta,\delta')$ is defined as \eqref{eq:def-Error}. Note that $\|f\rho/K\|_{\infty} = \|f\rho\|_{\infty}/n \le \|f\|_{\infty} \|\rho\|_{\infty}/n$. Thus
\begin{eqnarray} \label{eq:dwc-2}
    \var\Big [\L_{\Scal_{\disc}}\Big(\frac{f\rho}{K}\Big ) \Big |\Xcal \Big] \le  2\var\Big [\L_{\Scal}\Big(\frac{f\rho}{K}\Big ) \Big] + 4  \Big (\frac{\|f\|_{\infty}\|\rho\|_{\infty}}{\rho_{\min}}\Big )^2\mathscr{E}(n,N,\delta,\delta'). 
\end{eqnarray}
 Now observe that if $f\in C^{0,s}$ and $\rho\in C^1$, we have $f\rho \in C^{0,s}$. By Theorem \ref{thm:wavelet-dpp}, there exists $C>0$ depending only on $d$ and $\phi$ such that
\begin{equation} \label{eq:dwc-3}
     \var\Big [\L_{\Scal}\Big(\frac{f\rho}{K}\Big ) \Big] \le C |f\rho|^2_{C^{0,s}} n^{-1-2s/d}.
\end{equation}
By \eqref{eq:dwc-2} and \eqref{eq:dwc-3}, we deduce that on the event $\mathcal{A}$
\begin{equation} \label{eq:dwc-4}
     \var\Big [\L_{\Scal_{\disc}}\Big(\frac{f\rho}{K}\Big ) \Big |\Xcal \Big] \le 2C |f\rho|^2_{C^{0,s}} 
     n^{-1-2s/d} + 4  \Big (\frac{\|f\|_{\infty}\|\rho\|_{\infty}}{\rho_{\min}}\Big )^2\mathscr{E}(n,N,\delta,\delta') .
\end{equation}

\medskip

On the other hand, we recall that by Lemma \ref{lm:n-dpp}, for each realization of the data set $\Xcal$, we have $\Scal_{\disc}$ is a discrete DPP on $\Xcal$ defined by some projection matrix $\bfK$.
Then $\P(X_i\in \Scal_{\disc}|\Xcal) = \bfK_{ii}$ and thus
\[L_{\Scal_{\disc}}(f) = \sum_{X_i \in \Scal_{\disc}} \frac{f(X_i)}{\bfK_{ii}} \cdot\]
Hence
\begin{eqnarray*}
    \var\Big[\frac{L_{\Scal_{\disc}}(f) - \widetilde{L}_{\Scal_{\disc}}(f) }{N} \Big | \Xcal \Big ] &=& \frac{1}{N^2} \var \Big [\sum_{X_i \in \Scal_{\disc}} f(X_i) \Big (\frac{\bfL_{ii}-\bfK_{ii}}{\bfK_{ii}\bfL_{ii}}\Big ) \Big | \Xcal\Big ] \\
    &\le& \frac{1}{N^2} \sum_{i=1}^N f(X_i)^2  \Big (\frac{\bfL_{ii}-\bfK_{ii}}{\bfK_{ii}\bfL_{ii}}\Big )^2 \bfK_{ii}.
\end{eqnarray*}
By Lemma \ref{lm:n-dpp}, if we write $\lambda_1(\bfL) \ge \lambda_2(\bfL) \ge \ldots \ge \lambda_N(\bfL)$ for the eigenvalues of $\bfL$, then 
\[ |\bfL_{ii} - \bfK_{ii}| \le \Big (\max_{1\le j \le n} |\lambda_j(\bfL)-1| \Big ) \bfK_{ii}, \quad \forall i.\]
Thus 
\begin{eqnarray*}
 \var\Big[\frac{L_{\Scal_{\disc}}(f) - \widetilde{L}_{\Scal_{\disc}}(f) }{N} \Big | \Xcal \Big ] \le \frac{\|f\|^2_{\infty}}{N^2}\Big (\max_{1\le j \le n} |\lambda_j(\bfL)-1|^2 \Big ) \sum_{i=1}^N \frac{\bfK_{ii}}{\bfL_{ii}^2}   \cdot
\end{eqnarray*}
We note that 
\[ \bfL_{ii} = \frac{K(X_i,X_i)}{N\rho(X_i)}=\frac{n}{N \rho(X_i)} \quad\text{almost surely}\]
since $K$ is built by the Haar scaling function. 
By Lemma \ref{lm:L-structure}, we have 
\[\max_{1\le j\le n}|\lambda_j(\bfL)-1|^2 = \|N^{-1}\Phi^\top \bfD(\rho^{-1})\Phi - \bfI_n\|^2_{\op} \le  
\|N^{-1}\Phi^\top \bfD(\rho^{-1})\Phi - \bfI_n\|^2_{\Frob}. \]
Note that on $\mathcal{A}$ we have 
\[\|N^{-1}\Phi^\top \bfD(\rho^{-1})\Phi - \bfI_n\|^2_{\Frob} \le  \frac{n^2}{\rho_{\min}^2} \Big ( \frac{4}{9N^2} \log^2 \Big (\frac{n^2+1}{\delta}\Big )+ \frac{4}{N} \log \Big (\frac{n^2+1}{\delta}\Big )\Big ).\]
Combining all ingredients with $\sum_{i=1}^N \bfK_{ii} \le n$, we deduce that on $\mathcal{A}$
\begin{equation} \label{eq:dwc-5}
    \var\Big[\frac{L_{\Scal_{\disc}}(f) - \widetilde{L}_{\Scal_{\disc}}(f) }{N} \Big | \Xcal \Big ] 
\le 4 \Big (\frac{\|f\|_{\infty}\|\rho\|_{\infty}}{\rho_{\min}}\Big )^2 \Big ( \frac{n}{9N^2} \log^2 \Big (\frac{n^2+1}{\delta}\Big )+ \frac{n}{N} \log \Big (\frac{n^2+1}{\delta}\Big )\Big ).
\end{equation}
Combining \eqref{eq:dwc-1}, \eqref{eq:dwc-4} and \eqref{eq:dwc-5}, on the event $\mathcal{A}$, we have
\begin{eqnarray*}
    \var\Big [\frac{1}{N}L_{\Scal_{\disc}}(f) \Big |\Xcal \Big ] 
    &\le& 4 C |f\rho|^2_{C^{0,s}} n^{-1-2s/d} + 8 \Big (\frac{\|f\|_{\infty}\|\rho\|_{\infty}}{\rho_{\min}}\Big )^2\mathscr{E}(n,N,\delta,\delta')\\
    &+& 8 \Big (\frac{\|f\|_{\infty}\|\rho\|_{\infty}}{\rho_{\min}}\Big )^2 \Big ( \frac{n}{9N^2} \log^2 \Big (\frac{n^2+1}{\delta}\Big )+ \frac{n}{N} \log \Big (\frac{n^2+1}{\delta}\Big )\Big )\\
    &=& 4 C |f\rho|^2_{C^{0,s}} n^{-1-2s/d} + 8 \Big (\frac{\|f\|_{\infty}\|\rho\|_{\infty}}{\rho_{\min}}\Big )^2\widetilde{\mathscr{E}}(n,N,\delta,\delta')
\end{eqnarray*}
where
\begin{eqnarray*}
    \widetilde{\mathscr{E}}(n,N,\delta,\delta') &:=& \mathscr{E}(n,N,\delta,\delta') + \frac{n}{9N^2} \log^2 \Big (\frac{n^2+1}{\delta}\Big )+ \frac{n}{N} \log \Big (\frac{n^2+1}{\delta}\Big ) \\
    &=& 4\sqrt{\frac{2\log (2/\delta')}{N}} + \frac{n+4}{9N^2} \log^2 \Big (\frac{n^2+1}{\delta}\Big )+ \frac{n+4}{N} \log \Big (\frac{n^2+1}{\delta}\Big )  \cdot
\end{eqnarray*}
Therefore, for $N$ large enough such that $ \widetilde{\mathscr{E}}(n,N,\delta,\delta') \le n^{-1-2s/d}$, with probability at least $1-\delta-\delta'$ we have
\[\E \Big [\Big |\frac{L_{\Scal_{\disc}}(f) - L(f)}{N} \Big |^2 \Big |\Xcal\Big ] =\var\Big [\frac{1}{N}L_{\Scal_{\disc}}(f) \Big |\Xcal \Big ] \le C n^{-1-2s/d}\]
for some $C$ depends only on $d$ and $\rho$ as desired. Here we used the fact that $\|f\|_{\infty} \le 1$ and $|f|_{C^{0,s}} \le 1$ to decouple the dependence on $f$.
\end{proof}

\subsection{Coreset guarantees for discrete wavelet DPPs} \label{app:coreset-guarantee-wavelet}

\begin{theorem}[Discrete wavelet DPPs as coresets] \label{thm:discrete-wavelet-coreset-guarantee}
  Let $\Fcal$ be a class of test functions such that 
 \begin{equation} \label{eq:F1}
  \|f\|_{\infty} \le 1, |f|_{C^{0,\alpha}([0,1]^d)} \le 1 \text{ and } |L(f)| \ge cN \text{ for some } c>0.    \tag{F1}
 \end{equation} 
  We assume either
 \begin{equation} \label{eq:F2}
 \dim \span_{\R}(\Fcal) = \d_{\Fcal} <\infty, \tag{F2} \end{equation}
 or there exists a bounded subset $\Theta \subset \R^{\d_{\Fcal}}$ such that
 \begin{equation} \label{eq:F3}
     \Fcal=\{f_{\theta}:\theta\in \Theta\} \text{ and } \|f_{\theta}-f_{\theta'}\|_{\infty} \le \ell \|\theta-\theta'\|,\forall \theta,\theta'\in \Theta. \tag{F3}
 \end{equation}
 Then there exist $C_0,C_1>0$ depending only on $c$, $d$, and $\rho$ such that for all $\delta, \delta' \in (0,1)$, with probability at least $1-\delta-\delta'$ in the data set $\Xcal$, we have for $0\le \varepsilon\le C_0 n^{-2\alpha/d}$:
 \begin{itemize}
     \item[(i)]  under \eqref{eq:F1} and \eqref{eq:F2}:
   \[ \P\Big (\exists f\in \Fcal: \Big |\frac{L_{\Scal_{\disc}}(f)}{L(f)} -1 
   \Big | \ge \varepsilon  \Big | \Xcal\Big ) \le 2 \exp \Big (6\d_{\Fcal} - C_1 n^{1+2\alpha/d} \varepsilon^2\Big ),\]
   \item[(ii)] under \eqref{eq:F1} and \eqref{eq:F3}:
   \[ \P\Big (\exists f\in \Fcal: \Big |\frac{L_{\Scal_{\disc}}(f)}{L(f)} -1 
   \Big | \ge \varepsilon  \Big | \Xcal\Big ) \le 2 \exp \Big ((C_2 - \log \varepsilon)\d_{\Fcal} - C_1 n^{1+2\alpha/d} \varepsilon^2\Big )\]
   for some $C_2>0$ depends only on $\Theta,\ell,\rho$ and $c$,
 \end{itemize}
 provided that $\widetilde{\mathscr{E}}(n,N,\delta,\delta') \le n^{-1-2\alpha/d}$ and $n^{2\alpha/d}\rho_{\min}^2 \ge 8$.
\end{theorem}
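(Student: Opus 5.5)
We combine the conditional MSE bound of Theorem~\ref{thm:discrete-wavelet-coreset} with the concentration machinery for linear statistics of discrete projection DPPs from \cite{coreset-dpp}, and finish with a union bound over a net of $\Fcal$.

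\textbf{Step 1 (good event).} We fix the event $\mathcal{A}$ from the proof of Theorem~\ref{thm:discrete-wavelet-coreset}, which has probability at least $1-\delta-\delta'$. On $\mathcal{A}$ and under $\widetilde{\mathscr{E}}\le n^{-1-2\alpha/d}$, every $f$ satisfying \eqref{eq:F1} obeys
\[
\var[L_{\Scal_{\disc}}(f)/N\,|\,\Xcal]\le C n^{-1-2\alpha/d}.
\]
On this event we also control the weights $1/\bfK_{ii}$. Lemma~\ref{lm:n-dpp}(ii) gives $\bfK_{ii}\ge \bfL_{ii}/(1+\max_j|\lambda_j-1|)$, and $\bfL_{ii}=n/(N\rho(X_i))\ge n/(N\|\rho\|_\infty)$. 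The Bernstein bound of Lemma~\ref{lm:bernstein} gives $\max_j|\lambda_j-1|\le 1$ once $\widetilde{\mathscr{E}}$ is small. Hence each summand $f(X_i)/(N\bfK_{ii})$ is bounded by $C/n$.

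\textbf{Step 2 (concentration for a single $f$).} Conditionally on $\Xcal$, $\Scal_{\disc}$ is a projection DPP (Lemma~\ref{lm:n-dpp}), so $L_{\Scal_{\disc}}(f)/N$ is a linear statistic with summands bounded by $M=C/n$. We apply the Bernstein-type inequality for DPP linear statistics used in \cite{coreset-dpp}, which rests on negative association and strong Rayleigh properties:
\[
\P\big(|\Lambda-\E\Lambda|\ge t\big)\le 2\exp\!\Big(-\frac{t^2}{c(\sigma^2+Mt)}\Big).
\]
Here we take $t=\varepsilon|L(f)|/N\ge c\varepsilon$ and $\sigma^2\le Cn^{-1-2\alpha/d}$. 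When $\varepsilon\le C_0n^{-2\alpha/d}$, the term $Mt\lesssim \varepsilon/n$ is dominated by $\sigma^2$. The exponent is then $\gtrsim n^{1+2\alpha/d}\varepsilon^2$, which explains the constraint on $\varepsilon$. The condition $n^{2\alpha/d}\rho_{\min}^2\ge 8$ is what lets us absorb constants, including the variance-lower-bound issues, in this comparison.

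\textbf{Step 3 (uniformity).} Under \eqref{eq:F2}, the relative error is homogeneous in $f$, and $f\mapsto L_{\Scal_{\disc}}(f)-L(f)$ is linear. We take a $1/2$-net, of size at most $e^{6\d_{\Fcal}}$, of the unit sphere of $\span\Fcal$ in a suitable norm, for instance the empirical sup-norm on $\Xcal$. A standard chaining/net argument transfers the bound from the net to the whole sphere at the cost of a factor $2$ in $\varepsilon$. This gives (i).

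Under \eqref{eq:F3}, we take an $\varepsilon$-net of $\Theta$ of cardinality $(C/\varepsilon)^{\d_{\Fcal}}$. The Lipschitz property together with the weight bound $1/(N\bfK_{ii})\lesssim 1/n$ (summed over the $n$ points) gives $|L_{\Scal_{\disc}}(f_\theta)-L_{\Scal_{\disc}}(f_{\theta'})|/N\lesssim \ell\|\theta-\theta'\|$ deterministically, and similarly for $L$. Dividing by $|L(f)|\ge cN$ turns this into a relative error of order $\varepsilon$. This yields the $(C_2-\log\varepsilon)\d_{\Fcal}$ term in (ii).

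\textbf{Main obstacle.} The delicate step is Step 2: obtaining an exponential inequality whose variance proxy is the actual conditional variance $Cn^{-1-2\alpha/d}$, rather than a crude bound, for a discrete projection DPP with nonuniform weights. Our first approach is to invoke the concentration result of \cite{coreset-dpp} directly, after checking that its hypotheses (a bounded summand range and a variance bound) hold on $\mathcal{A}$ uniformly over $f\in\Fcal$.
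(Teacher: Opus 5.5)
Your route is the paper's. On the data event $\mathcal{A}$ you take $V=Cn^{-1-2\alpha/d}$, lower-bound $\bfK_{ii}\ge \eta n/N$, and let Theorem~\ref{thm:coreset-dpp} do the rest: its range condition $\varepsilon\le \frac{4A\eta}{3c}nV$ is exactly $\varepsilon\le C_0n^{-2\alpha/d}$, and its exponent $c^2\varepsilon^2/(16AV)$ is $C_1n^{1+2\alpha/d}\varepsilon^2$.

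The gap is in verifying the hypotheses of that theorem, because you misread the role of $n^{2\alpha/d}\rho_{\min}^2\ge 8$. No variance lower bound enters anywhere; the comparison $Mt\lesssim\sigma^2$ in your Step 2 is settled by the constraint on $\varepsilon$ alone. On $\mathcal{A}$, Lemma~\ref{lm:bernstein} with $K_{\max}=n/\rho_{\min}$ gives $\|N^{-1}\mathbf{\Psi}^{\top}\bfD(\rho^{-1})\mathbf{\Psi}-\bfI_n\|_{\Frob}^2\le 4n\widetilde{\mathscr{E}}/\rho_{\min}^2\le 4n^{-2\alpha/d}/\rho_{\min}^2$. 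The hypothesis $\widetilde{\mathscr{E}}\le n^{-1-2\alpha/d}$ alone does not push this below $1$; it is $n^{2\alpha/d}\rho_{\min}^2\ge 8$ that makes it at most $1/2$. This strict bound gives $\bfL_{ii}\le 2\bfK_{ii}$, hence $\eta=1/(2\|\rho\|_\infty)$. More importantly, it puts every eigenvalue of $N^{-1}\mathbf{\Psi}^{\top}\bfD(\rho^{-1})\mathbf{\Psi}$ in $[1-2^{-1/2},1+2^{-1/2}]$, so $\rank\mathbf{\Psi}=n$ and $|\Scal_{\disc}|=n$. That is the hypothesis $\E|\Scal|=n$ of Theorem~\ref{thm:coreset-dpp}, and it gives $|\Scal|\le Bn$ with $B=1$ for (ii). Your bound $\max_j|\lambda_j-1|\le 1$ does not exclude a zero eigenvalue, so you never establish it.

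Your Step 3 under \eqref{eq:F2} would also fail as written. A net of the unit sphere of $\span\Fcal$ in the sup-norm you suggest contains functions $g$ with neither $|L(g)|\ge cN$ nor $|g|_{C^{0,\alpha}}\le 1$. Without the first, relative errors at net points are uncontrolled, since $L$ may vanish on the span. Without the second, the variance bound $Cn^{-1-2\alpha/d}$ with $C$ depending only on $d,\rho$ is not available for them.

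Work with the absolute error instead. Since $|L(f)|\ge cN$ on $\Fcal$, it suffices to bound $\P(\sup_{f\in\Fcal}|L_{\Scal_{\disc}}(f)-L(f)|/N\ge c\varepsilon\,|\,\Xcal)$. This supremum can be taken over $\mathrm{conv}(\pm\Fcal)$. There the bounds $\|\cdot\|_\infty\le 1$ and $\var[L_{\Scal_{\disc}}(\cdot)/N\,|\,\Xcal]\le V$ persist, because $\|\cdot\|_\infty$ and the square root of this variance are seminorms. A $1/2$-net of that body in its own gauge then costs a factor $2$ and at most $5^{\d_{\Fcal}}$ points. Your \eqref{eq:F3} argument is fine. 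If you simply invoke Theorem~\ref{thm:coreset-dpp} as a black box, as the paper does, Steps 2--3 are unnecessary.

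Two smaller remarks.
\begin{itemize}
\item A Bernstein bound with the true variance as proxy is a Breuer--Duits-type estimate that uses the Hermitian structure of $\bfK$. Strong Rayleigh concentration as in Pemantle--Peres only yields a proxy of order $n\cdot n^{-2}$, which is exactly the crude bound you want to avoid.
\item You are right that uniformity of $\mathcal{A}$ over $\Fcal$ needs checking. The McDiarmid event in Theorem~\ref{thm:disc-cont-true} is for a fixed $f$, and the paper glosses over this too. For Haar it is easily repaired: the quantity $F$ of \eqref{eq:def-F} for the test function $f\rho/K$ is at most $|f\rho|^2_{C^{0,\alpha}}\,d^{\alpha}n^{-2\alpha/d}\,G$. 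Here $G=N^{-2}\sum_{i,j}\mathbf{1}[X_i,X_j\text{ in the same dyadic cube}]/(\rho(X_i)\rho(X_j))$ does not depend on $f$, so one concentration event for $G$ covers all of $\Fcal$.
\end{itemize}
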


\begin{proof}[Proof of Theorem \ref{thm:discrete-wavelet-coreset-guarantee}]
    Let us fix $\delta,\delta' \ge 0$ such that $\delta+\delta' <1$. 
    Let $N$ be large enough such that $\widetilde{\mathscr{E}}(n,N,\delta,\delta') \le n^{-1-2\alpha/d}$. By Theorem \ref{thm:discrete-wavelet-coreset}, there exists an event $\mathcal{A}$ (measurable w.r.t. the data set $\Xcal$) with probability at least $1-\delta-\delta'$, we have 
    \[ \var\Big [\frac{L_{\Scal_{\disc}}(f)}{N} \Big |\Xcal \Big ] \le C n^{-1-2\alpha/d}, \quad \forall \: f\in C^{0,\alpha}([0,1]^d): \|f\|_{\infty}\le 1,|f|_{C^{0,\alpha}([0,1]^d)}\le 1.\]
    Following the proof of Theorem \ref{thm:discrete-wavelet-coreset}, on the same event $\mathcal{A}$ we have 
    \begin{eqnarray*}
    \|N^{-1}\Phi^\top \bfD(\rho^{-1})\Phi - \bfI_n\|^2_{\Frob} &\le&  \frac{n^2}{\rho_{\min}^2} \Big ( \frac{4}{9N^2} \log^2 \Big (\frac{n^2+1}{\delta}\Big )+ \frac{4}{N} \log \Big (\frac{n^2+1}{\delta}\Big )\Big ) \\
    &\le& \frac{4n}{\rho_{\min}^2} \widetilde{\mathscr{E}}(n,N,\delta,\delta') \le \frac{4n^{-2\alpha/d}}{\rho_{\min}^2} \le \frac{1}{2}
    \end{eqnarray*}
    since $n^{2\alpha/d} \rho_{\min}^2 \ge 8$ by our assumption. This implies that $\rank(\Phi) = n$, and thus $|\Scal_{\disc}| =  n$ on this event $\mathcal{A}$. 
    
    Moreover, following the proof of Theorem \ref{thm:discrete-wavelet-coreset}, we have for every $i \in \{1,2,\ldots,N\}$
    \[\bfK_{ii} \ge \bfL_{ii} - |\bfK_{ii}-\bfL_{ii}| \ge \bfL_{ii} - \|N^{-1}\Phi^\top \bfD(\rho^{-1})\Phi - \bfI_n\|_{\Frob} \bfK_{ii} .\]
    Thus, on event $\mathcal{A}$ we have $\bfL_{ii} \le 2 \bfK_{ii}$ for every $i$. Now note that 
    \[\bfL_{ii} = \frac{n}{N \rho(X_i)} \ge \|\rho\|^{-1}_{\infty} \frac{n}{N} \quad \text{ almost surely.}\]
    Therefore, on this event $\mathcal{A}$, we have 
    \[ \bfK_{ii} \ge \frac{1}{2\|\rho\|_{\infty}} \frac{n}{N} \cdot\]
    Applying Theorem \ref{thm:coreset-dpp} with $V:= C n^{-1-2\alpha/d}$ completes the proof.
\end{proof}

\begin{remark}
    The parameter $\varepsilon$ should be interpreted as the relative error tolerance of the coreset approximation: smaller values of $\varepsilon$ correspond to higher accuracy. 
    For example, for $d\ge 4$, we have $-1/2 \le -2\alpha/d $; hence we can choose $\varepsilon=O(n^{-1/2-\alpha'/d})$ for any $\alpha'<\alpha$. Theorem \ref{thm:discrete-wavelet-coreset-guarantee} then says that the discrete wavelet DPP construction provides a high-probability coreset guarantee with accuracy rate of order $n^{-1/2-\alpha'/d}$, for any $\alpha'<\alpha$. Thus, when $\alpha =1$, we can achieve the accuracy rate close to $n^{-1/2-1/d}$, which is clearly better than the rate $n^{-1/2}$ of independent sampling, and the rate $n^{-1/2-1/(2d)}$ of discrete OPE-based DPPs (see Remark 6.1 in \cite{coreset-dpp}).
\end{remark}

For reader's convenience, we cite here Theorem 4 in \cite{coreset-dpp}, after harmonizing notations.
\begin{theorem}[Theorem 4 in \cite{coreset-dpp}] \label{thm:coreset-dpp}
Let $\Scal$ be a DPP with a Hermitian kernel $\bfK$ on a finite set $\Xcal = \{x_1,\ldots,x_N\}$ with $\E[|\Scal|]=n$. Assuming that there exists $\eta>0$ independent of $n,N$ such that:
\[ \bfK_{ii} \ge \eta\frac{n}{N}, \quad \forall 1\le i \le N.\]
 Let $\Fcal$ be a class of bounded functions on $\Xcal$ such that 
 \begin{equation} \label{eq:C1}
  \|f\|_{\infty} \le 1, |L(f)| \ge cN \text{ for some } c>0 \text{ uniformly on $\Fcal$} \tag{C1}
 \end{equation} 
  and either
 \begin{equation} \label{eq:C2}
 \dim \span_{\R}(\Fcal) = \d_{\Fcal} <\infty, \tag{C2} \end{equation}
 or there exists a bounded subset $\Theta \subset \R^{\d_{\Fcal}}$ such that
 \begin{equation} \label{eq:C3}
     \Fcal=\{f_{\theta}:\theta\in \Theta\} \text{ and } \|f_{\theta}-f_{\theta'}\|_{\infty} \le \ell \|\theta-\theta'\|,\forall \theta,\theta'\in \Theta. \tag{C3}
 \end{equation}
 Let $L_{\Scal}(f):= \sum_{x\in \Scal} f(x)/\P(x\in \Scal)$, and let
 \[V \ge \sup_{f\in \Fcal} \var\Big [\frac{L_{\Scal}(f)}{N}\Big ].\]
 Then there exists a universal constant $A>0$ such that for $0\le \varepsilon\le \frac{4A\eta}{3c}nV$:
 \begin{itemize}
     \item[(i)] under \eqref{eq:C1} and \eqref{eq:C2} we have 
    \[ \P \Big( \exists f \in \Fcal: \Big |\frac{L_{\Scal}(f)}{L(f)} -1 \Big |\ge \varepsilon \Big ) \le 2 \exp \Big (6\d_{\Fcal} - \frac{c^2\varepsilon^2}{16AV} \Big ),\]
    \item[(ii)] under \eqref{eq:C1} and \eqref{eq:C3}, if $|\Scal|\le Bn$, we have 
    \[ \P \Big( \exists f \in \Fcal: \Big |\frac{L_{\Scal}(f)}{L(f)} -1 \Big |\ge \varepsilon \Big ) \le 2 \exp \Big (C\d_{\Fcal} - \d_{\Fcal}\log\varepsilon -\frac{c^2\varepsilon^2}{16AV} \Big ),\]
    where $C>0$ depends only on $\Theta, \ell, B, \eta$ and $c$.
 \end{itemize}
 
\end{theorem}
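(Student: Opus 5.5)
The statement is Theorem 4 of \cite{coreset-dpp}, cited for convenience; I sketch how I would reprove it. The plan is the standard route: concentration for a fixed $f$, then a union bound over a net of $\Fcal$. \textbf{Step 1} is a sub-Gaussian tail for a single linear statistic. For fixed $f$, $L_{\Scal}(f)/N = \L_{\Scal}(g)$ with $g(x_i) = f(x_i)/(N\bfK_{ii})$. The assumption $\bfK_{ii}\ge \eta n/N$ gives $\|g\|_\infty \le 1/(\eta n)$. I would invoke a Bernstein-type concentration inequality for linear statistics of DPPs with Hermitian kernels. Such inequalities follow from negative association / the strong Rayleigh property, or from Breuer--Duits type Laplace transform bounds. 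The form I want is
\[
\P\bigl(|\L_{\Scal}(g)-\E \L_{\Scal}(g)|\ge t\bigr)\le 2\exp\Bigl(-\frac{t^2}{4A(\var[\L_{\Scal}(g)] + \|g\|_\infty t)}\Bigr).
\]
Set $t=\varepsilon |L(f)|/N \ge c\varepsilon$ and use $\var \le V$. The restriction $\varepsilon \le \frac{4A\eta}{3c} nV$ is then exactly what makes the linear term $\|g\|_\infty t \lesssim c\varepsilon/(\eta n)$ dominated by $V$. This yields the single-$f$ bound $2\exp(-c^2\varepsilon^2/(16AV))$.

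\textbf{Step 2 (case C2).} The map $f\mapsto L_{\Scal}(f)/L(f)-1$ is not linear because of the normalization. So I would pass to the linear quantity $\Delta(f) = (L_{\Scal}(f)-L(f))/N$, which is linear in $f$ on the $\d_\Fcal$-dimensional span. The bad event is then contained in $\{\exists f\in \Fcal : |\Delta(f)|\ge c\varepsilon\}$. Take a $1/2$-net of the unit ball of the span, in the norm $\|h\|:=\sqrt{\var \text{-proxy}}$ or simply the norm making Step 1 uniform, of cardinality $\le 5^{\d_\Fcal}\le e^{2\d_\Fcal}$. A standard chaining-by-linearity argument ($\sup \le 2\max_{\text{net}}$) absorbs the factor $2$ into the constants. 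The union bound gives the $e^{6\d_\Fcal}$ prefactor, after accounting for the halved threshold, which is where the constant $16A$ comes from.

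\textbf{Step 3 (case C3).} Cover $\Theta$ by an $r$-net with $r\asymp \varepsilon$, of size $(C/\varepsilon)^{\d_\Fcal}$. For $\theta$ within $r$ of a net point $\theta'$, bound
\[
|\Delta(f_\theta)-\Delta(f_{\theta'})|\le \ell r\Bigl(1+\frac1N\sum_{x\in\Scal}\frac{1}{\bfK_{xx}}\Bigr)\le \ell r\bigl(1+B/\eta\bigr),
\]
using $|\Scal|\le Bn$ and the lower bound on $\bfK_{ii}$. With $r = c\varepsilon/(2\ell(1+B/\eta))$, the discretization error is at most $c\varepsilon/2$. The union bound then yields $\exp(C\d_\Fcal-\d_\Fcal\log\varepsilon - c^2\varepsilon^2/(16AV))$.

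The main obstacle is Step 1: obtaining a genuinely variance-sensitive (Bernstein rather than Hoeffding) concentration for DPP linear statistics with explicit universal constant $A$. Hoeffding-type bounds via negative association would only give a $\|g\|_\infty^2 n$ proxy, i.e.\ the i.i.d.\ rate. I would first try the Laplace transform bound $\log\E e^{s(\L(g)-\E\L(g))}\le A s^2\var[\L(g)]$ for $|s|\|g\|_\infty\le 1/2$, valid for Hermitian contraction kernels.
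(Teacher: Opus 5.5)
The paper gives no proof of this result (it is quoted from \cite{coreset-dpp}). Your architecture is the right one: a single-function tail bound for $L_{\Scal}(f)/N=\Lambda_{\Scal}(g)$ with $g(x_i)=f(x_i)/(N\bfK_{ii})$ and $\|g\|_\infty\le 1/(\eta n)$, followed by a union bound over a net at the halved threshold $c\varepsilon/2$. However, Step~1 as written does not produce the stated constants. With a Bernstein tail $2\exp(-t^2/(4A(\var+\|g\|_\infty t)))$ at $t=c\varepsilon/2$, the range condition only gives $\|g\|_\infty t\le \frac{2A}{3}V$, so you lose a factor $1+\frac{2A}{3}$ in the exponent. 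You also invoke $16A$ twice: already at threshold $c\varepsilon$ in Step~1, and then again as the effect of halving in Step~2.

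The statement, with the same $A$ in the range and in the exponent, is exactly what the Laplace-transform bound you mention at the end gives, in Breuer--Duits form: $\log\E e^{s(\Lambda(g)-\E\Lambda(g))}\le As^2\var[\Lambda(g)]$ for $|s|\,\|g\|_\infty\le 1/3$. Bound $\var$ by $V$; this is legitimate here, unlike in the tail version, whose range is stated with the true variance. Then take $s=t/(2AV)$ and $t=c\varepsilon/2$. The constraint $t\le 2AV/(3\|g\|_\infty)$ is implied by $\varepsilon\le\frac{4A\eta}{3c}nV$, and the exponent is $c^2\varepsilon^2/(16AV)$. Since Breuer--Duits is stated for projection kernels, you also need a reduction for a general Hermitian $\bfK$ with $0\le\bfK\le\bfI$. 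One option is the dilation to the projection kernel $\begin{pmatrix}\bfK & (\bfK(\bfI-\bfK))^{1/2}\\ (\bfK(\bfI-\bfK))^{1/2} & \bfI-\bfK\end{pmatrix}$ on two copies of $\Xcal$, whose DPP restricted to the first copy is $\mathrm{DPP}(\bfK)$.

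In Step~2, the choice of norm matters. $V$ only controls variances on $\Fcal$, so points of a sup-norm net of the unit ball of $\span_{\R}(\Fcal)$ need not satisfy $\var\le V$. A pure variance seminorm, on the other hand, is degenerate and does not control $\|g\|_\infty$. Take instead $\|h\|_*:=\max\{\|h\|_\infty,(\var[\Delta(h)]/V)^{1/2}\}$, which is a norm on the span since $\Delta$ is linear. Its unit ball contains $\Fcal$, and the single-function bound holds uniformly on it. A $\frac12$-net $\mathcal N$ has at most $5^{\d_{\Fcal}}\le e^{2\d_{\Fcal}}$ points, and $\sup_{\|h\|_*\le 1}|\Delta(h)|\le 2\max_{h\in\mathcal N}|\Delta(h)|$, which even improves the prefactor $e^{6\d_{\Fcal}}$. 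Step~3 is correct, provided the $r$-net is taken inside $\Theta$ so that its points index elements of $\Fcal$, to which $V$ and $\|f\|_\infty\le 1$ apply.
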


\section{Additional numerical experiments}
\label{a:additional_experiments}
\subsection{Integrating rough functions}
We compare the variance of five quadrature methods for integrating functions on the unit interval. These include vanilla iid sampling (\textsc{iid}), the wavelet-based DPPs \textsc{haar} and \textsc{db2} introduced in this paper (the latter using the adjusted linear statistic $\widetilde{\mathcal{L}}$ in \eqref{eq:adjusted-linear-stat}), the Dirichlet DPP \textsc{dir} considered in \cite{CoMaAm20Sub} and the OPE-based DPP \textsc{ope} built from the first $n$ Legendre polynomials.

We consider four test functions of varying regularity: 
\begin{itemize}
    \item $\varphi_\gamma(x) = \frac{1}{d}\sum\limits_{i = 1}^{d}\frac{f_\gamma(x_i)}{\int_{0}^{1} f_{\gamma}}, \quad f_\gamma(t) = \frac{1}{d}\sum_{i = 1}^{d}|t - 1/2|^\gamma$ for $\gamma \in \{0.25, 0.75\}$,
    \item $\varphi_{\text{mixcos}}(x) = \frac{1}{d} \sum\limits_{i = 1}^{d}\frac{f_\text{mc}(x_i)}{\int_{0}^{1} f_{\text{mc}}}, \quad f_{\rm mc}(t) = 0.1\,|\cos(5\pi(t-1/2))| + (t-1/2)^2$,
    \item $\varphi_{\text{bump}}(x) = \prod\limits_{i = 1}^{d}\frac{f_\text{bump}(x_i)}{\int_{0}^{1} f_{\text{bump}}}, \quad f_{\rm bump}(t) = \exp(-0.1/(t(1-t)))$.
\end{itemize}
We note that the function $\varphi_\gamma$ lies in the Hölder space $C^{0,\gamma}([0,1]^d)$ and in the Sobolev space $H^s((0,1)^d)$ for all $s < \gamma + 1/2$. The function $\varphi_{\text{mixcos}}$, even though non-differentiable, is Lipschitz (i.e $C^{0,1}([0,1]^d)$ and belongs to the space $H^s((0,1)^d)$ for all $s < 3/2$. The last function we consider is $\varphi_{\text{bump}}$, which is infinitely smooth. 

For each function, we generate $100$ independent realizations of every estimator and report the empirical variance against the number of samples $n$ on a log-log scale. The results illustrate that all DPP-based estimators improve over the Monte Carlo rate $n^{-1}$. In $d = 1$, \textsc{haar} and \textsc{db2} offer the most significant variance reduction, comfortably outperforming both \textsc{dir} and \textsc{ope}. In $d = 2$, \textsc{haar} still offers the best performance. On the other hand, while \textsc{db2} still outperforms the non-wavelet DPP samplers for $\varphi_{\gamma}(x)$ for both $\gamma \in \{0.25,0.75\}$, this advantage does not persist for the other two functions.

\begin{figure}[!htbp]
    \centering

    \begin{minipage}{0.48\textwidth}
        \centering
        \includegraphics[width=\linewidth]{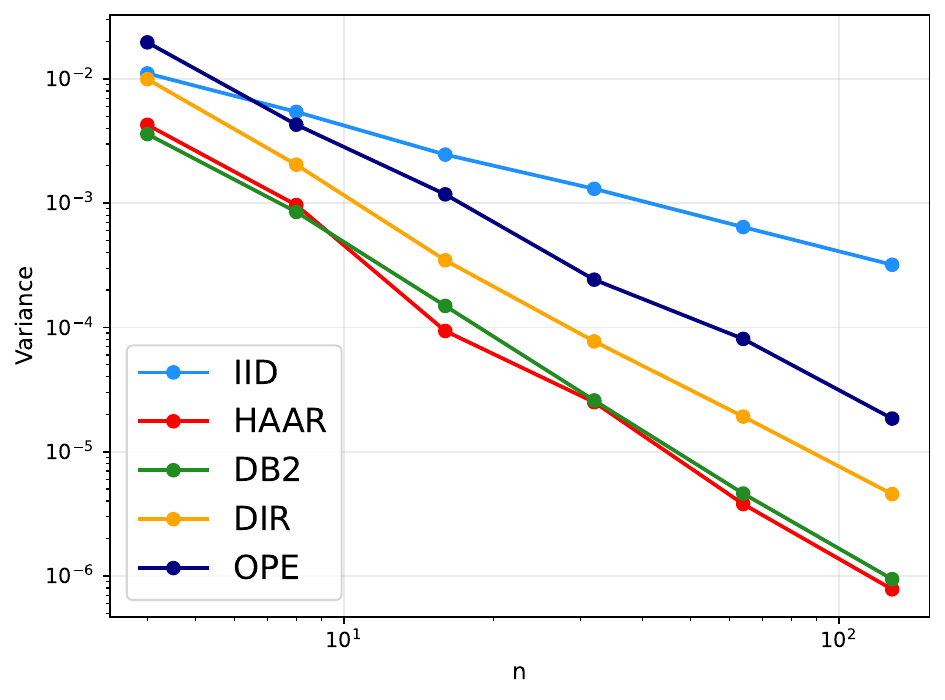}
        \caption{d = 1: $\varphi_{0.25}(x)$}
    \end{minipage}
    \hfill
    \begin{minipage}{0.48\textwidth}
        \centering
        \includegraphics[width=\linewidth]{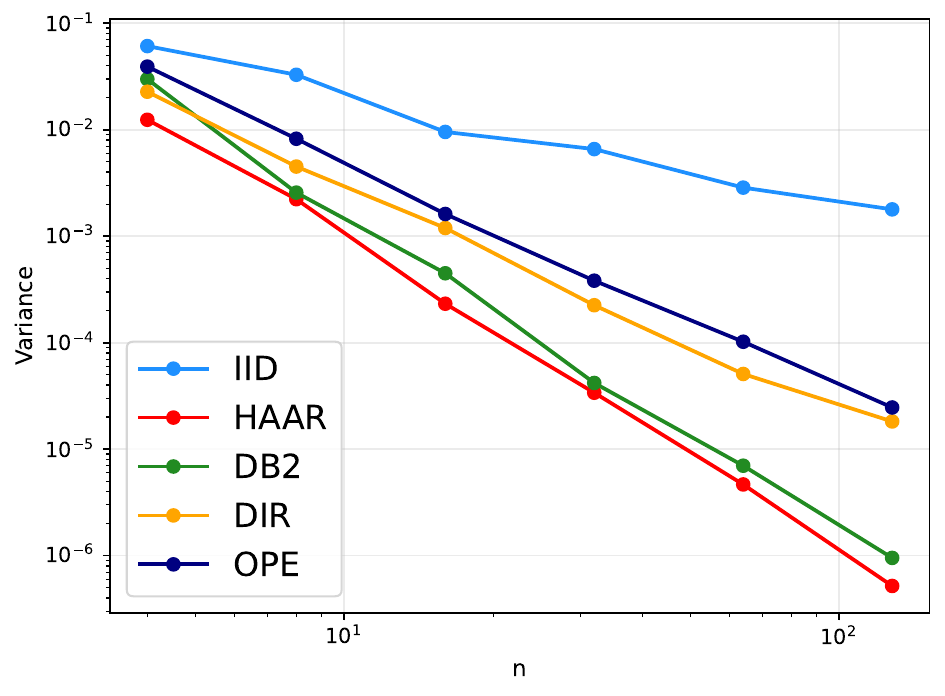}
        \caption{d = 1: $\varphi_{0.75}(x)$}
    \end{minipage}

    \vspace{0.8em}

    \begin{minipage}{0.48\textwidth}
        \centering
        \includegraphics[width=\linewidth]{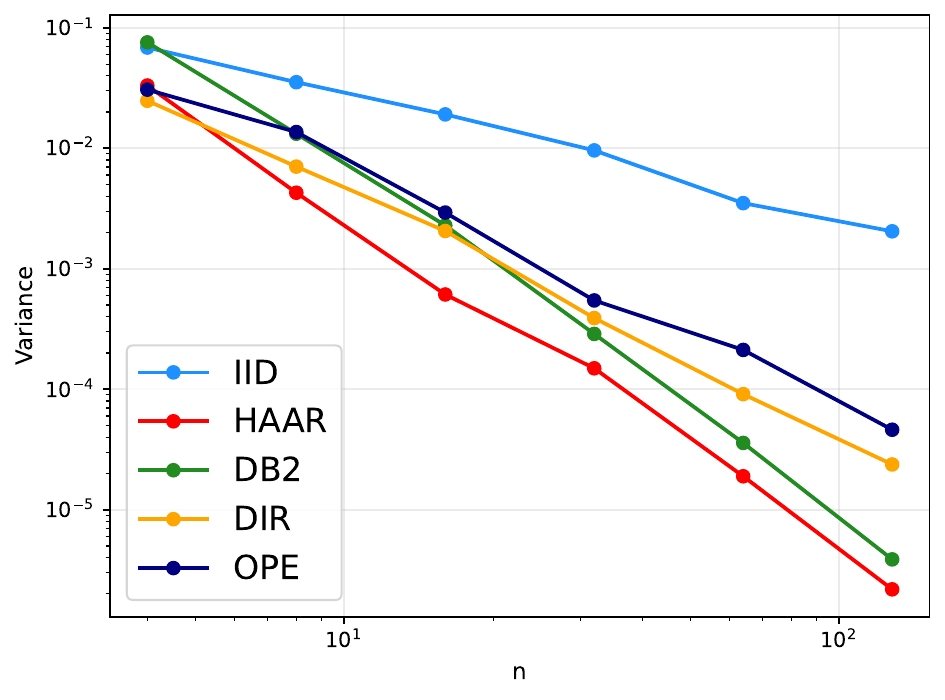}
        \caption{d = 1: $\varphi_{\text{mixcos}}(x)$}
    \end{minipage}
    \hfill
    \begin{minipage}{0.48\textwidth}
        \centering
        \includegraphics[width=\linewidth]{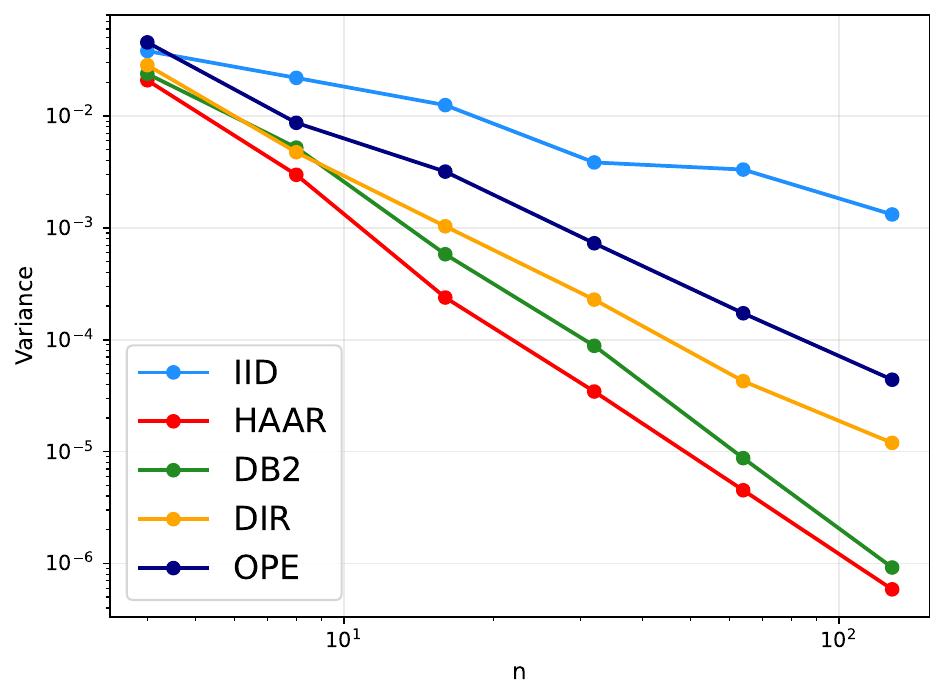}
        \caption{d = 1: $\varphi_{\text{bump}}(x)$}
    \end{minipage}
    \label{fig:variance-decay-four-functions}
\end{figure}

\begin{figure}[!htbp]
    \centering

    \begin{minipage}{0.48\textwidth}
        \centering
        \includegraphics[width=\linewidth]{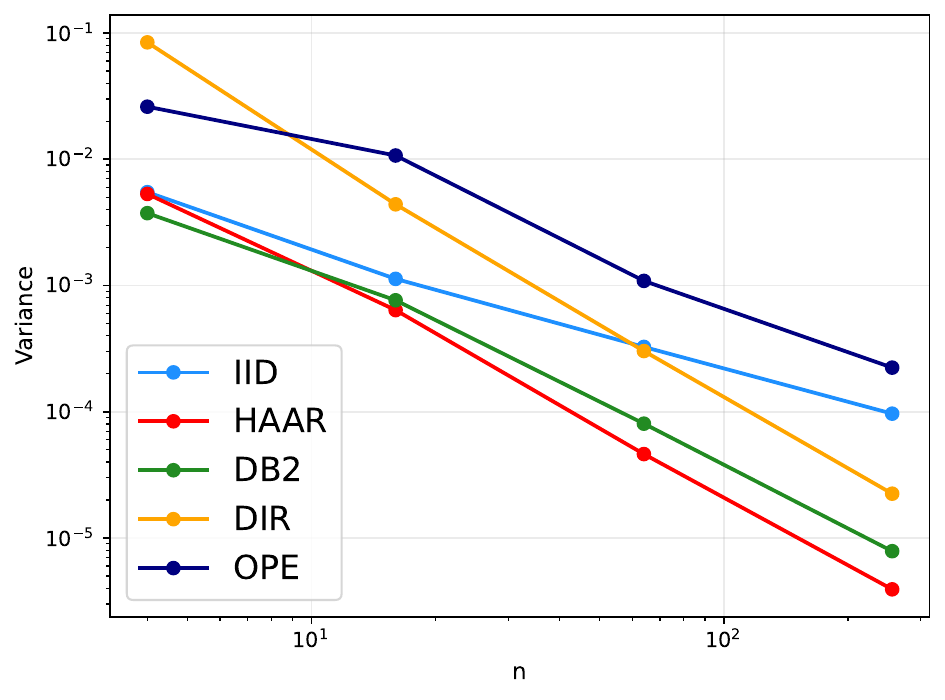}
        \caption{d = 2: $\varphi_{0.25}(x)$}
    \end{minipage}
    \hfill
    \begin{minipage}{0.48\textwidth}
        \centering
        \includegraphics[width=\linewidth]{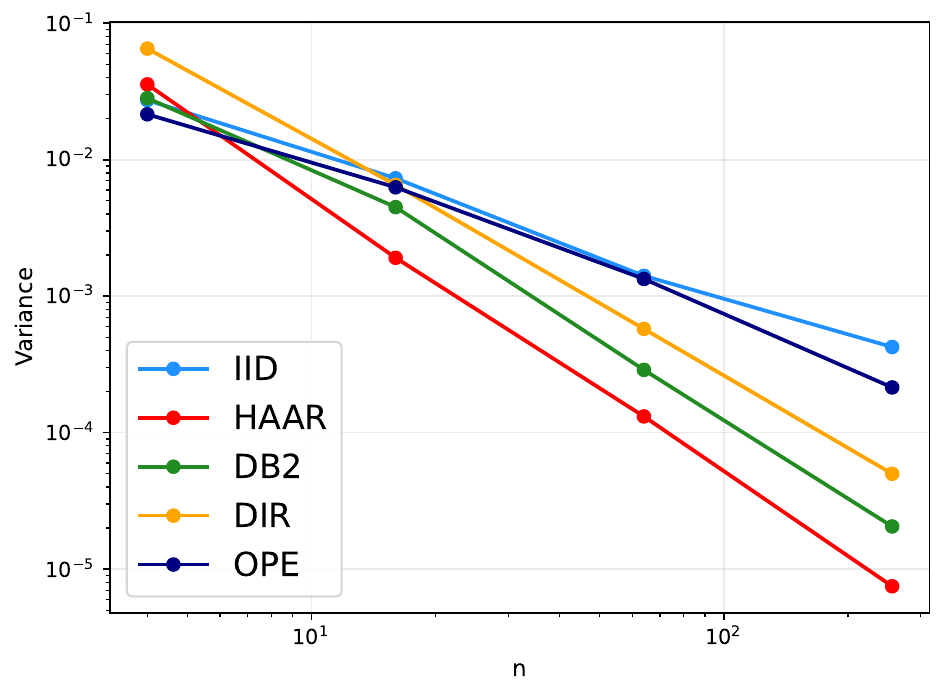}
        \caption{d = 2: $\varphi_{0.75}(x)$}
    \end{minipage}

    \vspace{0.8em}

    \begin{minipage}{0.48\textwidth}
        \centering
        \includegraphics[width=\linewidth]{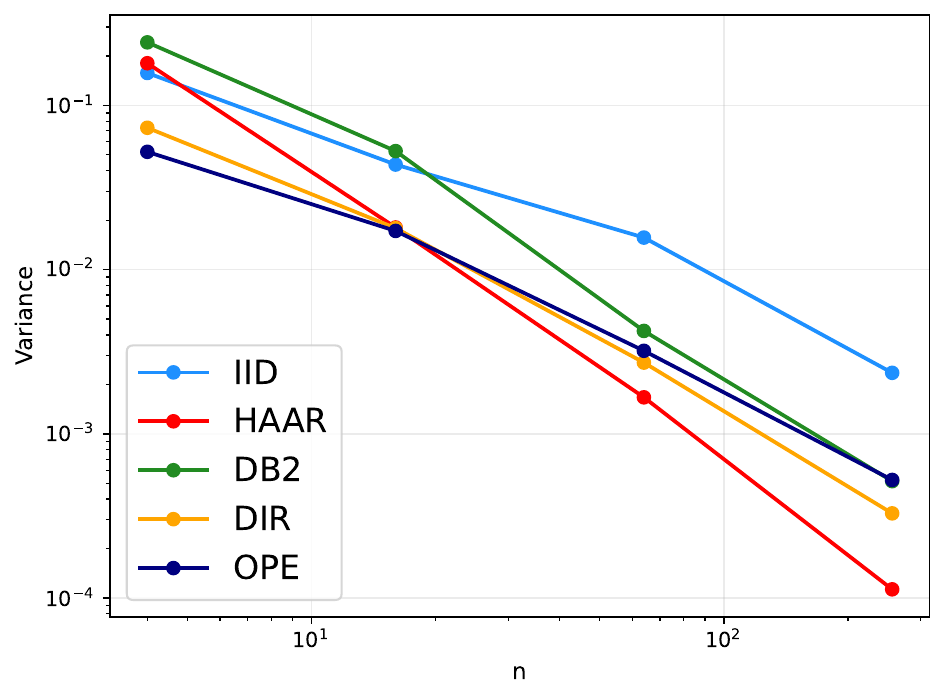}
        \caption{d = 2: $\varphi_{\text{mixcos}}(x)$}
    \end{minipage}
    \hfill
    \begin{minipage}{0.48\textwidth}
        \centering
        \includegraphics[width=\linewidth]{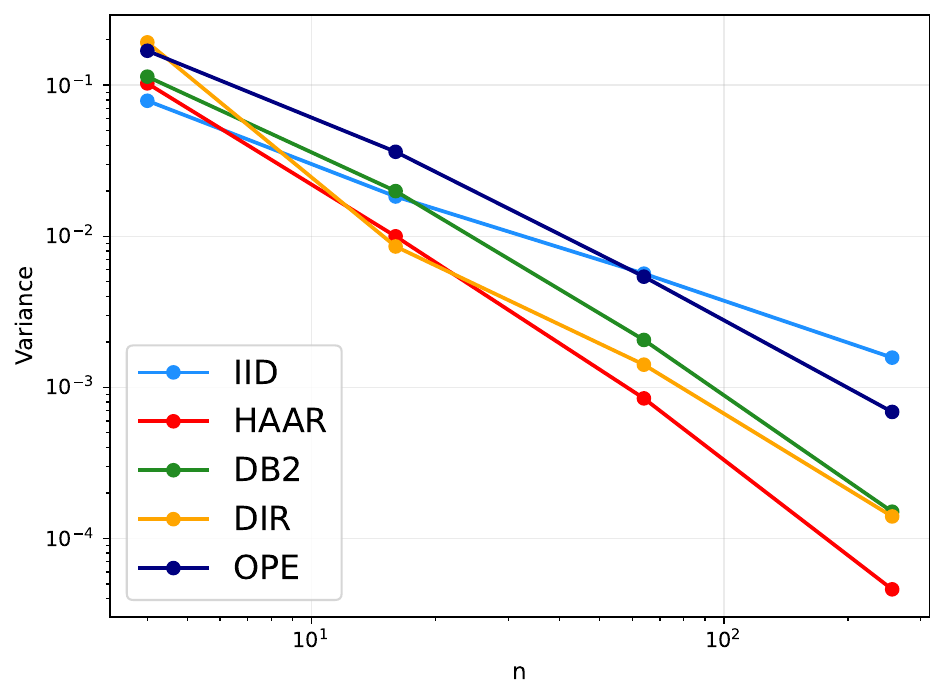}
        \caption{d = 2: $\varphi_{\text{bump}}(x)$}
    \end{minipage}
    \label{fig:variance-decay-four-functions}
\end{figure}

\end{document}